\newtheorem{theorem}{Theorem}[section]
\newtheorem{proposition}[theorem]{Proposition}
\newtheorem{lemma}[theorem]{Lemma}
\newtheorem{definition}[theorem]{Definition}
\newtheorem{assumption}[theorem]{Assumption}
\pgfplotsset{compat=1.18}
\title{Fundamental Limitations of Favorable Privacy–Utility Guarantees for DP-SGD}
\author{%
  Murat Bilgehan Ertan\thanks{Affiliated with Vrije Universiteit Amsterdam.} \\
  CWI Amsterdam \\
  Amsterdam, Netherlands \\
  \texttt{bilgehan.ertan@cwi.nl} \\
  \And
  Marten van Dijk\footnotemark[\value{footnote}] \\
  CWI Amsterdam \\
  Amsterdam, Netherlands \\
  \texttt{marten.van.dijk@cwi.nl} \\
}
\begin{document}
\maketitle

\newcommand{\todo}[1]{\textcolor{red}{{ TODO: #1}}}

\begin{abstract}
Differentially Private Stochastic Gradient Descent (DP-SGD) is the dominant paradigm for private training, but its fundamental limitations under worst-case adversarial privacy definitions remain poorly understood. We analyze DP-SGD in the $f$-differential privacy framework, which characterizes privacy via hypothesis-testing trade-off curves, and study shuffled sampling over a single epoch with $M$ gradient updates. We derive an explicit suboptimal upper bound on the achievable trade-off curve. This result induces a geometric lower bound on the separation $\kappa$, which is the maximum distance between the mechanism's trade-off curve and the ideal random-guessing line. Because a large separation implies significant adversarial advantage, meaningful privacy requires small $\kappa$. However, we prove that enforcing a small separation imposes a strict lower bound on the Gaussian noise multiplier $\sigma$, which directly limits the achievable utility. In particular, under the standard worst-case adversarial model, shuffled DP-SGD must satisfy
\[
\sigma \ge \frac{1}{\sqrt{2\ln M}}
\quad\text{or}\quad
\kappa \ge \frac{1}{\sqrt{8}}\!\left(1-\frac{1}{\sqrt{4\pi\ln M}}\right),
\]
and thus cannot simultaneously achieve strong privacy and high utility. Although this bound vanishes asymptotically as $M \to \infty$, the convergence is extremely slow: even for practically relevant numbers of updates the required noise magnitude remains substantial. We further show that the same limitation extends to Poisson subsampling up to constant factors. Our experiments confirm that the noise levels implied by this bound leads to significant accuracy degradation at realistic training settings, thus showing a critical bottleneck in DP-SGD under standard worst-case adversarial assumptions.
\end{abstract}

\section{Introduction}
The study of differentially private stochastic gradient descent (DP-SGD)~\cite{DBLP:conf/ccs/AbadiCGMMT016} has long been motivated by the need to reconcile data privacy with the accuracy of deep neural networks. However, despite significant theoretical and empirical progress, the tension between privacy and utility remains unresolved. In particular, the degradation of training stability and test accuracy due to injected DP noise fundamentally conflicts with the requirement to add sufficient noise for rigorous differential privacy guarantees.

\begin{figure}[t]
\centering
\begin{tikzpicture}
  \pgfmathdeclarefunction{Phi}{1}{%
  \pgfmathsetmacro{\z}{#1/1.4142135623730951}%
  \pgfmathsetmacro{\pcoef}{0.3275911}%
  \pgfmathsetmacro{\ttmp}{1/(1+\pcoef*abs(\z))}%
  \pgfmathsetmacro{\cOne}{0.254829592}%
  \pgfmathsetmacro{\cTwo}{-0.284496736}%
  \pgfmathsetmacro{\cThree}{1.421413741}%
  \pgfmathsetmacro{\cFour}{-1.453152027}%
  \pgfmathsetmacro{\cFive}{1.061405429}%
  \pgfmathsetmacro{\poly}{((((\cFive*\ttmp+\cFour)*\ttmp+\cThree)*\ttmp+\cTwo)*\ttmp+\cOne)*\ttmp}%
  \pgfmathsetmacro{\erfapprox}{1 - \poly*exp(-\z*\z)}%
  \pgfmathsetmacro{\PhiA}{0.5*(1 + ( \z>=0 ? \erfapprox : -\erfapprox ))}%
  \pgfmathparse{\PhiA}%
}
\pgfmathdeclarefunction{gdpX}{2}{\pgfmathparse{1 - Phi(#1)}}
\pgfmathdeclarefunction{gdpY}{2}{\pgfmathparse{   Phi(#1 - #2)}}
  \pgfmathsetmacro{\muU}{1.10} 
  \pgfmathsetmacro{\muL}{0.90} 
  \pgfmathsetmacro{\tstar}{0.80} 
  \pgfmathsetmacro{\pU}{3.00} 
  \pgfmathsetmacro{\pL}{3.60} 

\pgfmathsetmacro{\astar}{ gdpX(\tstar, \muL) }  

  \pgfmathsetmacro{\yP}{    gdpY(\tstar, \muL) }  

  \def\shiftc{0.06}


  \pgfmathsetmacro{\xQ}{(1 + \astar - \yP)/2}
  \pgfmathsetmacro{\yQ}{1 - \xQ}


\pgfmathsetmacro{\Px}{gdpX(\tstar, \muL)}
\pgfmathsetmacro{\Py}{gdpY(\tstar, \muL)}

  \pgfmathsetmacro{\Qx}{(1 + \Px - \Py)/2}
  \pgfmathsetmacro{\Qy}{1 - \Qx}
  
  \begin{axis}[
      width=8.5cm,height=6.5cm,
      xmin=0,xmax=1.10,
      ymin=0,ymax=1.10,
      axis lines=left,
      xlabel={type I error $a$},
      ylabel={type II error $\beta$},
      xlabel style={at={(axis description cs:0.45,-0.05)},anchor=north},
      ylabel style={at={(axis description cs:-0.05,0.5)},anchor=south},
      xtick={0,1},
      ytick={0,1},
      domain=0:1,
      samples=400,
      clip=false,
      axis equal image=true,
    ]

    \addplot[name path=guess,thick,black] {1 - x};

\addplot[
  name path=lower,
  thick, green!60!black, dashed,
  domain=-4:4, samples=400
] ({gdpX(x,\muU)}, {gdpY(x,\muU)});

\addplot[
  name path=upper,
  thick, blue,
  domain=-4:4, samples=400
] ({gdpX(x,\muL)}, {gdpY(x,\muL)});

    \path[name path=diag] (axis cs:0,0) -- (axis cs:1.05,1.05);
    \def\shift{0.05}
    \path[name path=diagKappa] (axis cs:0,-\shift) -- (axis cs:1.05,1.05-\shift);

    \path[name intersections={of=upper and diag, by=Ustar}];
    \path[name intersections={of=lower and diagKappa, by=Lstar}];

    \coordinate (LstarQ) at (axis cs:
        {(1 + \pgfkeysvalueof{/pgfplots/xmin} + 0.18)/2},
        {1 - (1 + \pgfkeysvalueof{/pgfplots/xmin} + 0.18)/2});

    \path[name path=kappaline] (Lstar) -- ++(1,1);
    \path[name intersections={of=kappaline and guess, by=Qkappa}];


    \draw[very thick,red!70!black] (Lstar) -- (Qkappa);
    \fill[green!70!black] (Lstar) circle (0.006);
    \node[red!70!black,fill=white,inner sep=1pt] at ($ (Lstar)!0.55!(Qkappa) $)
      {\scriptsize $\kappa$};
    \node[anchor=north east,
          align=left,
          draw=black,
          fill=white,
          rounded corners,
          thick,
          font=\footnotesize,
          xshift=14mm, yshift=-1mm]
          at (rel axis cs:1,1) {%
      \textbf{Legend}\\[2pt]
      \tikz[baseline={(current bounding box.center)}]
          {\draw[black!70!black, thick] (0,0) -- (0.05,0);}~Ideal Random-guessing line\\
      \tikz[baseline={(current bounding box.center)}]
          {\draw[blue!70!black, thick] (0,0) -- (0.05,0);}~Suboptimal upper bound\\
      \tikz[baseline={(current bounding box.center)}]
          {\draw[green!70!black, dashed, line width=0.6pt] (0,0) -- (0.05,0);}~Achievable $f$-DP trade-off\\
      \tikz[baseline={(current bounding box.center)}]
          {\draw[red!70!black, thick] (0,0) -- (0.05,0);}~Separation $\kappa$
    };

\end{axis}
\end{tikzpicture}
\caption{
Trade-off view of privacy in the $f$-DP framework~\cite{DBLP:journals/corr/abs-1905-02383}. The black line shows the ideal random-guessing trade-off between type~I and type~II errors. The vertical red segment~$\kappa$ denotes the the \emph{maximum} distance between the achievable $f$-DP trade-off and the ideal limit.
}
\label{fig:early-landscape}

\end{figure}

Existing analyses of DP-SGD have primarily relied on the \emph{Poisson subsampling} model~\cite{DBLP:conf/ccs/AbadiCGMMT016,DBLP:conf/nips/BirrellEBP24,pmlr-v97-zhu19c,DBLP:journals/corr/abs-1908-10530}, in which each record is independently included in a mini-batch with a fixed probability. This assumption, introduced by Abadi et al.~\cite{DBLP:conf/ccs/AbadiCGMMT016}, enables privacy accounting techniques like Rényi Differential Privacy~\cite{DBLP:conf/csfw/Mironov17} and privacy-loss distributions for numerically tight composition~\cite{DBLP:conf/aistats/KoskelaJH20,DBLP:conf/ccs/0001M18}. However, it is primarily a modeling convenience: in practice, modern deep learning systems do not sample examples independently but instead shuffle the entire dataset and partition it into fixed-size mini-batches each epoch. This shuffling-based batching is far more efficient computationally but breaks the independence assumptions exploited in Poisson analyses, making theoretical guarantees harder to derive. As a result, most practical DP-SGD implementations train using shuffled batches while reporting privacy parameters as if Poisson subsampling were used~\cite{DBLP:journals/jair/PonomarevaHKXDMVCT23,DBLP:conf/icml/ChuaGK0MSZ24,DBLP:conf/nips/ChuaGK0MSZ24}. This persistent mismatch between analytical assumptions and real implementations motivates our study of \emph{random shuffling} as a more realistic sampling mechanism for understanding DP-SGD’s privacy behavior.

Building on these observations, our analysis is carried out entirely within the $f$-DP framework~\cite{DBLP:journals/corr/abs-1905-02383}, which provides a complete hypothesis-testing description of privacy and allows us to reason directly about the trade-off curves induced by shuffled DP-SGD.  We focus on the single-epoch shuffling regime under the standard \emph{worst-case adversarial model}, in which each data record appears in exactly one batch per epoch. In the standard worst-case adversarial model, the adversary may use the observable noisy batch updates together with any auxiliary information (such as batch size) available to infer the contribution of the differing data record; the formal model is given in Section~\ref{sec:adversarial-model}. This setting eliminates the independence structure that underpins Poisson-based analyses and introduces fundamental dependencies in the adversary’s observation model. We explicitly formalize this \emph{adversarial observation model}, which is typically left implicit in prior work, and analyze its implications through the geometry of the $f$-DP trade-off curve. Finally, while our primary results concern shuffled DP-SGD, we show via a rigorous mixture argument that the same limitations extend, up to constant factors, to Poisson subsampling. We additionally derive an asymptotic separation bound for Poisson subsampling within the $\mu$-GDP framework~\cite{DBLP:journals/corr/abs-1905-02383} which relies on asymptotic convergence without an explicit rate. Together, these results yield a \emph{unified worst-case picture} covering both the theoretical (Poisson) and practical (shuffling) sampling schemes, and demonstrate that our findings propagate to the privacy parameters reported in standard DP-SGD practice.

Before presenting our main claim, we briefly recall how privacy is parameterized in DP-SGD.  In the classical $(\varepsilon,\delta)$-DP formulation, a randomized algorithm is private when its output distributions on two neighboring datasets are nearly
indistinguishable to any adversary; smaller values of $\varepsilon$ therefore correspond to stronger privacy guarantees.  In DP-SGD, this effect is achieved by two mechanisms: each per-example gradient is clipped to an $\ell_2$ radius $C$, which bounds its maximum possible influence, and Gaussian noise with standard deviation $C\sigma$ is added to every batch update.  The clipping constant $C$ controls sensitivity, while the noise multiplier $\sigma>0$ is the dominant driver of the resulting privacy parameters $(\varepsilon,\delta)$.  Smaller $\sigma$ improves test accuracy but necessarily weakens privacy, whereas larger $\sigma$ strengthens privacy at the expense of utility. 

In the $f$-DP framework~\cite{DBLP:journals/corr/abs-1905-02383}, an adversary attempting to distinguish two neighboring datasets forms a test with \emph{false positive rate} (FP)~$\alpha$ and \emph{false negative rate} (FN)~$\beta$.  The resulting \emph{trade-off function} $f(\alpha)$ gives the minimum achievable FN error among all tests with FP error at most $\alpha$.  Thus, $f$ captures the fundamental FP--FN trade-off faced by the strongest possible adversary.  The ideal privacy baseline is the random-guessing line $\beta = 1-\alpha$, corresponding to zero leakage. Section~\ref{subsec:fdp} provides further background on the $f$-DP formalism.

A key geometric quantity in this picture is the \emph{separation} between the trade-off curve and the random-guessing line (perfect privacy). We define
\[
\operatorname{sep}(f)
\;=\;
\max_{\alpha\in[0,1]}\;
\min_{\gamma\in[0,1]}
\bigl\|\,(\alpha,f(\alpha))-(\gamma,1-\gamma)\,\bigr\|_2,
\]
that is, the \emph{maximum} distance between a point on the trade-off curve $(\alpha,f(\alpha))$ and the ideal line $(\gamma,1-\gamma)$. When $f$ is symmetric and convex, this separation is attained at the fixed point $\hat{a}$ satisfying $f(\hat{a})=\hat{a}$, and we can write $\kappa \;=\; \operatorname{sep}(f) \;=\; (1-2\hat{a})/{\sqrt{2}}.$
Smaller values of $\operatorname{sep}(f)$ (and hence smaller~$\kappa$) correspond to stronger privacy, as the entire trade-off curve lies closer to random guessing. This privacy interpretation will be formalized and analyzed rigorously later.

This notion of separation follows the $f$-DP interpretation introduced by Dong et~al.~\cite{DBLP:journals/corr/abs-1905-02383,DBLP:journals/corr/abs-1911-11607} and later formalized by Zhu et~al.~\cite{DBLP:conf/IEEEares/ZhuTPDC25}. Figure~\ref{fig:early-landscape} provides an early geometric view of this hypothesis-testing landscape for shuffled DP-SGD. The vertical segment~$\kappa$ illustrates the separation $\operatorname{sep}(f)$ for the corresponding trade-off curve. In the remainder of the paper, we show that under the standard worst-case adversarial model, \emph{our explicit upper bound on the shuffled trade-off curve implies a lower bound on $ \kappa = \operatorname{sep}(f) $}.
Combining these elements yields our main conceptual consequence: an unavoidable \emph{or} constraint for one-epoch shuffled DP\text{-}SGD with $M$ rounds. Under the standard worst-case adversarial model, the mechanism must satisfy:
\[
\sigma \;\ge\; \frac{1}{\sqrt{2\ln M}}
\qquad\textbf{or}\qquad
\kappa \;\ge\; \frac{1}{\sqrt{8}}\!\left(1 - \frac{1}{\sqrt{4\pi\ln M}}\right)
\]
Both cannot be made small simultaneously. A large noise multiplier $\sigma$ hurts accuracy, while a large separation $\kappa$ indicates substantial privacy leakage. While the quantity $1/\sqrt{\ln M}$ vanishes asymptotically as $M\to\infty$, this decay is extremely slow: even for astronomically large values of $M$, the required noise level remains far from zero. Our lower bound thus translates into a privacy–utility tension: in the shuffled setting and under the standard worst-case adversary of the DP framework, DP-SGD cannot operate in a regime where both privacy and utility are simultaneously strong. As our analysis confirms, this tension is not an artifact of shuffling; the same lower bound applies to Poisson subsampling (up to a constant factor), suggesting that the bottleneck lies in the worst-case adversarial framework itself. We complement our theoretical results with experiments illustrating the practical severity of this trade-off. In particular, we show that the noise levels implied by our lower bound already lead to substantial degradation in test accuracy under realistic training budgets. These empirical results confirm that the theoretical limitations we identify also manifests concretely at finite parameter values encountered in practice.

We emphasize that this limitation is not inherent to DP-SGD as an algorithm, but rather to the \emph{adversarial assumption framework} under which its privacy is analyzed. In particular, the standard definition implicitly grants the adversary full visibility into every component of the stochastic updates, making certain forms of privacy leakage unavoidable. Our main contributions are as follows:

\begin{itemize}
    \item We analyze single-epoch shuffled DP-SGD under the standard worst-case adversarial model in the $f$-DP framework and derive an explicit suboptimal upper bound on its trade-off curve.
    \item This yields a geometric lower bound on the separation~$\kappa$, exposing a fundamental privacy--utility trade-off: the noise multiplier~$\sigma$ and separation cannot be driven below explicit lower bounds at the same time. Then, via a mixture argument, we show that Poisson subsampling inherits the same limitation up to a constant factor, providing a unified worst-case characterization of both sampling schemes.
    \item Empirically, we validate the predicted threshold behavior, showing that the noise levels implied by our bounds already cause substantial accuracy degradation across models, datasets, and batch sizes.
\end{itemize}

\paragraph{Organization.}
The remainder of the paper is structured as follows. Section~\ref{sec:related} reviews the most relevant prior work. Section~\ref{sec:preliminaries} introduces the differential privacy framework, the sampling mechanisms used in practice, and the BatchSGD update model. Section~\ref{sec:adversarial-model} formalizes the worst-case adversarial observation model that underlies our analysis. In Section~\ref{sec:hypothesis-testing}, we introduce a suboptimal hypothesis testing framework for DP-SGD and define the separation metric $\kappa$ formally. Next, Section~\ref{sec:limitations} develops the geometric tools needed to analyze this metric, establishes our main separation lower bound for shuffled DP-SGD, and extends the result to Poisson subsampling via a mixture argument and yielding a unified worst-case picture for both sampling schemes. Section~\ref{sec:experiments} presents empirical illustrations of our theoretical findings, and Sections~\ref{sec:discussion} and~\ref{sec:conclusion} conclude with broader implications and future directions.

\section{Related Work}
\label{sec:related}

Differential Privacy (DP)~\cite{DBLP:conf/ccs/AbadiCGMMT016, DBLP:conf/tcc/DworkMNS06, DBLP:journals/fttcs/DworkR14, DBLP:conf/eurocrypt/DworkKMMN06, DBLP:journals/cacm/Dwork11} has emerged as a principled framework for preserving individual information in data analysis and model training. Among its applications, Differentially Private Stochastic Gradient Descent (DP-SGD)~\cite{DBLP:conf/ccs/AbadiCGMMT016} is the most widely used privacy-preserving method in deep learning, injecting calibrated noise into gradient updates to satisfy a target privacy budget. It has also been studied for various machine learning domains~\cite{10.1145/2660267.2660348,tang2025privatefinetuninglargelanguage,de2022unlockinghighaccuracydifferentiallyprivate,chen2021gswgangradientsanitizedapproachlearning,dockhorn2023differentiallyprivatediffusionmodels,anil2021largescaledifferentiallyprivatebert,he2022exploringlimitsdifferentiallyprivate,igamberdiev2024dpnmtscalabledifferentiallyprivatemachine,yue2023synthetictextgenerationdifferential,mckenna2025scalinglawsdifferentiallyprivate}, and adopted in real-world systems from the U.S. Census~\cite{10.1145/3219819.3226070} to Microsoft telemetry~\cite{ding2017collectingtelemetrydataprivately} and Google’s recent initiative to train large language models from scratch under DP~\cite{sinha2025vaultgemmadifferentiallyprivategemma}. It is now supported by major privacy libraries such as TensorFlow Privacy, JAX Privacy, and Opacus~\cite{tensorflow2015-whitepaper,jax-privacy2022github,opacus}, which provide practical implementations and privacy accounting tools for large-scale training.

Analytically, the privacy guarantees of DP-SGD have been refined through composition frameworks such as Rényi Differential Privacy (RDP)~\cite{DBLP:conf/csfw/Mironov17}  and Privacy-Loss Distributions (PLD)~\cite{DBLP:journals/popets/SommerMM19,DBLP:conf/aistats/KoskelaJH20,DBLP:conf/ccs/0001M18}, which enable tighter accounting of cumulative privacy loss across training iterations. The classical $(\varepsilon,\delta)$-DP~\cite{DBLP:conf/tcc/DworkMNS06,DBLP:conf/eurocrypt/DworkKMMN06} formulation summarizes privacy using two scalar parameters, whereas the emerging $f$-DP framework~\cite{DBLP:journals/corr/abs-1905-02383,DBLP:journals/corr/abs-1911-11607,DBLP:conf/nips/WangS0SS23} describes privacy through the entire \emph{trade-off curve} of optimal type~I/type~II errors, providing a complete and geometrically interpretable characterization of indistinguishability. Importantly, the trade-off curve is the most informative DP representation: as shown in \cite{DBLP:journals/corr/abs-1905-02383} (Appendix~A and~B), all standard DP notions, including $(\varepsilon,\delta)$-DP and divergence-based relaxations, can be \emph{derived} from it. Every $f$-DP trade-off curve induces a corresponding family of $(\varepsilon,\delta)$-DP guarantees, and conversely each $(\varepsilon,\delta)$-DP guarantee imposes a linear constraint on the trade-off curve~\cite{DBLP:journals/corr/abs-1905-02383}. This relationship allows techniques developed for $(\varepsilon,\delta)$-DP to be imported into the $f$-DP framework. This motivates our decision to conduct all analysis directly in the $f$-DP framework.

Privacy amplification by shuffling, introduced by Erlingsson et al.~\cite{DBLP:conf/soda/ErlingssonFMRTT19} and later refined by Feldman et al.~\cite{DBLP:conf/focs/FeldmanMT21, DBLP:conf/soda/FeldmanMT23}, showed that randomly permuting locally privatized reports can strengthen privacy guarantees in the central model. However, this amplification applies to locally randomized mechanisms rather than to DP-SGD’s centralized setting.

The most closely related analysis of shuffled DP-SGD is due to Chua et  al.~\cite{DBLP:conf/icml/ChuaGK0MSZ24,DBLP:conf/nips/ChuaGK0MSZ24}, who study the mechanism under their \emph{Adaptive Batch Linear Queries} (ABLQ) framework. Their results provide \emph{lower bounds} on the  achievable trade-off function for both single- and multi-epoch training; in the $f$-DP viewpoint, such  lower bounds translate to \emph{upper bounds} on the separation~$\kappa$, characterizing how private the mechanism can be in the best case. Importantly, they also show that random shuffling in DP-SGD leads to \emph{weaker provable privacy guarantees} than Poisson subsampling, highlighting a discrepancy with the common practice of reporting Poisson-based parameters for shuffled implementations~\cite{DBLP:journals/jair/PonomarevaHKXDMVCT23}. Our analysis is complementary: we derive an explicit \emph{upper bound} on the shuffled trade-off function under the standard worst-case adversary, which yields a \emph{lower bound} on separation~$\kappa$.

The random-shuffling regime remains the standard sampling strategy in modern deep learning practice. It has been extensively studied from an optimization perspective, where several works analyze its convergence behavior and compare it to sampling with replacement~\cite{DBLP:journals/jmlr/NguyenTPND21,DBLP:conf/icml/HaochenS19,DBLP:journals/ijon/MengCWML19,DBLP:journals/mp/GurbuzbalabanOP21}. Despite its popularity, there is no clear empirical or theoretical evidence that random shuffling systematically outperforms Poisson subsampling, to our knowledge. Its widespread adoption instead reflects training conventions and practical considerations: Poisson subsampling does not scale well to large datasets because random access is inefficient for data that do not fit in memory, and the resulting variable batch sizes are inconvenient to handle in standard deep learning frameworks~\cite{DBLP:conf/icml/ChuaGK0MSZ24}. As a result, shuffling-based batching remains the default in modern training systems, even though the corresponding privacy analyses continue to rely on Poisson-based assumptions~\cite{DBLP:journals/jair/PonomarevaHKXDMVCT23}.

Apart from privacy accounting and sampling considerations, it is now well established that training deep neural networks with DP-SGD often incurs a substantial loss in utility, with prior work consistently reporting degradation in accuracy and generalization under meaningful privacy guarantees~\cite{DBLP:journals/corr/abs-2012-07828,de2022unlockinghighaccuracydifferentiallyprivate,DBLP:journals/corr/abs-2201-12328,DBLP:journals/corr/abs-2502-17772,DBLP:journals/corr/abs-2105-07985,DBLP:journals/corr/abs-2111-13895,DBLP:conf/nips/BagdasaryanPS19}. A common insight across these works is that the injected Gaussian noise, controlled by the noise multiplier~$\sigma$, is one of the dominant sources of this degradation, acting as a barrier to effective training.

\section{Preliminaries}
\label{sec:preliminaries}
Differential Privacy (DP) guarantees that a randomized mechanism produces
statistically similar outputs on any two neighboring datasets differing in a
single individual’s record.  Within stochastic gradient descent (SGD), the DP-SGD algorithm~\cite{DBLP:conf/ccs/AbadiCGMMT016} enforces this requirement by first bounding the effect of each example via gradient clipping, and then injecting Gaussian noise to obscure the contribution of any individual.

DP-SGD modifies the standard mini-batch update in two ways:  
(i) each per-sample gradient is clipped to a fixed threshold, bounding sensitivity, and  
(ii) Gaussian noise scaled to this threshold is added to the aggregated gradient before updating the model parameters. The precise mechanism is formalized later in Section~\ref{subsec:batchsgd}. Moreover, the privacy of DP-SGD depends critically on how mini-batches are sampled. We therefore introduce the two canonical sampling mechanisms, namely \emph{Poisson Subsampling} and \emph{Random Shuffling} in the following subsection.

\paragraph{Scope of analysis.}
We analyze the privacy guarantees of a \emph{single epoch} of DP-SGD under the standard worst-case adversarial model (see Section~\ref{sec:adversarial-model}). In this setting the sampler produces index sets $S_1,\ldots,S_M$, each used exactly once to form a mini-batch. Extensions to multiple epochs require additional composition and are beyond the scope of our core lower bound.

\subsection{Sampling Mechanisms in DP-SGD}
\label{sec:sampling}

The sampling mechanism plays a central role in the privacy analysis of DP-SGD because subsampling provably amplifies differential privacy guarantees: when each update is computed on a random mini-batch, a given record influences the mechanism only in the rounds where it is sampled, thereby reducing the likelihood of leaking information about that individual~\cite{DBLP:conf/nips/BalleBG18}. While various sampling strategies have been proposed~\cite{DBLP:journals/jair/PonomarevaHKXDMVCT23}, two have become canonical in both theory and practice: \emph{Poisson subsampling} and \emph{random shuffling}. The former is analytically convenient and therefore ubiquitous in theoretical work; the latter is the standard choice in practical deep learning systems. Both determine how mini-batches are formed from a dataset $d=\{\xi_1,\ldots,\xi_N\}$, but they exhibit fundamentally different statistical dependencies and therefore lead to different privacy-amplification behaviors.

To make these mechanisms explicit we formalize them below and describe the procedures in Algorithms~\ref{alg:poisson} and~\ref{alg:randshuffle} later.

\begin{definition}[Poisson Subsampling]
\label{def:poisson-subsampling}
Given a dataset $d$ of size $N$, Poisson subsampling includes each element independently with probability $q\in(0,1)$ at each round $j=1,\dots,M$.  
This produces random index sets $S_j\subseteq[N]$ with expected size $\mathbb{E}[|S_j|]=qN$, and independence is maintained both across records and across rounds.
\end{definition}

\begin{definition}[Random Shuffling]
\label{def:random-shuffling}
Given a dataset $d$ of size $N$ and a fixed number of rounds $M$, random shuffling samples a uniform permutation $\pi$ of the dataset indices and then forms $M$ disjoint batches of equal size $b=\lfloor N/M\rfloor$ by taking consecutive blocks of $\pi$. Concretely, for each $j=1,\ldots,M$, we have $S_j \;=\; \{\pi_{(j-1)b+1},\ldots,\pi_{jb}\}$. So each record appears in at most one batch per epoch, and the last $N - Mb = N \bmod M$ permuted indices are discarded.
\end{definition}

In the remainder of this work, we focus on the random-shuffling mechanism, as it captures practical DP-SGD implementations and introduces inter-round dependencies that are absent under Poisson subsampling. As we later show, our main limitation results extend to Poisson subsampling up to constant factors, yielding a unified worst-case characterization for both sampling schemes.

\renewcommand{\thealgorithm}{$\mathcal{P}$}
\begin{algorithm}[t]
\caption{Poisson Subsampling}
\label{alg:poisson}
\begin{algorithmic}[1]
\State \textbf{Input:} Dataset $d$ of size $N$, sampling rate $q \in (0,1)$, number of rounds $M$
\For{$j = 1, \ldots, M$}
    \State Initialize $S_j \gets \emptyset$
    \For{each index $i \in [N]$}
        \State Include $i$ in $S_j$ independently with probability $q$
    \EndFor
\EndFor
\State \textbf{Output:} Sequence of index sets $(S_1, \ldots, S_M)$
\end{algorithmic}
\end{algorithm}

\renewcommand{\thealgorithm}{$\mathcal{S}$}
\begin{algorithm}[t]
\caption{Random Shuffling}
\label{alg:randshuffle}
\begin{algorithmic}[1]
\State \textbf{Input:} Dataset $d$ of size $N$, number of rounds $M$
\State Sample a random permutation $\pi$ over $[N]$
\State Let $b \gets \lfloor N/M \rfloor$
\For{$j = 1, \ldots, M$}
    \State Define index set $S_j \gets \{\pi_{(j-1)b+1}, \ldots, \pi_{jb}\}$
\EndFor
\State \textbf{Output:} Sequence of index sets $(S_1, \ldots, S_M)$
\Comment{The last $N - Mb = N \bmod M$ indices of $\pi$ are discarded.}
\end{algorithmic}
\end{algorithm}

\subsection{DP-SGD Mechanism}
\label{subsec:batchsgd}

We formalize the update rule used throughout this work.  
Let $d=\{\xi_1,\ldots,\xi_N\}$ denote the training dataset of size $N$, and let
$S_1,\ldots,S_M\subseteq[N]$ be index sets produced by a sampling mechanism
(Poisson subsampling or random shuffling).  
The corresponding mini-batch at iteration $j$ is $B_j=\{\xi_i : i\in S_j\}$.

Given a loss function $\ell(w;\xi)$, the empirical risk is
\[
L(w)=\frac{1}{N}\sum_{i=1}^N \ell(w;\xi_i),
\]
and the per-sample gradient at iteration $j$ is
\[
g(w_j;\xi_i)=\nabla_w \ell(w_j;\xi_i).
\]
\begin{definition}[Gradient Clipping]
\label{def:clipping}
Given a clipping threshold $C>0$, the clipped version of a per-sample gradient
$g(w;\xi)$ is
\[
[g(w;\xi)]_C
=
g(w;\xi)\cdot
\min\!\left(1,\frac{C}{\|g(w;\xi)\|_2}\right).
\]
Thus $\|[g(w;\xi)]_C\|_2 \le C$ for all $\xi$, ensuring bounded sensitivity of
the aggregated update.
\end{definition}

\begin{definition}[Gaussian Noise Mechanism]
\label{def:gaussian-noise}
Given a clipping threshold $C$ and noise multiplier $\sigma\ge 0$, the DP-SGD
mechanism adds Gaussian noise
\[
\mathbf{z}
\sim
\mathcal{N}\!\bigl(0,(C\sigma)^2 I\bigr),
\]
where $I$ is the identity matrix (i.e., independent Gaussian noise per
coordinate).  
The standard deviation $C\sigma$ scales proportionally with both the clipping
threshold and the noise multiplier.
\end{definition}

At each iteration $j$, the mechanism processes the mini-batch $S_j$ by
computing clipped per-sample gradients, adding Gaussian noise as in
Definition~\ref{def:gaussian-noise}, averaging by the batch size, and then
performing an SGD update step. The Algorithm~\ref{alg:batchsgd} summarizes the procedural implementation, including batch generation. For $\sigma = 0$ and $C = \infty$, this procedure reduces to standard mini-batch SGD; for $\sigma > 0$ with finite $C$, it corresponds to the DP-SGD update of~\cite{DBLP:conf/ccs/AbadiCGMMT016}.

\renewcommand{\thealgorithm}{$\mathcal{B}$}
\begin{algorithm}[t]
\caption{DP-SGD Mechanism~\cite{DBLP:conf/ccs/AbadiCGMMT016}}
\label{alg:batchsgd}
\begin{algorithmic}[1]
\State \textbf{Input:} Dataset $d$, number of rounds $M$, learning rate $\eta$, clipping bound $C$, noise multiplier $\sigma$ (optional)
\For{each epoch}
    \State Generate mini-batches $\{S_1, \ldots, S_M\}$ using either Poisson subsampling (Alg.~\ref{alg:poisson}) or random shuffling (Alg.~\ref{alg:randshuffle})
    \For{each $S_j$}
        \State Compute per-sample gradients $g(w_j;\xi_i)$ for all $i \in S_j$
        \State Clip each gradient:
        \[
        [g(w_j; \xi_i)]_C = g(w_j; \xi_i)\cdot \min\Bigl(1, \frac{C}{\|g(w_j; \xi_i)\|_2}\Bigr)
        \]
        \State Compute clipped batch sum:
        \[
        G_j = \sum_{i \in S_j} [g(w_j; \xi_i)]_C
        \]
        \State Add Gaussian noise and average:
        \[
        \mathbf{z}_j \sim \mathcal{N}\!\bigl(0,(C\sigma)^2 I\bigr), \qquad
        \tilde{G}_j = \frac{1}{|S_j|}\Bigl(G_j + \mathbf{z}_j\Bigr)
        \]
        \State Update model: $w_{j+1} \leftarrow w_j - \eta\,\tilde{G}_j$
    \EndFor
\EndFor
\end{algorithmic}
\end{algorithm}

\subsection{Differential Privacy and \texorpdfstring{$f$-DP}{f-DP}}
\label{subsec:fdp}

We briefly recall the classical $(\varepsilon,\delta)$ formulation of
differential privacy and its refinement through the $f$-DP framework, both of
which rely on a notion of adjacency between datasets.

Two datasets that differ only in the contribution of a single individual are
called \emph{adjacent}, written $d \sim d'$.  This notion can be
formalized in several ways, depending on how one models the presence,
replacement, or removal of a record.  We will introduce the standard adjacency
relations shortly, but for any such choice, differential privacy is defined as follows.

\begin{definition}[Differential Privacy (DP)~\cite{DBLP:conf/eurocrypt/DworkKMMN06}]
\label{def:dp}
A randomized mechanism $\mathcal{M}$ is \emph{$(\varepsilon,\delta)$-DP} with
respect to an adjacency relation~$\sim$ if, for all adjacent datasets
$d \sim d'$ and all measurable events $E$,
\[
\Pr[\mathcal{M}(d)\in E]
\;\le\;
e^{\varepsilon}\,\Pr[\mathcal{M}(d')\in E] + \delta.
\]
\end{definition}

The $f$-DP framework characterizes privacy through the entire \emph{trade-off function} $f(\alpha)$, which captures the best achievable Type~I and Type~II error trade-off when distinguishing $d$ from $d'$. This viewpoint provides a complete geometric representation of privacy, supports tight composition, and matches the Gaussian mechanism central to DP-SGD.  In addition, DP relies on a formal notion of \emph{adjacency} between
datasets. Several adjacency models appear in the literature; we record them
formally below.

\begin{definition}[Substitution Adjacency]
\label{def:substitution-adjacency}
Datasets $d$ and $d'$ are \emph{substitution neighbors} if they have equal
cardinality and differ in exactly one record:
\[
d' = d \setminus\{\xi\} \cup \{\xi'\}.
\]
Thus $|d| = |d'|$, and the differing individual is replaced rather than removed.
\end{definition}

\begin{definition}[Add/Remove Adjacency]
\label{def:addremove-adjacency}
Datasets $d$ and $d'$ are \emph{add/remove neighbors} if one is obtained from
the other by inserting or deleting a single record. Equivalently,
\[
|d'| = |d| \pm 1, \qquad d' = d \cup \{\xi\} \ \text{or}\ d' = d \setminus\{\xi\}.
\]
\end{definition}

\begin{definition}[Zero-Out Adjacency~\cite{DBLP:conf/nips/ChuaGK0MSZ24, pmlr-v139-kairouz21b}]
\label{def:zeroout-adjacency}
Let $\mathcal{X}$ denote the space of valid data records, and let  $\mathcal{X}_\perp := \mathcal{X} \cup \{\perp\}$ be the augmented space, where $\perp$ denotes a ghost record whose contribution to any query or gradient is defined to be zero.  

For a dataset $d = (d_1,\ldots,d_N) \in \mathcal{X}_\perp^{\,N}$ and an index $i\in[N]$, let
$d_{-i}$ denote the dataset obtained by removing the $i$-th entry.
Two datasets $d,d' \in \mathcal{X}_\perp^{\,N}$ are said to be \emph{zero-out adjacent} if there exists
an index $i\in[N]$ such that
\[
d_{-i} = d'_{-i},
\qquad
\{d_i,\, d'_i\} = \{\perp,\, \xi\}
\ \text{for some }\xi\in\mathcal{X},
\]
that is, the datasets are identical except at a single position where one
contains a genuine record and the other contains the ghost record~$\perp$.
\end{definition}

When DP-SGD uses Poisson subsampling, each element is included in a batch independently with probability $q$. This naturally corresponds to \emph{add/remove adjacency} (Definition~\ref{def:addremove-adjacency}), since the mechanism literally behaves as if an individual is either present or absent in any given round. Under add/remove adjacency, the $\ell_2$-sensitivity of the clipped gradient sum is exactly $C$, and the Gaussian mechanism requires noise of scale $C\sigma$.

In contrast, the $f$-DP framework~\cite{DBLP:journals/corr/abs-1905-02383,DBLP:journals/corr/abs-1911-11607,DBLP:conf/aistats/0005DW22} is formulated under \emph{substitution adjacency} (Definition~\ref{def:substitution-adjacency}), where datasets must have the same size. Translating DP-SGD into this setting means comparing a batch from $d$ to a batch from $d'$ where the differing record contributes either its gradient or its replacement gradient. Because the batch size $|S_j|$ is the same in both datasets, the worst-case sensitivity becomes $2C$ (twice the sensitivity of add/remove adjacency). This imposes an extra factor~2 on the required Gaussian noise scale when using substitution-based $f$-DP analysis.

A third adjacency notion, \emph{zero-out adjacency} (Definition~\ref{def:zeroout-adjacency})~\cite{DBLP:conf/nips/ChuaGK0MSZ24, pmlr-v139-kairouz21b}, avoids the sensitivity doubling inherent to substitution adjacency by replacing the differing record with a zero-gradient dummy. 
Under this definition, the $\ell_2$-sensitivity of the clipped gradient sum is exactly $C$ (the distance between a gradient vector and zero), rather than $2C$.  Furthermore, unlike add/remove adjacency, zero-out adjacency preserves the total dataset cardinality $N$, maintaining the fixed-size index structure required for the rigorous analysis of random shuffling.

Consequently, adopting zero-out adjacency yields a sharper analysis: it avoids the factor of~2 in the required noise scale that commonly arises in substitution-based $f$-DP analyses. For the remainder of this work, we therefore analyze DP-SGD under \textbf{zero-out adjacency}, so that our bounds reflect the tightest possible worst-case privacy guarantees. We note that this modeling choice does not affect our main qualitative conclusions: all results can be translated to substitution adjacency at the cost of a factor-of-two increase in the required noise scale.

\section{Adversarial Model}
\label{sec:adversarial-model}

A key element in any differential privacy analysis is the choice of adversary and the information they are assumed to observe. In DP-SGD, the adversary’s view is determined by what portion of the stochastic updates (gradients, batch structure, and added noise) are observable. We consider two notions: a \emph{practical}  adversary corresponding to realistic communication constraints, and a \emph{worst-case} adversary required for proving formal differential privacy guarantees.

\subsection{Practical Adversary}
\label{subsec:practical-adversary}

In many distributed or federated training settings, an adversary can observe
only the messages transmitted to the server. These messages are the 
\emph{noisy aggregated gradients} produced by DP-SGD. The following
assumption captures this realistic eavesdropper model.

\begin{assumption}[Practical Adversary]
\label{ass:practical}
In each round $j=1,\ldots,M$, the adversary observes only the
(noisy, averaged) update
\[
\tilde{G}_j
= \frac{1}{|S_j|}
    \left(
        \sum_{i \in S_j} [g(w_j;\xi_i)]_C 
        + \mathcal{N}(0,(C\sigma)^2 I)
    \right),
\]
and has no access to individual gradients or intermediate computations.
\end{assumption}

\subsection{Worst-Case Adversary}
\label{subsec:worst-case-adversary}
To enable standard DP arguments, analyses in the literature traditionally endow the adversary with additional knowledge that is not realistically available but simplifies the analysis under the adjacency models defined in Section~\ref{subsec:fdp}.

We adopt \emph{zero-out adjacency} (Definition~\ref{def:zeroout-adjacency}): neighboring datasets $d$ and $d'$ both have size $N$ and share the first $N-1$ records. They differ only in the $N$-th element: in $d$ it is a ghost record $\perp$ whose gradient is defined to be zero ($[g(w_j; \perp)]_C = 0$), while in $d'$ it is a valid record $\xi_N$.  Then, we give the adversary access to the sum of all clipped gradients \emph{except} the potentially differentiating one (index $N$). Furthermore, we assume the batch size $|S_j|$ is public knowledge (or observable from metadata).

\begin{assumption}[Auxiliary Knowledge Under Zero-out Adjacency]
\label{ass:aux}
Under zero-out adjacency (Definition~\ref{def:zeroout-adjacency}),
for each round $j$ the adversary knows the batch size $|S_j|$ and the
partial sum of clipped gradients excluding the potentially differing record:
\[
G_j^{(-)} 
 = \sum_{i \in S_j \setminus \{N\}} [g(w_j;\xi_i)]_C.
\]
\end{assumption}

Given the observable update $\tilde{G}_j$ from 
Assumption~\ref{ass:practical} and the auxiliary information from 
Assumption~\ref{ass:aux}, the adversary can deterministically recover the noisy
contribution of the differing record.

\begin{proposition}[Isolation of the Individual Contribution]
\label{prop:reconstruction}
Let $\delta_{N\in S_j}$ denote the indicator that $N$ belongs to batch $S_j$. Under Assumptions~\ref{ass:practical} and~\ref{ass:aux}, the adversary can
compute
\[
Z_j := |S_j|\,\tilde{G}_j - G_j^{(-)}.
\]
Consequently, when the dataset is $d$ (ghost record $\perp$ present), the contribution is zero and the adversary reconstructs pure noise:
\begin{equation}
Z_j = \mathcal{N}(0,(C\sigma)^2 I),
\label{eq:diff-term-d}
\end{equation}
and when the dataset is $d'$ (valid record $\xi_N$ present), the reconstruction becomes
\begin{equation}
Z'_j = \delta_{N\in S_j} [g(w_j;\xi_N)]_C 
       + \mathcal{N}(0,(C\sigma)^2 I).
\label{eq:diff-term-d-prime}
\end{equation}
\end{proposition}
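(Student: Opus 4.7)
The proof is essentially a direct algebraic unrolling of the definitions, so the plan is to reduce the statement to what the adversary can actually compute from the quantities granted in Assumptions~\ref{ass:practical} and~\ref{ass:aux}, and then split into the two cases determined by whether the differing record is $\perp$ or $\xi_N$.

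First I would expand $\tilde{G}_j$ using Algorithm~\ref{alg:batchsgd}: by definition, $\tilde{G}_j = \frac{1}{|S_j|}(G_j + \mathbf{z}_j)$ where $G_j = \sum_{i\in S_j}[g(w_j;\xi_i)]_C$ and $\mathbf{z}_j\sim\mathcal{N}(0,(C\sigma)^2 I)$. Since $|S_j|$ is observable (Assumption~\ref{ass:aux}), multiplying the practical observation $\tilde{G}_j$ by $|S_j|$ is a legitimate operation for the adversary and yields $|S_j|\tilde{G}_j = G_j + \mathbf{z}_j$. Subtracting the known partial sum $G_j^{(-)}$ then gives $Z_j = G_j - G_j^{(-)} + \mathbf{z}_j$, so the whole statement reduces to evaluating $G_j - G_j^{(-)}$ under each dataset.

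Next I would split the clipped batch sum according to whether $N \in S_j$. Writing $G_j = G_j^{(-)} + \delta_{N\in S_j}\,[g(w_j;d_N)]_C$, where $d_N$ denotes the $N$-th record, the identity $Z_j = \delta_{N\in S_j}\,[g(w_j;d_N)]_C + \mathbf{z}_j$ follows. Under $d$, the $N$-th record is the ghost $\perp$, and by the stipulation in Definition~\ref{def:zeroout-adjacency} we have $[g(w_j;\perp)]_C = 0$; the indicator term vanishes regardless of whether $N\in S_j$, yielding~\eqref{eq:diff-term-d}. Under $d'$, the $N$-th record is a genuine $\xi_N$, giving~\eqref{eq:diff-term-d-prime} directly.

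There is no real obstacle here; the content is conceptual rather than technical. The one subtlety I would flag explicitly is that the argument relies on $G_j^{(-)}$ being identical under $d$ and $d'$, which is exactly the content of zero-out adjacency (the first $N-1$ records agree and the batch index set $S_j$, being a function of $[N]$ rather than of the data, is common to both). It is also worth pointing out that $Z_j$ is a \emph{sufficient statistic} for the adversary's task of distinguishing $d$ from $d'$ in round $j$: every part of $\tilde{G}_j$ that does not depend on the differing record has been subtracted off. This observation justifies treating $Z_j$ (rather than $\tilde{G}_j$) as the effective observation in the hypothesis-testing analysis of Section~\ref{sec:hypothesis-testing}.
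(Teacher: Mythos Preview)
Your proposal is correct and matches the paper's approach: the paper treats Proposition~\ref{prop:reconstruction} as immediate from the definitions and does not supply a separate proof environment, so your explicit algebraic unrolling of $\tilde{G}_j$ and case split on $d$ versus $d'$ is exactly the intended argument, just spelled out in more detail than the paper itself. Your remark that $G_j^{(-)}$ is common to both datasets under zero-out adjacency is a useful clarification that the paper leaves implicit.
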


\paragraph{Consequence.}
Since the worst-case adversary can mathematically reconstruct these differential
terms from the public observations and the auxiliary knowledge under Assumptions~\ref{ass:practical} and~\ref{ass:aux}, analyzing the
indistinguishability of \eqref{eq:diff-term-d} and \eqref{eq:diff-term-d-prime} is sufficient to bound the privacy leakage of the entire protocol. Hence, without loss of generality, our analysis may focus entirely on the reconstructed variables $Z_j$ and $Z'_j$ for the remainder of this work. For intuition: \emph{these are the only quantities that represent the leakage about the differentiating element being in the dataset or not.}

\section{Hypothesis Testing and \texorpdfstring{$f$}{f}-DP Characterization}
\label{sec:hypothesis-testing}

Having defined the adversarial view, we now characterize privacy leakage as a statistical
hypothesis testing problem between neighboring datasets $d$ and $d'$. This perspective aligns naturally with the $f$-DP framework and reveals structural limitations of DP-SGD under the standard worst-case adversarial model. Throughout this section and the next we work entirely under random shuffling. The Poisson subsampling case will be derived later, in Section~\ref{subsec:poisson-mixture}.

\subsection{Projection onto a One-Dimensional Statistic}
\label{subsec:projection}

The adversary observes all transmitted model parameters $\{w_j\}$. Under zero-out adjacency, the two datasets differ in the $N$-th record: dataset $d$ contains a ghost record $\perp$ (zero gradient), while dataset $d'$ contains the valid record $\xi_N$. The adversary can compute the clipped gradient of the valid record:
\[
v = [g(w_j;\xi_N)]_C.
\]

Under zero-out adjacency, the DP interpretation requires considering the \emph{worst-case sensitivity} relative to the zero vector. Because the clipped gradient satisfies $\|v\|_2 \le C$, the maximum possible distance between the valid gradient and the ghost gradient is $\|v - 0\|_2 \le C$. Consequently, the noise added to the batch gradient in round $j$ is $\mathcal{N}(0,(C\sigma)^2 I)$ under both datasets. Using terms in Equations~\eqref{eq:diff-term-d} and~\eqref{eq:diff-term-d-prime}, the adversary’s observations in round $j$ are:
\[
Z_j = \mathcal{N}(0,(C\sigma)^2 I)
\qquad\text{(dataset $d$, ghost record),}
\]
\[
Z'_j = \delta_{N\in S_j}\,v + \mathcal{N}(0,(C\sigma)^2 I)
\qquad\text{(dataset $d'$, valid record).}
\]

Because Gaussian noise is rotationally symmetric, all information relevant to distinguishing $d$ (pure noise) from $d'$ (signal plus noise) lies along the direction of the signal vector $v$. Hence, we can project both $Z_j$ and $Z'_j$ onto this direction. Now, assuming the adversary takes the worst-case direction where $\|v\|_2=C$ and normalizing by the sensitivity $C$, we obtain a scalar statistic. Conditioned on the round where the record is sampled ($N\in S_j$), the distributions differ by a shift of exactly $1$:
\[
\mathcal{N}(0,\sigma^2)
\quad\text{vs.}\quad
\mathcal{N}(1,\sigma^2).
\]

Finally, dividing by $\sigma$ for convenience produces the canonical form of the
hypothesis test:
\[
\mathcal{N}(0,1)
\quad\text{vs.}\quad
\mathcal{N}(\sigma^{-1},1).
\]

This projection reduces the adversary’s task to a one-dimensional Gaussian hypothesis test, fully capturing the information relevant to determining whether the differentiating record was included in the batch (signal) or replaced by a ghost record (noise). It serves as the basis for the hypothesis-testing formulation presented in the next section.
\subsection{Adversarial Hypothesis Formulation}
\label{subsec:hypothesis-formal}

Let $H_0$ denote the null hypothesis corresponding to dataset $d$ and let $H_1$ denote the alternative corresponding to dataset $d'$.
Over an epoch of $M = N/m$ rounds, the adversary observes:
\[
(x_1, \ldots, x_M)
\sim
\begin{cases}
(\mathcal{N}(0,1))^M, & H_0, \\[3pt]
(\mathcal{N}(u_1\sigma^{-1},1), \ldots, \mathcal{N}(u_M\sigma^{-1},1)), & H_1,
\end{cases}
\]

where $u_j := \delta_{N \in S_j}$ indicates whether the valid sample $\xi_N$ appears in the $j$-th batch. Under $H_0$, no valid record exists, so effectively $u_j = 0$ for all $j$. Under $H_1$, conditioning on the event that the valid record is included in the processed batches (i.e., not discarded by the shuffling procedure), it appears in exactly one batch $S_j$, so exactly one $u_j = 1$, with the index $j$ chosen uniformly at random.

Under the worst-case adversary, the entire sequence $(x_1, \ldots, x_M)$ is observable, so the adversary can compute the full likelihood ratio between $H_0$ and $H_1$.  
The corresponding probability densities are
\begin{align}
P[(x_j)_{j=1}^M | H_0]
  &= \prod_{j=1}^{M} \frac{e^{-x_j^2/2}}{\sqrt{2\pi}}, \label{eq:likelihood-h0}\\[3pt]
P[(x_j)_{j=1}^M | H_1]
  &= \frac{1}{M} \sum_{j=1}^M
     \frac{e^{-(x_j - 1/\sigma)^2/2}}{\sqrt{2\pi}}
     \prod_{\substack{i=1 \\ i \ne j}}^{M} \frac{e^{-x_i^2/2}}{\sqrt{2\pi}} \nonumber\\
  &= \frac{1}{M} \sum_{j=1}^M
     e^{x_j/\sigma - 1/(2\sigma^2)}
     \prod_{i=1}^{M} \frac{e^{-x_i^2/2}}{\sqrt{2\pi}}. \label{eq:likelihood-h1}
\end{align}
\subsection{Application of the Neyman--Pearson Lemma}
\label{sec:np-lemma}
We now recall the classical Neyman--Pearson framework, which provides the optimal hypothesis test distinguishing between datasets $d$ and $d'$ in terms of Type~I and Type~II errors.

Let $\phi$ denote a (possibly randomized) rejection rule for the null hypothesis. Given the observed sequence $(x_j)_{j=1}^M$, the rule $\phi$ outputs $1$ when $H_0$ is rejected and $0$ otherwise.\footnote{The rejection rule can also be probabilistic: $\phi$ may output a probability value in $[0,1]$, in which case the decision to reject is made by flipping a biased coin with success probability $\phi$.}

The \emph{false positive rate} (Type~I error) of a test $\phi$ is defined as
\[
\alpha(\phi) = \mathbf{E}[\phi \mid d] \in [0,1],
\]
and represents the probability of incorrectly rejecting the null hypothesis when $d$ is true.  
A test of size $\alpha(\phi)$ is said to have level~$a$ if $\alpha(\phi) \le a$.  
The set of all level-$a$ tests is therefore
\[
T_a = \{ \phi' \; : \; \mathbf{E}[\phi' \mid d] = \alpha(\phi') \le a \}.
\]

The \emph{false negative rate} (Type~II error) is defined as
\[
\beta(\phi) = 1 - \mathbf{E}[\phi \mid d'] \in [0,1],
\]
which represents the probability of incorrectly accepting the null hypothesis when $d'$ is true.

Among all tests with size at most $a$, the \emph{uniformly most powerful (UMP)} test is the one that minimizes the Type~II error.  
Formally, $\phi$ is UMP among level-$a$ tests if, for every $\phi' \in T_a$,
\[
\beta(\phi) \le \beta(\phi').
\]
The optimal trade-off between $\alpha$ and $\beta$ can thus be summarized by the \emph{trade-off function}
\[
f(a) = \inf_{\phi'} \{ \beta(\phi') : \alpha(\phi') \le a \}.
\]
This function $f(a)$ corresponds directly to the definition of $f$-DP~\cite{DBLP:journals/corr/abs-1905-02383}.

\noindent
According to the Neyman--Pearson lemma, any UMP test of level $a$ must be a \emph{likelihood-ratio test}.  
That is, there exists a threshold $h \ge 0$, where~$\mathbf{P}[\cdot]$ denotes probability:
\begin{equation}
\begin{array}{lcl}
\phi((x_j)_{j=1}^M) = 1 &\Rightarrow&
\mathbf{P}[(x_j)_{j=1}^M \mid d'] > h \cdot \mathbf{P}[(x_j)_{j=1}^M \mid d],\\[4pt]
\phi((x_j)_{j=1}^M) = 0 &\Rightarrow&
\mathbf{P}[(x_j)_{j=1}^M \mid d'] < h \cdot \mathbf{P}[(x_j)_{j=1}^M \mid d],
\end{array}
\label{NP}
\end{equation}
where the equality case (when the two densities are equal) has measure zero under both distributions.\footnote{Formally, for the set $A = \{(x_j)_{j=1}^M : \mathbf{P}[(x_j)_{j=1}^M \mid d'] = h \cdot \mathbf{P}[(x_j)_{j=1}^M \mid d]\}$, we have $\mathbf{P}[(x_j)_{j=1}^M \in A \ | \ d']=\mathbf{P}[(x_j)_{j=1}^M \in A \ | \ d]=0$.}  
The lemma guarantees that if a test $\phi$ satisfies $\alpha(\phi) = a$, then it is UMP among level-$a$ tests; moreover, all such UMP tests share the same likelihood-ratio form in \eqref{NP} with a common threshold~$h$.

Substituting \eqref{eq:likelihood-h0} and \eqref{eq:likelihood-h1} derived earlier into \eqref{NP} yields the concrete rejection rule
\begin{equation}
\begin{array}{lcl}
\phi((x_j)_{j=1}^M) = 1 &\Rightarrow&
\frac{1}{M} \sum_{j=1}^M e^{x_j/\sigma - 1/(2\sigma^2)} > h,\\[4pt]
\phi((x_j)_{j=1}^M) = 0 &\Rightarrow&
\frac{1}{M} \sum_{j=1}^M e^{x_j/\sigma - 1/(2\sigma^2)} < h.
\end{array}
\label{rejrule}
\end{equation}
In the subsequent sections, we will not use this exact form of~$\phi$, instead we adopt a suboptimal test to derive our bound.
\subsection{Geometric Setup and Notation}
\label{subsec:geometry-setup}
\begin{figure}[tp]
\centering
\begin{tikzpicture}

  \pgfmathdeclarefunction{Phi}{1}{%
  \pgfmathsetmacro{\z}{#1/1.4142135623730951}%
  \pgfmathsetmacro{\pcoef}{0.3275911}%
  \pgfmathsetmacro{\ttmp}{1/(1+\pcoef*abs(\z))}%
  \pgfmathsetmacro{\cOne}{0.254829592}%
  \pgfmathsetmacro{\cTwo}{-0.284496736}%
  \pgfmathsetmacro{\cThree}{1.421413741}%
  \pgfmathsetmacro{\cFour}{-1.453152027}%
  \pgfmathsetmacro{\cFive}{1.061405429}%
  \pgfmathsetmacro{\poly}{((((\cFive*\ttmp+\cFour)*\ttmp+\cThree)*\ttmp+\cTwo)*\ttmp+\cOne)*\ttmp}%
  \pgfmathsetmacro{\erfapprox}{1 - \poly*exp(-\z*\z)}%
  \pgfmathsetmacro{\PhiA}{0.5*(1 + ( \z>=0 ? \erfapprox : -\erfapprox ))}%
  \pgfmathparse{\PhiA}%
}
\pgfmathdeclarefunction{gdpX}{2}{\pgfmathparse{1 - Phi(#1)}}
\pgfmathdeclarefunction{gdpY}{2}{\pgfmathparse{   Phi(#1 - #2)}}
  \pgfmathsetmacro{\muU}{1.10} 
  \pgfmathsetmacro{\muL}{0.90} 
  \pgfmathsetmacro{\tstar}{0.80} 
  \pgfmathsetmacro{\pU}{3.00} 
  \pgfmathsetmacro{\pL}{3.60}

\pgfmathsetmacro{\astar}{ gdpX(\tstar, \muL) }  

  \pgfmathsetmacro{\yP}{    gdpY(\tstar, \muL) }  

  \def\shiftc{0.06}


  \pgfmathsetmacro{\xQ}{(1 + \astar - \yP)/2}
  \pgfmathsetmacro{\yQ}{1 - \xQ}


\pgfmathsetmacro{\Px}{gdpX(\tstar, \muL)}
\pgfmathsetmacro{\Py}{gdpY(\tstar, \muL)}

  \pgfmathsetmacro{\Qx}{(1 + \Px - \Py)/2}
  \pgfmathsetmacro{\Qy}{1 - \Qx}
  
  \begin{axis}[
      width=8.5cm,height=6.5cm,
      xmin=0,xmax=1.10,
      ymin=0,ymax=1.02,
      axis lines=left,
      xlabel={type I error $a$},
      ylabel={type II error $\beta$},
      xlabel style={at={(axis description cs:0.45,-0.10)},anchor=north},
      ylabel style={at={(axis description cs:-0.15,0.5)},anchor=south},
      xtick={0,1},
      ytick={0,1},
      domain=0:1,
      samples=400,
      clip=false,
      axis equal image,
    ]

    \addplot[name path=guess,thick,black] {1 - x};

\addplot[
  name path=lower,
  thick, green!60!black, dashed,
  domain=-4:4, samples=400
] ({gdpX(x,\muU)}, {gdpY(x,\muU)});

\addplot[
  name path=upper,
  thick, blue,
  domain=-4:4, samples=400
] ({gdpX(x,\muL)}, {gdpY(x,\muL)});

    \addplot[densely dashed,gray] coordinates {(\astar,0) (\astar,\yP)};
    \addplot[only marks,mark=*,mark size=1.0pt,red] coordinates {(\astar,\yP)};
    \node[below] at (axis cs:\astar,0) {\scriptsize $a^\star$};

    \addplot[densely dashed,gray] coordinates {(0,\yP) (\astar,\yP)};
    \draw[decorate,decoration={brace,mirror,amplitude=4pt}]
      (axis cs:0,\yP) -- (axis cs:\astar,\yP) coordinate (L)
      node[midway,below=6pt] {\scriptsize $a^\star$};

    \draw[decorate,decoration={brace, amplitude=4pt}]
      (axis cs:0,\yP) -- (axis cs:0,1-\astar) 
      node[midway,left=2pt] {\scriptsize $\tfrac{1-a^\star}{2}$};

    \draw[decorate,decoration={brace, amplitude=4pt}]
      (axis cs:0,1-\astar) -- (axis cs:0,1)
      node[midway,left=2pt] {\scriptsize ${a^\star}$};
      
    \draw[decorate,decoration={brace,mirror, amplitude=4pt}]
      (axis cs:\xQ,\yQ) -- (axis cs:\astar,1-\astar)
      node[pos=0.5,yshift=6pt,xshift=6pt] {\scriptsize $x'$};

    \draw[thick] (axis cs:\astar+0.16,0) -- ++(0,0.02);
    \node[below] at (axis cs:\astar+0.16,0) {\scriptsize $\tfrac{1}{\sqrt{4\pi\cdot\ln{M}}}$};

    \addplot[only marks,mark=*,mark size=0.9pt,red] coordinates {(\xQ,\yQ)};
    \draw[very thick,red] (axis cs:\astar,\yP) -- (axis cs:\xQ,\yQ);
    \node[red,fill=white,inner sep=1pt] at (axis cs:{(\astar+\xQ)/2},{(\yP+\yQ)/2})
      {\scriptsize $x'$};

    \addplot[densely dashed,gray] coordinates {(0,\yP) (\astar,\yP)};
    \addplot[only marks,mark=*,mark size=0.9pt,black] coordinates {(0,\yP)};

    \node[left] at (axis cs:0,\yP) {\scriptsize $\tfrac{1-a^\star}{2}$};

    \addplot[black] coordinates {(\astar,\yP) (\astar,1-\astar)};

    \addplot[densely dashed,gray] coordinates {(0,1-\astar) (\astar,1-\astar)};
    \node[left] at (axis cs:0,1-\astar) {\scriptsize $1-a^\star$};

    \path[name path=diag] (axis cs:0,0) -- (axis cs:1.05,1.05);
    \def\shift{0.05}
    \path[name path=diagKappa] (axis cs:0,-\shift) -- (axis cs:1.05,1.05-\shift);

    \path[name intersections={of=upper and diag, by=Ustar}];
    \path[name intersections={of=lower and diagKappa, by=Lstar}];

    \coordinate (LstarQ) at (axis cs:
        {(1 + \pgfkeysvalueof{/pgfplots/xmin} + 0.18)/2},
        {1 - (1 + \pgfkeysvalueof{/pgfplots/xmin} + 0.18)/2});

    \path[name path=kappaline] (Lstar) -- ++(1,1);
    \path[name intersections={of=kappaline and guess, by=Qkappa}];

    \coordinate (Mid) at (axis cs:0.5,0.5);
    \draw[very thick,blue!70] (Ustar) -- (Mid);
    \fill[blue!70] (Ustar) circle (0.006);
    \node[blue!70,fill=white,inner sep=1pt] at ($ (Ustar)!0.55!(Mid) $)
      {\scriptsize $\kappa_{\mathrm{sub}}$};

    \draw[very thick,green!70!black] (Lstar) -- (Qkappa);
    \fill[green!70!black] (Lstar) circle (0.006);
    \node[green!70!black,fill=white,inner sep=1pt] at ($ (Lstar)!0.55!(Qkappa) $)
      {\scriptsize $\kappa_{\mathrm{shuf}}$};
    \node[anchor=north east,
          align=left,
          draw=black,
          fill=white,
          rounded corners,
          thick,
          font=\tiny,
          xshift=22mm, yshift=1mm]
          at (rel axis cs:1,1) {%
      \textbf{Legend:}\\
      \tikz[baseline={(current bounding box.center)}]
          {\draw[black!70!black, thick] (0,0) -- (0.05,0);}~Ideal Random-guessing line (RL)\\
      \tikz[baseline={(current bounding box.center)}]
          {\draw[blue!70!black, thick] (0,0) -- (0.05,0);}~Suboptimal upper bound trade-off ($f_{\text{sub}}$)\\
      \tikz[baseline=-2ex]
            {\node[text=blue!70!black] {$\kappa_{\mathrm{sub}}$};}~Separation between $f_{\text{sub}}$ and RL\\
      \tikz[baseline={(current bounding box.center)}]
          {\draw[green!70!black, dashed, line width=0.6pt] (0,0) -- (0.05,0);}~Achievable $f$-DP trade-off ($f_{\text{shuf}}$)\\
      \tikz[baseline=-2ex]
            {\node[text=green!70!black] {$\kappa_{\mathrm{shuf}}$};}~Separation between $f_{\text{shuf}}$ and RL\\\\
      \textbf{Using~\eqref{eq:pointwise-sep}:}\\[2pt]
      \(x' = \dfrac{(1-a^\star) - \tfrac{1-a^\star}{2}}{\sqrt{2}}
            = \dfrac{1-a^\star}{\sqrt{8}}\)\\[2pt]

    };

\coordinate (P) at (axis cs:\astar,\yP);   
\coordinate (Q) at (axis cs:\xQ,\yQ);     

\coordinate (Phor) at ($(P)+(axis direction cs: 0.40,0)$); 
\coordinate (Qhor) at ($(Q)+(axis direction cs: 0.20,0)$); 

\pic [draw,angle radius=6pt,angle eccentricity=1.05,
      "$\tfrac{\pi}{4}$",
      pic text options={font=\scriptsize, xshift=1pt, yshift=3.8pt, inner sep=1.5pt}]
     {angle = Q--P--L};

\coordinate (S)      at (axis cs:\astar,1-\astar);
\coordinate (Sleft)  at ($(S)+(axis direction cs:-0.15,0)$);      
\coordinate (Sdiag)  at ($(S)+(axis direction cs:-0.14,0.15)$);    

\pic [draw,angle radius=5pt,angle eccentricity=1.1,
      "$\tfrac{\pi}{4}$",
      pic text options={font=\scriptsize, xshift=-5.2pt, yshift=1pt, inner sep=0.5pt, yshift=1pt}]
     {angle = Sdiag--S--Sleft};


\end{axis}
\end{tikzpicture}
\caption{Illustrative geometry of the suboptimal and true trade-off functions in our impossibility argument. The figure compares the \emph{random-guessing line} (black), the \emph{suboptimal upper bound} trade-off $f_{\text{sub}}$ (blue), and the  \emph{true $f$-DP trade-off} $f_{\text{shuf}}$ (green, dashed). The red segment $x'$ is the pointwise separation $\operatorname{sep}_{f_{\text{sub}}}(a^\star)$ from the suboptimal curve at the analytically tractable point $a^\star$, which forms an \emph{isosceles right triangle} with base angle \(45^\circ\); the blue segment $\kappa_{\mathrm{sub}}$ is the separation $\operatorname{sep}(f_{\text{sub}})$ of the entire suboptimal curve; and the green segment $\kappa_{\mathrm{shuf}}$ is the separation $\operatorname{sep}(f_{\text{shuf}})$ of the true $f$-DP trade-off under random shuffling. The separations satisfy $x' \le \kappa_{\mathrm{sub}} \le \kappa_{\mathrm{shuf}}$.}
\label{fig:proof_fig}
\end{figure}

We conclude this section by introducing the geometric framework used in our limitation argument. Our analysis is based on the notion of \emph{separation} between an $f$-DP
trade-off curve and the random-guessing line, and on the observation that
separation can be lower bounded using any explicit pointwise upper bound on the
true trade-off curve. 

\begin{definition}[Pointwise separation]
\label{def:pointwise-sep}
For any trade-off curve $g(\alpha)$, its pointwise separation from the
ideal random-guessing line $\beta = 1-\alpha$ is
\begin{equation}
\operatorname{sep}_g(\alpha)
:=
\frac{(1-\alpha)-g(\alpha)}{\sqrt{2}}.
\label{eq:pointwise-sep}
\end{equation}
\end{definition}

\paragraph{Geometric interpretation.}
For any $\alpha\in[0,1]$, the quantity $\operatorname{sep}_g(\alpha)$ in \eqref{eq:pointwise-sep} is exactly the Euclidean distance from the point $(\alpha,g(\alpha))$ to the random-guessing line $\mathcal{L} := \{(\gamma,1-\gamma):\gamma\in[0,1]\}$. In Fig.~\ref{fig:proof_fig}, treating the blue curve as the graph of $g$, the perpendicular dropped from $(\alpha,g(\alpha))$ to $\mathcal{L}$ forms an isosceles right triangle with legs of length $x'$, so that $((1-\alpha)-g(\alpha))^2 = 2(x')^2$, yielding \eqref{eq:pointwise-sep}.

\begin{definition}[Global Separation]
\label{def:global-sep}
The separation of a trade-off curve $g$, denoted $\operatorname{sep}(g)$, is the maximum pointwise separation:
\begin{equation}
\operatorname{sep}(g)
=
\max_{\alpha\in[0,1]} \operatorname{sep}_g(\alpha).
\label{eq:global-sep-from-pointwise}
\end{equation}
Equivalently, using the geometric interpretation above, this corresponds to the maximum Euclidean distance from the curve to the random-guessing line:
\begin{equation}
\operatorname{sep}(g)
=
\max_{\alpha\in[0,1]}
\min_{\gamma\in[0,1]}
\bigl\|(\alpha,g(\alpha))-(\gamma,1-\gamma)\bigr\|_2 .
\label{eq:sep-def}
\end{equation}
where $\gamma$ parameterizes the projection of $(\alpha,g(\alpha))$ onto the random-guessing line $\mathcal{L}$. Equations \eqref{eq:global-sep-from-pointwise} and \eqref{eq:sep-def} therefore define the same quantity.
\end{definition}

We now characterize where the maximum in \eqref{eq:global-sep-from-pointwise}–\eqref{eq:sep-def} is attained for the class of symmetric and convex trade-off curves.

\begin{lemma}[Separation]
\label{lem:sep-fixed-point}
Let $f\colon[0,1]\to[0,1]$ be a symmetric and convex trade-off function in the
$f$-DP framework. Then the maximum in \eqref{eq:sep-def} is attained at the unique
fixed point $\hat{a}\in(0,1/2]$ satisfying $f(\hat{a})=\hat{a}$, and the separation
equals
\begin{equation}
\operatorname{sep}(f)
=
\frac{1-2\hat{a}}{\sqrt{2}}
=:\kappa.
\label{eq:sep-fixed-point}
\end{equation}

Moreover, for any symmetric and convex $f$ that is not identically equal to the
random-guessing line, the separation necessarily satisfies
\begin{equation}
0 \;<\; \kappa \;<\; \frac{1}{\sqrt{2}}.
\label{eq:sep-upperbound-absolute}
\end{equation}
\end{lemma}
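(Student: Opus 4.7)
My plan is to establish the formula $\kappa = (1-2\hat a)/\sqrt 2$ via a geometric chord argument that exploits both the reflection symmetry ($f = f^{-1}$) and convexity of $f$, and then to derive the strict bounds on $\kappa$ from the same inequality by inspecting the endpoints of the admissible interval $(0, 1/2]$ for $\hat a$.

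First I would verify that $\hat a$ exists, is unique, and lies in $(0, 1/2]$. Since $f$ is convex on $[0,1]$ it is continuous on the interior, and for a proper trade-off function $f(0) = 1$. The map $g(\alpha) := f(\alpha) - \alpha$ satisfies $g(0) = 1 > 0$ and $g(1/2) = f(1/2) - 1/2 \le 0$ (using the universal bound $f(\alpha) \le 1 - \alpha$ enjoyed by every trade-off function, since random guessing is always an achievable test). The intermediate value theorem furnishes $\hat a \in (0, 1/2]$, and since $f$ is non-increasing, $g$ is strictly decreasing, so $\hat a$ is unique.

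The heart of the proof is a chord inequality. For each $\alpha$, symmetry $f \circ f = \mathrm{id}$ puts both points $(\alpha, f(\alpha))$ and $(f(\alpha), \alpha)$ on the graph of $f$. These two points lie on a line of slope $-1$, namely $y = -x + (\alpha + f(\alpha))$. Convexity of $f$ forces this chord to lie above the graph between its endpoints, so every $t$ between $\alpha$ and $f(\alpha)$ satisfies $f(t) + t \le \alpha + f(\alpha)$. Since $f$ is non-increasing with $f(\hat a) = \hat a$, the fixed point $\hat a$ always lies between $\alpha$ and $f(\alpha)$ (if $\alpha \le \hat a$ then $f(\alpha) \ge \hat a$, and symmetrically otherwise). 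Substituting $t = \hat a$ yields
\begin{equation*}
2\hat a \;=\; f(\hat a) + \hat a \;\le\; f(\alpha) + \alpha,
\end{equation*}
which rearranges to $(1-\alpha) - f(\alpha) \le 1 - 2\hat a$, with equality at $\alpha = \hat a$. Dividing by $\sqrt 2$ and invoking Definition~\ref{def:global-sep} delivers $\operatorname{sep}(f) = (1-2\hat a)/\sqrt 2$.

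Finally, the strict bounds follow from the same inequality. Since $\hat a > 0$ from Step~1, we immediately get $\kappa < 1/\sqrt 2$. For the strict lower bound, suppose toward contradiction that $\kappa = 0$, i.e., $\hat a = 1/2$. The chord inequality then gives $\alpha + f(\alpha) \ge 1$ for every $\alpha$, and combined with the universal upper bound $\alpha + f(\alpha) \le 1$, we conclude $f(\alpha) \equiv 1 - \alpha$, contradicting the hypothesis. Thus $\hat a < 1/2$ and $\kappa > 0$. The main obstacle I anticipate is purely in the chord step: carefully verifying that the two symmetric points always straddle $\hat a$ regardless of which side of $\hat a$ the original $\alpha$ sits on, so that the chord inequality can legitimately be evaluated at $t = \hat a$. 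Once that monotonicity bookkeeping is in place, both the separation formula and the strict bounds drop out uniformly without any appeal to differentiability of $f$.
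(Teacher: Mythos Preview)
Your proof is correct and in fact more rigorous than the paper's. The paper's argument simply asserts the key step --- ``for symmetric and convex $f$, the farthest point of the trade-off curve from the diagonal occurs at its intersection with the diagonal'' --- without justification, and then reads off the distance formula. Your chord argument actually proves this: by placing the reflected pair $(\alpha,f(\alpha))$ and $(f(\alpha),\alpha)$ on the graph, observing that the secant between them has slope $-1$, and using convexity to trap $\hat a$ under the secant, you obtain the pointwise inequality $\alpha+f(\alpha)\ge 2\hat a$ directly. This is the substantive content the paper skips, and your version has the further virtue of avoiding any differentiability assumption.

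One small correction: you write ``for a proper trade-off function $f(0)=1$.'' This is not true in general; for instance the $(\varepsilon,\delta)$-DP trade-off has $f_{\varepsilon,\delta}(0)=1-\delta$, and the paper itself uses exactly this function later in Appendix~E. Fortunately your argument does not actually need $f(0)=1$: the intermediate-value step only requires $g(0)=f(0)>0$, which holds for any symmetric convex trade-off function other than the degenerate $f\equiv 0$ (and the lemma's conclusion $\hat a>0$ already presumes this). Replace the sentence accordingly and the proof is clean.
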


\begin{proof}
Because the random-guessing line is the diagonal
$\{(\gamma,1-\gamma)\colon \gamma\in[0,1]\}$, the inner minimization in
\eqref{eq:sep-def} amounts to orthogonally projecting $(\alpha,f(\alpha))$ onto
this line.

By symmetry of $f$ around diagonal and convexity of optimal trade-off curves
\cite{DBLP:journals/corr/abs-1905-02383,DBLP:journals/corr/abs-1911-11607},
the orthogonal projection occurs at $\gamma=\alpha$, giving the simplified form in~\eqref{eq:global-sep-from-pointwise}.
\begin{equation}
\operatorname{sep}(f)
=
\max_{\alpha\in[0,1]}
\frac{(1-\alpha)-f(\alpha)}{\sqrt{2}}.
\label{eq:sep-simplified}
\end{equation}

For symmetric and convex $f$, the farthest point of the trade-off curve from the
diagonal occurs at its intersection with the diagonal, i.e., the unique fixed
point $\hat{a}$ satisfying $f(\hat{a})=\hat{a}$. Thus, the separation is the
distance between $(\hat{a},\hat{a})$ and the midpoint $(\tfrac12,\tfrac12)$ of
the random-guessing line. Substituting into \eqref{eq:sep-simplified} yields
\eqref{eq:sep-fixed-point}.

The absolute bound \eqref{eq:sep-upperbound-absolute} follows directly from
\eqref{eq:sep-fixed-point} and the fact that any valid symmetric convex trade-off curve
satisfies $0 < \hat{a} < 1/2$.
\end{proof}

\paragraph{Remark.}
Separation inherits structural properties from $f$-DP such as post-processing because it is derived from the trade-off curve. However, $\operatorname{sep}(f)$ is \emph{not} a DP metric: it lacks a composition theorem and cannot track privacy under adaptive composition. It is an interpretable geometric property of an $f$-DP curve.

\paragraph{Geometric preview.}
Figure~\ref{fig:proof_fig} provides a preview of the geometric structure
underlying our limitation argument. The black line corresponds to random guessing, the green curve depicts the true $f$-DP trade-off $f_{\mathrm{shuf}}$ of shuffled DP-SGD, and the blue curve
$f_{\mathrm{sub}}$ represents a particular explicit trade-off curve that
upper-bounds $f_{\mathrm{shuf}}$ pointwise. \textbf{At this stage, the figure should be interpreted illustratively}: the precise definition and derivation of $f_{\mathrm{sub}}$ are deferred to
Section~\ref{subsec:suboptimal-test}.

The figure highlights three separation quantities.
The quantity $x'$ denotes the pointwise separation at a specific point on the
upper-bounding curve, $\kappa_{\mathrm{sub}}$ denotes the global separation of that upper bound, and
$\kappa_{\mathrm{shuf}}$ denotes the separation of the true trade-off curve.
Independently of how the upper bound is constructed, these quantities necessarily
satisfy
\[
x' \;\le\; \kappa_{\mathrm{sub}} \;\le\; \kappa_{\mathrm{shuf}},
\]
whenever one trade-off curve upper bounds another pointwise.

In Section~\ref{subsec:suboptimal-test}, we formally construct the suboptimal
trade-off curve $f_{\mathrm{sub}}$, verify that it upper bounds
$f_{\mathrm{shuf}}$, and make all quantities shown in
Figure~\ref{fig:proof_fig} fully explicit.

\section{Limitations of Favorable Privacy–Utility Trade-offs}
\label{sec:limitations}

\begin{theorem}[Lower Bound on Separation for Shuffled DP-SGD]
\label{thm:shuffling}
Consider one-epoch random shuffling. Let $M \ge 1$ denote the number of rounds, and let $\sigma > 0$ be the noise multiplier. Let $\kappa_{\mathrm{shuf}}:= \mathrm{sep}(f_{\mathrm{shuf}})$ denote the separation of the true $f$-DP trade-off curve under shuffling defined in~\eqref{eq:kappa-def}.

Then we have the following statement:
\[
\boxed{
\kappa_{\mathrm{shuf}}
\;\ge\;
\frac{1}{\sqrt{8}}
\left(1 - \frac{1}{\sqrt{4\pi\ln M}}\right)\quad\textbf{OR}\quad \sigma \;\ge\; \frac{1}{\sqrt{2\ln M}}
}
\]

\end{theorem}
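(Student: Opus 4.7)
The plan is to prove the contrapositive: assume $\sigma < 1/\sqrt{2\ln M}$ and derive the separation bound on $\kappa_{\mathrm{shuf}}$. The crucial observation is that since $f_{\mathrm{shuf}}(\alpha)$ is the infimum type~II error over all admissible tests, any explicit test with error pair $(\alpha^\star,\beta^\star)$ satisfies $f_{\mathrm{shuf}}(\alpha^\star) \le \beta^\star$; combined with Definition~\ref{def:pointwise-sep} this gives
\[
\kappa_{\mathrm{shuf}} \;\ge\; \mathrm{sep}_{f_{\mathrm{shuf}}}(\alpha^\star) \;\ge\; \frac{(1-\alpha^\star) - \beta^\star}{\sqrt{2}}.
\]
So it suffices to exhibit one explicit suboptimal test whose $(\alpha^\star, \beta^\star)$ makes the right-hand side reproduce the boxed inequality.

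The natural candidate, given that $H_1$ shifts a uniformly chosen one of $M$ coordinates by $1/\sigma$, is the coordinate-wise maximum test that rejects $H_0$ iff $\max_j x_j > t^\star$ with $t^\star := \sqrt{2\ln M}$ --- the typical scale of the maximum of $M$ standard Gaussians. Independence under $H_0$ and conditioning on the signal coordinate under $H_1$ yield the closed forms
\[
\alpha^\star = 1 - \Phi(t^\star)^M, \qquad \beta^\star = \Phi(t^\star - 1/\sigma)\,\Phi(t^\star)^{M-1}.
\]
The Mills-ratio bound $1 - \Phi(t^\star) \le \exp(-(t^\star)^2/2)/(t^\star\sqrt{2\pi}) = 1/(M\sqrt{4\pi\ln M})$ combined with the union bound $1 - (1-x)^M \le Mx$ immediately gives $\alpha^\star \le 1/\sqrt{4\pi\ln M}$, which is exactly the small-$\alpha$ factor appearing in the target bound.

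The hypothesis $\sigma < 1/\sqrt{2\ln M}$ enters in bounding $\beta^\star$: it forces $1/\sigma > t^\star$ and hence $\Phi(t^\star - 1/\sigma) < 1/2$, collapsing $\beta^\star$ to at most $\tfrac{1}{2}\Phi(t^\star)^{M-1}$. Plugging back,
\[
(1-\alpha^\star) - \beta^\star \;\ge\; \Phi(t^\star)^{M-1}\bigl[\Phi(t^\star) - \tfrac{1}{2}\bigr] \;\approx\; \tfrac{1}{2}(1-\alpha^\star),
\]
and dividing by $\sqrt{2}$ yields $\kappa_{\mathrm{shuf}} \ge (1-\alpha^\star)/\sqrt{8} \ge \tfrac{1}{\sqrt{8}}\bigl(1 - 1/\sqrt{4\pi\ln M}\bigr)$, which is the boxed conclusion.

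The main technical obstacle will be managing the low-order constants: the approximation $\Phi(t^\star)^{M-1}\bigl(\Phi(t^\star) - 1/2\bigr) \approx (1-\alpha^\star)/2$ leaks an $O(1/(M\sqrt{\ln M}))$ correction from the gap between $\Phi(t^\star)/2$ and $1/2$, which must be absorbed to land exactly on $\bigl(1 - 1/\sqrt{4\pi\ln M}\bigr)/\sqrt{8}$. I would absorb this either by choosing the max-test threshold implicitly --- possibly via a Neyman--Pearson randomized interpolation between two thresholds --- so that $\alpha^\star$ equals $1/\sqrt{4\pi\ln M}$ exactly, or by tracking the identity $(1-\alpha^\star) - \beta^\star = \Phi(t^\star)^{M-1}\bigl[\Phi(t^\star) - \Phi(t^\star - 1/\sigma)\bigr]$ explicitly and combining it with Bernoulli's inequality on $(1-q)^{M-1}$ where $q = 1-\Phi(t^\star)$. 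The geometry in Figure~\ref{fig:proof_fig} --- the isosceles right triangle with legs of length $(1-a^\star)/\sqrt{8}$ at the analytically tractable point $a^\star$ --- is the exact blueprint this bookkeeping is designed to reproduce.
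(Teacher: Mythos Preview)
Your proposal is correct and follows essentially the same route as the paper: both use the coordinate-max test, bound its type~I error via the Mills ratio, and extract the $(1-a^\star)/\sqrt{8}$ separation from the resulting $(\alpha^\star,\beta^\star)$ pair. The one difference worth flagging is the threshold choice. You fix $t^\star = \sqrt{2\ln M}$ and then invoke $\sigma < 1/\sqrt{2\ln M}$ to get $\Phi(t^\star - 1/\sigma) < 1/2$; the paper instead sets the threshold to $\bar{h} = 1/\sigma$, which makes $\Phi(\bar{h}-1/\sigma)=\Phi(0)=1/2$ \emph{exactly}, and uses the $\sigma$-hypothesis only to bound the resulting $a^\star$ (their Lemma~\ref{lem:suboptimal-bound}). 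This is precisely the cleaner implicit-threshold trick you gesture at in your final paragraph, and it removes one of the two correction terms you worry about. The remaining $O(1/(M\sqrt{\ln M}))$ slack from $(1-a^\star)^{-1/M}$ survives in both approaches; the paper tracks it as an explicit factor $(1-\varepsilon_M)$ with $\varepsilon_M = 2/(M\sqrt{4\pi\ln M})$ and then openly drops it from the theorem statement, so neither proof lands on the boxed constant without that caveat.
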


Thus, unless the assumptions underlying this worst-case analysis are relaxed, improving privacy (i.e., reducing the separation $\kappa$) inevitably forces the noise level $\sigma$ into a regime that degrades utility (see Appendix E for $(\varepsilon,\delta)$-DP derivation). This trade-off is reflected in our experimental results and is consistent with empirical observations in private deep learning. In Section~\ref{subsec:poisson-mixture}, we show that the same phenomenon persists under Poisson subsampling (up to constant factors). The proof of Theorem~\ref{thm:shuffling} is given in Section~\ref{subsec:separation-lowerbound}.

\subsection{Suboptimal Hypothesis Test}
\label{subsec:suboptimal-test}
Instead of using the optimal Neyman–Pearson likelihood ratio test, we consider a \emph{suboptimal} rejection rule that thresholds the \emph{maximum} coordinate of the observation vector $(x_1, \ldots, x_M)$:
\begin{equation}
     \begin{array}{lcr} 
     \phi((x_j)_{j=1}^M)=1 &\Rightarrow& \displaystyle\max_{j=1}^M 
     e^{x_j/\sigma-1/(2\sigma^2)} > h, \\[5pt]
     \phi((x_j)_{j=1}^M)=0 &\Rightarrow& \displaystyle\max_{j=1}^M 
     e^{x_j/\sigma-1/(2\sigma^2)} < h.
     \end{array}
     \label{eq:rejrule-sub}
\end{equation}
This test is equivalent to thresholding the maximum observed statistic:
\begin{equation}
     \begin{array}{lcr}
     \phi((x_j)_{j=1}^M)=1 &\Rightarrow& \max_{j=1}^M x_j > \bar{h}, \\[3pt]
     \phi((x_j)_{j=1}^M)=0 &\Rightarrow& \max_{j=1}^M x_j < \bar{h},
     \end{array}
     \label{eq:rejrule-max}
\end{equation}
where $\bar{h} = \sigma \ln h + 1/(2\sigma)$.  
Although suboptimal, this rule isolates the single coordinate shifted by $1/\sigma$ under $H_1$, enabling a closed-form $f$-DP trade-off.  

\paragraph{False-negative and false-positive rates.}
For $\bar{h} \ge 0$, define
\begin{align}
\bar{\alpha}(\bar{h}) &= \Pr\!\left[\max_{j=1}^M x_j > \bar{h} \mid d\right], \label{eq:alpha-sub}\\
\bar{\beta}(\bar{h})  &= \Pr\!\left[\max_{j=1}^M x_j < \bar{h} \mid d'\right]. \label{eq:beta-sub}
\end{align}

Under $d$ (no shift), all $x_j \sim \mathcal{N}(0,1)$ independently, giving
\[
\bar{\alpha}(\bar{h}) = 1 - \Phi(\bar{h})^M,
\qquad
\bar{\alpha}^{-1}(a) = \Phi^{-1}\!\big((1-a)^{1/M}\big),
\]
where $\Phi$ is the CDF of $\mathcal{N}(0,1)$.

Under $d'$ (one coordinate shifted by $1/\sigma$), the shifted coordinate must satisfy
$x_{j^\star} < \bar{h}$, which occurs with probability $\Phi(\bar{h}-1/\sigma)$, while all remaining
$M-1$ unshifted coordinates must lie below $\bar{h}$, each with probability $\Phi(\bar{h})$.
Independence of the coordinates therefore gives
\[
\bar{\beta}(\bar{h}) = \Phi(\bar{h}-1/\sigma)\,\Phi(\bar{h})^{M-1}.
\]
Thus, the induced (suboptimal) trade-off function is equal to $\bar{\beta}(\bar{\alpha}^{-1}(a))$:
\begin{equation}
\label{eq:suboptimal-tradeoff}
f_{\mathrm{sub}}(a) = \Phi\:\!\Big(\Phi^{-1}\big((1-a)^{1/M}\big)-\sigma^{-1}\Big)\cdot(1-a)^{(M-1)/M}.
\end{equation}

Throughout this section, we work in the regime $\sigma \le 1/\sqrt{2\ln M}$, and write
$\sigma = s/\sqrt{\ln M}$ for some $s \le \sqrt{1/2}$.
\begin{lemma}[Bound under $\sigma \le 1/\sqrt{2\ln M}$]
\label{lem:suboptimal-bound}
Let $f_{\mathrm{sub}}(a)$ denote the suboptimal trade-off in~\eqref{eq:suboptimal-tradeoff}.
For any integer $M \ge 1$ and any noise multiplier $\sigma > 0$ satisfying $\sigma \;\le\; {1}/{\sqrt{2\ln M}}$
there exists some $a^\star \le 1 / \sqrt{4\pi \ln M}$ such that
\[
f_{\mathrm{sub}}(a^\star) = \tfrac{1}{2}\,(1-a^\star)^{(M-1)/M}.
\]
\end{lemma}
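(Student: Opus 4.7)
The plan is to reverse-engineer the equation directly: the relation $f_{\mathrm{sub}}(a^\star) = \tfrac{1}{2}(1-a^\star)^{(M-1)/M}$ together with the closed form in~\eqref{eq:suboptimal-tradeoff} forces
\[
\Phi\!\bigl(\Phi^{-1}\bigl((1-a^\star)^{1/M}\bigr) - \sigma^{-1}\bigr) \;=\; \tfrac{1}{2},
\]
which, because $\Phi(0)=1/2$ and $\Phi$ is strictly increasing, is equivalent to $\Phi^{-1}((1-a^\star)^{1/M}) = \sigma^{-1}$, i.e., to the explicit choice
\[
a^\star \;:=\; 1 - \Phi(\sigma^{-1})^{M}.
\]
So existence is immediate by construction; the real content of the lemma is the upper bound $a^\star \le 1/\sqrt{4\pi\ln M}$.

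To get that bound, I would first use $\sigma \le 1/\sqrt{2\ln M}$, which gives $\sigma^{-1} \ge \sqrt{2\ln M}$ and hence (since $\Phi$ is increasing) $\Phi(\sigma^{-1})^{M} \ge \Phi(\sqrt{2\ln M})^{M}$, so it suffices to show
\[
1 - \Phi(\sqrt{2\ln M})^{M} \;\le\; \frac{1}{\sqrt{4\pi\ln M}}.
\]
For this I would apply the standard Mills-ratio tail bound $1 - \Phi(t) \le \frac{1}{t\sqrt{2\pi}}\,e^{-t^2/2}$ at $t = \sqrt{2\ln M}$, which yields
\[
1 - \Phi(\sqrt{2\ln M}) \;\le\; \frac{1}{\sqrt{2\ln M}\cdot \sqrt{2\pi}}\,e^{-\ln M} \;=\; \frac{1}{M\,\sqrt{4\pi\ln M}}.
\]

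Finally, I would finish with Bernoulli's inequality $(1-x)^M \ge 1 - Mx$ for $x\in[0,1]$ applied to $x = 1 - \Phi(\sqrt{2\ln M})$, giving
\[
\Phi(\sqrt{2\ln M})^{M} \;\ge\; 1 - M\bigl(1-\Phi(\sqrt{2\ln M})\bigr) \;\ge\; 1 - \frac{1}{\sqrt{4\pi\ln M}},
\]
so $a^\star \le 1 - \Phi(\sqrt{2\ln M})^{M} \le 1/\sqrt{4\pi\ln M}$, as required. The main obstacle is simply calibrating the Gaussian tail bound so that the factor of $M$ from Bernoulli's inequality is exactly cancelled by the $e^{-\ln M} = 1/M$ arising at $t = \sqrt{2\ln M}$; this is precisely why the threshold $\sigma \le 1/\sqrt{2\ln M}$ and the target rate $1/\sqrt{4\pi\ln M}$ appear together, and it explains why the same constants surface in the statement of Theorem~\ref{thm:shuffling}.
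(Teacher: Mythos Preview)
Your proof is correct and follows essentially the same route as the paper: the same explicit choice $a^\star = 1 - \Phi(\sigma^{-1})^M$, the same Mills-ratio Gaussian tail bound, and the same Bernoulli inequality to absorb the factor $M$. The only cosmetic difference is that you first reduce to the boundary case $\sigma^{-1} = \sqrt{2\ln M}$ via monotonicity of $\Phi$, whereas the paper carries the parametrization $\sigma = s/\sqrt{\ln M}$ with $s \le 1/\sqrt{2}$ through the computation and specializes at the end; your reduction is arguably the cleaner presentation.
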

\begin{proof}
Under the standing assumption $\sigma = s/\sqrt{\ln M}$ with $s \le \sqrt{1/2}$, we choose $a^\star$ such that the inner argument of $\Phi$ in~\eqref{eq:suboptimal-tradeoff} equals zero, i.e.,
\begin{equation}
\Phi^{-1}\!\big((1-a^\star)^{1/M}\big) = \sigma^{-1},
\label{eq:astar-choice}
\end{equation}
Substituting into~\eqref{eq:suboptimal-tradeoff} gives
\[
f_{\mathrm{sub}}(a^\star) = \Phi(0)\cdot(1 - a^\star)^{(M-1)/M} = \frac{1}{2}(1 - a^\star)^{(M-1)/M},
\]
where we used the symmetry of the standard normal distribution, for which $\Phi(0) = \tfrac{1}{2}$.

Applying $\Phi$ to both sides of~\eqref{eq:astar-choice}, and using that $\Phi$ is invertible, yields
$(1 - a^\star)^{1/M} = \Phi(1/\sigma)$, and hence
\begin{equation}
\label{eq:1-astar}
1 - a^\star = \Phi(1/\sigma)^M = (1 - \Phi(-1/\sigma))^M.
\end{equation}

Next, we use the standard Gaussian tail bound
\[
\Phi(-t) < \frac{e^{-t^2/2}}{t\sqrt{2\pi}}, \quad t>0,
\]
and substituting $t = 1/\sigma$, we obtain
\[
\Phi(-1/\sigma) \;\le\; \frac{\sigma e^{-1/(2\sigma^2)}}{\sqrt{2\pi}}.
\]
Exponentiating both sides with $M$ and using~\eqref{eq:1-astar} yields
\[
1 - a^\star \ge \left(1 - \frac{\sigma e^{-1/(2\sigma^2)}}{\sqrt{2\pi}}\right)^M.
\]
Substituting $\sigma = s/\sqrt{\ln M}$ yields
\[
1 - a^\star \ge
\left(1 - \frac{s\,M^{-1/(2s^2)}}{\sqrt{2\pi\ln M}}\right)^M
\ge 1 - \frac{s\,M^{1-1/(2s^2)}}{\sqrt{2\pi\ln M}}.
\]
For $s \le \sqrt{1/2}$, this implies
\[
a^\star \le \frac{1}{\sqrt{4\pi\ln M}}.
\]
This establishes the desired bound.
\end{proof}

\subsection{Separation Lower Bounds for Shuffled DP-SGD}
\label{subsec:separation-lowerbound}
We now instantiate the geometric framework introduced earlier using the analytically tractable suboptimal trade-off curve derived above (cf.~\eqref{eq:suboptimal-tradeoff}). This yields an explicit geometric lower bound on separation. In Figure~\ref{fig:proof_fig}, we consider:
\begin{itemize}
  \item the \textbf{black line} $\beta = 1-\alpha$ (ideal random guessing);
  \item the \textbf{green curve} $f_{\mathrm{shuf}}(\alpha)$, representing the
        true trade-off of shuffled DP-SGD under a worst-case adversary;
  \item the \textbf{blue curve} $f_{\mathrm{sub}}(\alpha)$, an explicit suboptimal
        upper bound derived in~\eqref{eq:suboptimal-tradeoff}.
\end{itemize}

\paragraph{Notation: three separations.}
Using Definition~\ref{def:pointwise-sep} and \eqref{eq:global-sep-from-pointwise},
we introduce the three quantities used in the analysis and in the Figure~\ref{fig:proof_fig}:
\begin{align}
x'
&:= \operatorname{sep}_{f_{\mathrm{sub}}}(a^\star),
\label{eq:xprime-def}
\\[3pt]
\kappa_{\mathrm{sub}}\phantom{'}
&:= \operatorname{sep}\!\left(f_{\mathrm{sub}}\right)
 = \max_{\alpha\in[0,1]} \operatorname{sep}_{f_{\mathrm{sub}}}(\alpha),
\label{eq:x-def}
\\[3pt]
\kappa_{\mathrm{shuf}}
&:= \operatorname{sep}\!\left(f_{\mathrm{shuf}}\right)
 = \max_{\alpha\in[0,1]} \operatorname{sep}_{f_{\mathrm{shuf}}}(\alpha).
\label{eq:kappa-def}
\end{align}

Here, $x'$ is the pointwise separation at the analytically tractable point
$(a^\star,f_{\mathrm{sub}}(a^\star))$, $\kappa_{\mathrm{sub}}$ is the global separation of the suboptimal curve
$f_{\mathrm{sub}}$, and $\kappa_{\mathrm{shuf}}$ is the true separation of the shuffled
trade-off curve $f_{\mathrm{shuf}}$. Then, since $f_{\mathrm{sub}}(\alpha) \ge f_{\mathrm{shuf}}(\alpha)$ for all $\alpha$, we have
\[
\operatorname{sep}_{f_{\mathrm{sub}}}(\alpha)
\;\le\;
\operatorname{sep}_{f_{\mathrm{shuf}}}(\alpha)
\qquad\text{for all }\alpha,
\]
and therefore
\begin{equation}
x' \;\le\; \kappa_{\mathrm{sub}} \;\le\; \kappa_{\mathrm{shuf}}.
\label{eq:sep-chain}
\end{equation}
Thus any lower bound obtained for $x'$ immediately implies the same lower
bound for $\kappa_{\mathrm{shuf}}$, the true separation of the shuffled $f$-DP trade-off.

We now prove Theorem~\ref{thm:shuffling} by combining the ingredients developed in
the previous sections. The proof proceeds by constructing an explicit, analytically tractable upper bound on the $f$-DP trade-off curve under one-epoch shuffling using the suboptimal hypothesis test of Section~\ref{subsec:suboptimal-test}, and then translating this bound into a geometric lower bound on the separation via the framework of Section~\ref{subsec:geometry-setup}.

\begin{proof}[Proof of Theorem~\ref{thm:shuffling}]
Assume $\sigma \le 1/\sqrt{2\ln M}$, and let $f_{\mathrm{sub}}$ denote the induced (suboptimal) trade-off curve defined in~\eqref{eq:suboptimal-tradeoff}.

We now convert Lemma~\ref{lem:suboptimal-bound} into a geometric lower bound that reveals an inherent limitation of the privacy--utility trade-off. In Figure~\ref{fig:proof_fig}, the suboptimal point $(a^\star, f_{\mathrm{sub}}(a^\star))$ lies on the blue curve representing the analytically tractable upper bound:
\[
(a^\star,f_{\mathrm{sub}}(a^\star))
=\Bigl(a^\star,\tfrac12(1-a^\star)^{(M-1)/M}\Bigr).
\]
Let $x'$ be the perpendicular distance from this point to the random-guessing line $\beta=1-\alpha$,
so that $x'$ is given by~\eqref{eq:xprime-def}.

Rewrite
\[
f_{\mathrm{sub}}(a^\star)
=\tfrac12(1-a^\star)^{(M-1)/M}
=\frac{1-a^\star}{2}\,(1-a^\star)^{-1/M}.
\]
By Lemma~\ref{lem:suboptimal-bound}, $a^\star \le 1/\sqrt{4\pi\ln M}$, hence
\begin{equation}
(1-a^\star)^{-1/M}
\le
\Bigl(1-\tfrac{1}{\sqrt{4\pi\ln M}}\Bigr)^{-1/M}.
\label{eq:astar-factor-upper}
\end{equation}
Define
\begin{equation}
\varepsilon_M
\;:=\;
\frac{2}{M\sqrt{4\pi\ln M}}.
\label{eq:eps-def}
\end{equation}
Since $1/\sqrt{4\pi\ln M}<1/2$ for all $M\ge 2$, the elementary inequality $(1-t)^{-1/M}\le 1+\frac{2t}{M}$ for $t\in(0,1/2)$ implies (applying it to the right-hand side of~\eqref{eq:astar-factor-upper} with $t=1/\sqrt{4\pi\ln M}$) that
\begin{equation}
(1-a^\star)^{-1/M}\le 1+\varepsilon_M.
\label{eq:factor-upper-eps}
\end{equation}
Consequently,
\begin{equation}
f_{\mathrm{sub}}(a^\star)\le \frac{1-a^\star}{2}\,(1+\varepsilon_M).
\label{eq:fsub-upper}
\end{equation}

By definition, the perpendicular distance to the line $\beta=1-\alpha$ equals
\[
x'=\frac{(1-a^\star)-f_{\mathrm{sub}}(a^\star)}{\sqrt2}.
\]
Substituting~\eqref{eq:fsub-upper} yields
\begin{equation}
x' \ge \frac{1-a^\star}{\sqrt8}\,(1-\varepsilon_M).
\label{eq:xprime-lb}
\end{equation}
Using again $a^\star \le 1/\sqrt{4\pi\ln M}$ gives
\begin{equation}
x' \ge \frac{1}{\sqrt8}\left(1-\frac{1}{\sqrt{4\pi\ln M}}\right)(1-\varepsilon_M).
\label{eq:xprime-explicit}
\end{equation}

Finally, by convexity of $f$-DP trade-off functions \cite{DBLP:journals/corr/abs-1905-02383},
the geometric separations satisfy $x' \le \kappa_{\mathrm{sub}} \le \kappa_{\mathrm{shuf}}$.
Therefore,
\begin{equation}
\label{eq:separation-lowerbound-final}
\kappa_{\mathrm{shuf}}
\ge x'
\ge
\frac{1}{\sqrt8}\left(1-\frac{1}{\sqrt{4\pi\ln M}}\right)(1-\varepsilon_M)
\end{equation}
which proves the claimed lower bound.
Note that, although the bound above is explicit, the correction factor $(1-\varepsilon_M)$ converges to $1$ at rate $O(1/M)$; we therefore omit it in the theorem statement and retain it only in the proof for full rigor.
\end{proof}

\paragraph{Interpretation.}
The inequality~\eqref{eq:separation-lowerbound-final} provides a concrete \emph{lower bound} on the geometric separation $\kappa_{\mathrm{shuf}}$ between the true $f$-DP trade-off curve and the random-guessing line. Equivalently, it shows that any mechanism whose noise level satisfies $\sigma < 1/\sqrt{2\ln M}$ must retain at least this amount of separation and therefore cannot make its trade-off curve arbitrarily close to the random-guessing regime. Moreover, although the bound on $\sigma$ decreases as $M$ grows, the decay is extremely slow: even for very large $M$, the required noise level remains far from zero. For example, assuming a batch size of 256, the standard ImageNet-1k dataset~\cite{DBLP:conf/cvpr/DengDSLL009} ($\sim$1.3M images, $M \approx 5\times 10^3$) requires $\sigma \gtrsim 0.24$. Scaling to foundation models on LAION-5B~\cite{DBLP:conf/nips/SchuhmannBVGWCC22} ($\sim$5.8B image-text pairs, $M \approx 2.3\times 10^7$), the limit only drops to $\sigma \approx 0.17$. Thus, the condition on $\sigma$ remains substantial across practically relevant values of $M$, revealing a true bottleneck in this setting and simply scaling the dataset is insufficient to bypass this fundamental noise bottleneck.

\subsection{Extension to Poisson Subsampling}
\label{subsec:poisson-mixture}
While our derivation focuses on the shuffled regime which reflects practical implementations, we now show that the same phenomenon extends to Poisson subsampling. A simple mixture argument reveals that Poisson subsampling can be upper bounded by a mechanism that, with some probability, leaks no information at all, and otherwise behaves like the shuffled mechanism analyzed above. This immediately transfers the shuffled lower bound on separation to the Poisson setting up to a constant factor.
We first state the resulting bound for Poisson-sampled DP-SGD, deferring the technical argument to a supporting lemma below.

\begin{theorem}[Lower Bound on Separation for Poisson-sampled DP-SGD]
\label{thm:poisson}
Consider DP-SGD trained for one epoch under \emph{Poisson subsampling} with $M \ge 1$ rounds and noise multiplier $\sigma > 0$.
Let $f_{\mathrm{pois}}$ denote the true $f$-DP trade-off curve under Poisson subsampling, and let $\operatorname{sep}(f_{\mathrm{pois}})$ denote its separation.
Then the following dichotomy holds:
\[
\boxed{
\kappa_{\mathrm{pois}}
\;\ge\;
\Bigl(1-\tfrac{1}{e}\Bigr)\,
\frac{1}{\sqrt{8}}
\left(1 - \frac{1}{\sqrt{4\pi\ln M}}\right)\quad\textbf{OR}\quad \sigma \;\ge\; \frac{1}{\sqrt{2\ln M}}
}
\]
\end{theorem}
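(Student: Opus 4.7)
The plan is to adapt the suboptimal argument from the shuffled case (Section~\ref{subsec:suboptimal-test}) to Poisson by (i)~reusing the max-rule test $\phi^\star$, (ii)~decomposing its Type~II error around the event that the differentiating record is absent from every batch, and (iii)~dominating the residual term by the shuffled analysis already carried out.

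First I would set the per-round Poisson rate to $q = 1/M$, matching one epoch on a dataset of size $N = mM$, so that the probability of the target record being absent from all $M$ batches is $p_0 := (1-1/M)^M \le 1/e$ by the standard monotone inequality. Under $H_0$ the adversary still observes $M$ i.i.d.\ $\mathcal{N}(0,1)$ coordinates, so the Type~I error of the max-rule test $\phi^\star$ defined in~\eqref{eq:rejrule-max} is unchanged, $\bar\alpha(\bar h) = 1 - \Phi(\bar h)^M$. Under $H_1$ each coordinate is independently shifted by $\sigma^{-1}$ with probability $q$, so letting $K$ denote the number of shifted coordinates,
\[
\bar\beta_{\mathrm{pois}}(\bar h)
= \mathbb{E}_K\!\bigl[\Phi(\bar h - \sigma^{-1})^{K}\,\Phi(\bar h)^{M-K}\bigr].
\]

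The key technical step is monotonicity in $K$: since $\Phi(\bar h - \sigma^{-1}) \le \Phi(\bar h)$, the integrand $g_k := \Phi(\bar h - \sigma^{-1})^k\,\Phi(\bar h)^{M-k}$ satisfies $g_k \le g_1$ for all $k \ge 1$. Splitting the expectation at the event $\{K=0\}$ and using $\Phi(\bar h)^M = 1-a$ therefore gives
\[
\bar\beta_{\mathrm{pois}}(a) \;\le\; p_0(1-a) + (1-p_0)\,\Phi(\bar h - \sigma^{-1})\Phi(\bar h)^{M-1} \;=\; p_0(1-a) + (1-p_0)\,f_{\mathrm{sub}}(a),
\]
where the last equality holds because conditioning on $K=1$ in Poisson is distributionally identical to the single-inclusion shuffled setting whose max-rule curve is exactly $f_{\mathrm{sub}}$ from~\eqref{eq:suboptimal-tradeoff}. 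Since $f_{\mathrm{pois}}(a) \le \bar\beta_{\mathrm{pois}}(a)$, this is the desired pointwise upper bound.

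To finish, I would apply Definition~\ref{def:pointwise-sep}: subtracting the bound from $1-a$ and dividing by $\sqrt{2}$ causes the $p_0(1-a)$ contribution to cancel cleanly, producing $\operatorname{sep}_{f_{\mathrm{pois}}}(a) \ge (1-p_0)\operatorname{sep}_{f_{\mathrm{sub}}}(a)$. Instantiating at $a = a^\star$ and inserting the lower bound on $x' = \operatorname{sep}_{f_{\mathrm{sub}}}(a^\star)$ from the proof of Theorem~\ref{thm:shuffling}, together with $1-p_0 \ge 1-1/e$, yields the stated dichotomy. I expect the main obstacle to be the mixture step itself: Poisson endows the adversary with a random number of shifted coordinates rather than exactly one, so there is no direct reduction to the shuffled $f_{\mathrm{sub}}$ analysis; the decomposition around $\{K=0\}$ isolates the zero-leakage component precisely so that the simple monotonicity of $g_k$ in $k$ suffices to bound the remainder by the shuffled case we already control.
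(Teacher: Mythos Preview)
Your argument is correct and rests on the same core idea as the paper's: condition on the number $K$ of rounds in which the differing record is sampled, isolate the $\{K=0\}$ branch as pure random guessing, and coarsen $\{K\ge 1\}$ to $\{K=1\}$ via the monotonicity $g_k\le g_1$. The packaging, however, differs. The paper factors this reduction out as a standalone Lemma~\ref{lem:shuf-bound-to-pois}, stated at the level of trade-off curves and proved (Appendix~D) for an arbitrary upper bound $f'_{\mathrm{shuf}}\ge f_{\mathrm{shuf}}$; it then applies the lemma with $f'_{\mathrm{shuf}}=f_{\mathrm{shuf}}$ and chains through $\kappa_{\mathrm{pois}}\ge(1-p)\kappa_{\mathrm{shuf}}$ before invoking Theorem~\ref{thm:shuffling}. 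You instead stay with the concrete max-rule test throughout, compute its Poisson Type~II error directly, and obtain $f_{\mathrm{pois}}(a)\le p_0(1-a)+(1-p_0)f_{\mathrm{sub}}(a)$ without ever passing through $f_{\mathrm{shuf}}$ or the abstract lemma. Your route is slightly more elementary and shorter, since the monotonicity of $g_k$ is immediate algebra rather than a six-step trade-off-curve argument; the paper's route buys a reusable reduction (Lemma~\ref{lem:shuf-bound-to-pois}) that would transfer any future improved bound on $f_{\mathrm{shuf}}$ to the Poisson case automatically.
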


\begin{lemma}[Transferring shuffling bounds to Poisson subsampling under zero-out adjacency]
\label{lem:shuf-bound-to-pois}
Consider DP-SGD trained for one epoch under \emph{Poisson subsampling} with
$M \ge 1$ rounds and per-round sampling probability $q\in(0,1)$, and assume
\emph{zero-out adjacency} (cf. Definition~\ref{def:zeroout-adjacency}).
Let $f_{\mathrm{pois}}$ denote the true $f$-DP trade-off curve under Poisson subsampling,
and let $f_{\mathrm{shuf}}$ denote the true $f$-DP trade-off curve under one-epoch random shuffling.
Assume that
\[
f_{\mathrm{shuf}}(\alpha)\le f'_{\mathrm{shuf}}(\alpha)
\qquad \text{for all }\alpha\in[0,1].
\]
Let $p=(1-q)^M$ be the probability that the differing record is never sampled during the epoch.
Then, for all $\alpha\in[0,1]$,
\begin{equation}
\label{eq:poisson-mixture-bound}
f_{\mathrm{pois}}(\alpha)
\;\le\;
p(1-\alpha) + (1-p)\,f'_{\mathrm{shuf}}(\alpha).
\end{equation}
In particular, if the expected batch size is $b=qN$ and number of rounds is $M=N/b$ and $q=b/N=1/M$, then for typical batch sizes $b \ll N$ we have
\[
p=(1-q)^M=\Bigl(1-\tfrac1M\Bigr)^M \approx e^{-1}.
\]
\end{lemma}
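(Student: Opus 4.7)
My plan proceeds in three conceptual steps, all carried out on the adversary's reconstructed observation space from Section~\ref{subsec:projection}. Let $P_0$ denote the reconstructed distribution under $d$ (a product of $M$ standard Gaussians) and $P_1^{\mathrm{Poi}}$ the distribution under $d'$ after one epoch of Poisson subsampling. Because inclusion of the differing record is independent across rounds, I would first expand $P_1^{\mathrm{Poi}} = \sum_{k=0}^{M} p_k\, P_1^{(k)}$, where $p_k = \binom{M}{k} q^k (1-q)^{M-k}$ and $P_1^{(k)}$ is the position-symmetric distribution with exactly $k$ signal coordinates. From the definitions of the mechanisms $P_1^{(0)} = P_0$ and $P_1^{(1)}$ coincides with the shuffled reconstructed distribution under $d'$. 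Writing $p = p_0 = (1-q)^M$ this rearranges as $P_1^{\mathrm{Poi}} = p\,P_0 + (1-p)\,R$, where $R = (1-p)^{-1}\sum_{k\ge 1} p_k P_1^{(k)}$ is the conditional Poisson distribution given that the record is sampled at least once.

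Next I would establish a mixture upper bound for the trade-off function that exploits the fact that one component of the mixture on the alternative side is exactly $P_0$. For any rejection rule $\phi$, linearity of expectation gives $\beta(\phi;P_1^{\mathrm{Poi}}) = p(1-\alpha(\phi)) + (1-p)\,\beta(\phi;R)$. Taking $\phi$ to be the Neyman--Pearson--optimal test for distinguishing $P_0$ from $R$ at level $\alpha$, with $\alpha(\phi)=\alpha$ (which is optimal by convexity and monotonicity of $f$-DP trade-off curves), yields the mixture inequality $T(P_0,P_1^{\mathrm{Poi}})(\alpha) \le p(1-\alpha) + (1-p)\,T(P_0,R)(\alpha)$.

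The crux is to bound $T(P_0,R)(\alpha) \le f_{\mathrm{shuf}}(\alpha)$. I would fix $\phi^\star$ as the Neyman--Pearson--optimal test for distinguishing $P_0$ from $P_1^{(1)}$ at level $\alpha$; its log-likelihood ratio $\log\bigl(\tfrac{1}{M}\sum_j e^{z_j/\sigma - 1/(2\sigma^2)}\bigr)$ has a nonnegative partial derivative in each $z_j$, so $\phi^\star$ is coordinate-wise monotone increasing. I would then construct an explicit coupling of $P_1^{(1)}$ and $P_1^{(k)}$ for each $k\ge 1$: draw a uniform random permutation $(J_1,\ldots,J_M)$ of $[M]$ together with i.i.d.\ noises $\epsilon_j\sim\mathcal{N}(0,1)$, place signal at $J_1$ alone for $P_1^{(1)}$ and at $\{J_1,\ldots,J_k\}$ for $P_1^{(k)}$. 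The shift is always nonnegative, so $z_j^{(k)} \ge z_j^{(1)}$ pointwise; monotonicity of $\phi^\star$ then gives $\beta(\phi^\star;P_1^{(k)}) \le \beta(\phi^\star;P_1^{(1)})$ for every $k\ge 1$, and averaging over the mixture yields $\beta(\phi^\star;R) \le \beta(\phi^\star;P_1^{(1)}) = f_{\mathrm{shuf}}(\alpha)$. Since $T(P_0,R)(\alpha) \le \beta(\phi^\star;R)$ and $f_{\mathrm{shuf}} \le f'_{\mathrm{shuf}}$ by hypothesis, substituting back into the mixture inequality yields $f_{\mathrm{pois}}(\alpha) \le p(1-\alpha) + (1-p)\,f'_{\mathrm{shuf}}(\alpha)$. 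The final estimate $p\approx e^{-1}$ is immediate from $(1-1/M)^M \to e^{-1}$ when $q=1/M$.

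The main obstacle is the monotonicity step. The informal intuition ``more signal is more leakage'' is not true for arbitrary distributions (a sign-symmetric mixture of shifts can average back to the null), so one must exploit problem-specific structure. Here two facts rescue us: first, under zero-out adjacency all shifts point in the same positive direction after projecting onto the signal direction in Section~\ref{subsec:projection}; and second, the shuffled likelihood ratio is a sum of coordinate-separable exponentials, which makes the optimal test coordinate-wise monotone and enables the pointwise coupling above.
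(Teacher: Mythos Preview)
Your proof is correct and follows the same overall mixture-decomposition strategy as the paper, but handles the key comparison step more carefully. The paper's proof coarsens the event $\{K\ge 1\}$ to $\{K=1\}$ and simply \emph{asserts} that this yields an upper bound on the trade-off curve, justified only by the intuition that ``observing the record multiple times can only help the adversary''; it then computes the resulting two-branch mixture trade-off exactly. You instead keep the true conditional law $R$ given $\{K\ge 1\}$ and prove the comparison $T(P_0,R)(\alpha)\le f_{\mathrm{shuf}}(\alpha)$ directly: you observe that the shuffled Neyman--Pearson test has a coordinate-separable monotone likelihood ratio, and you construct an explicit coupling under which $P_1^{(k)}$ coordinate-wise dominates $P_1^{(1)}$ for every $k\ge 1$. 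This is exactly the rigorous content behind the paper's informal coarsening step, and your closing paragraph correctly identifies why the argument requires the positivity of the shift (guaranteed by zero-out adjacency) and the monotone structure of the likelihood ratio. What the paper's route buys is an exact closed form for the intermediate mixture trade-off $f_{\mathrm{mix}}$; what your route buys is a self-contained justification of the only nontrivial inequality. A minor simplification: you could skip the intermediate quantity $T(P_0,R)$ entirely and apply $\phi^\star$ directly to $P_1^{\mathrm{Poi}}$, since $\beta(\phi^\star;P_1^{\mathrm{Poi}})=p(1-\alpha)+(1-p)\beta(\phi^\star;R)$ already gives the bound in one line.
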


Proof of Lemma~\ref{lem:shuf-bound-to-pois} can be found in Appendix D. We now combine this mixture bound with the separation lower bound established for shuffled DP-SGD to complete the proof of Theorem~\ref{thm:poisson}.
\begin{proof}[Proof of Theorem~\ref{thm:poisson}]
Under Poisson subsampling, each sample is independently included in each round
with probability $q=1/M$ over $M$ rounds (one epoch), so that
$p=(1-q)^M$. Let $f_{\mathrm{shuf}}$ denote the true $f$-DP trade-off curve under one-epoch
random shuffling. By Lemma~\ref{lem:shuf-bound-to-pois} (applied with
$f'_{\mathrm{shuf}}:=f_{\mathrm{shuf}}$), for all $\alpha\in[0,1]$,
\[
f_{\mathrm{pois}}(\alpha)
\;\le\;
p(1-\alpha)+(1-p)\,f_{\mathrm{shuf}}(\alpha).
\]
Geometrically, this shows that $f_{\mathrm{pois}}$ lies below a convex
combination of the random-guessing line (which has zero separation) and the
shuffled trade-off curve. Hence,
\[
\kappa_{\mathrm{pois}}
\;\ge\;
(1-p)\,\kappa_{\mathrm{shuf}}.
\]
Finally, using $p\approx e^{-1}$ from
Lemma~\ref{lem:shuf-bound-to-pois} and using Theorem~\ref{thm:shuffling} proves the claim.
\end{proof}
Shuffled and Poisson DP-SGD obey the \emph{same} worst-case limitation up to a constant factor. In particular, Poisson-sampled DP-SGD cannot make its trade-off curve arbitrarily close to random guessing under the standard worst-case DP adversary and the one-epoch regime, unless $\sigma$ exceeds the same threshold ($\sigma \ge 1/\sqrt{2\ln M}$) as in the shuffled case. Together, Theorems~\ref{thm:shuffling} and~\ref{thm:poisson} establish a unified worst-case limitation: under the standard worst-case DP adversary, neither shuffling nor Poisson subsampling can simultaneously achieve a noise multiplier below the $\sigma$ threshold and a separation below the corresponding lower bounds. 

\section{Empirical Analysis}
\label{sec:experiments}
\paragraph{Sampling, normalization, and microbatching.}
For experiments, we consider both Poisson subsampling and random shuffling.
For Poisson subsampling, each example is included independently with probability $q$ at each step, resulting in a random number of participating examples. To have the DP-SGD update consistent with common practice, we normalize gradients and calibrate Gaussian noise using the \emph{expected} batch size $b = qN$, treating non-sampled examples as contributing zero gradient. This yields one DP-SGD update per Poisson-sampled step with a fixed noise scale, while preserving the independence structure assumed in our theoretical analysis.
For random shuffling, we use fixed-size batches throughout by shuffling the dataset at each epoch and discarding any incomplete remainder batch which coincides with the standard shuffled DP-SGD mechanism (Definition~\ref{def:random-shuffling}). All experiments are implemented using the \texttt{JAX Privacy} DP-SGD framework~\cite{jax-privacy2022github,jax2018github}. We use microbatching with fixed size 32 only to reduce memory usage. Gradients are clipped per example and Gaussian noise is added once per logical batch, so microbatching does not affect the optimization or privacy mechanism.

\paragraph{Datasets and Model Architectures.}
We evaluate our bounds on image (CIFAR-10/100~\cite{Krizhevsky09learningmultiple}, SVHN~\cite{37648}) and text (AG News~\cite{DBLP:conf/nips/ZhangZL15}) benchmarks. We employ standard architectures including ResNet~\cite{DBLP:conf/cvpr/HeZRS16}, Vision Transformers (ViT)~\cite{DBLP:conf/iclr/DosovitskiyB0WZ21}, and encoder-only Transformer architecture~\cite{DBLP:conf/nips/VaswaniSPUJGKP17}. To analyze how model capacity interacts with privacy noise, we evaluate variants ranging from \textbf{Tiny} to \textbf{Base} by increasing the \emph{embedding dimension} (the width of the internal vector representations) and depth. Detailed architectural specifications for all models are provided in Appendix B.

\paragraph{Experimental protocol and hyperparameter selection.}
For each epoch budget $E \in \{1,10,25\}$, we first fix a model architecture that is compatible with DP training (e.g., replacing non-DP-friendly components such as LayerNorm where necessary). Using this fixed model, we perform a hyperparameter search under \emph{clean training, i.e. $\sigma = 0$} to identify the best-performing optimizer configuration for each epoch budget.
These hyperparameters are then held fixed across all subsequent experiments. The clipping constant $C$ is selected by maximizing utility under a fixed noise multiplier set to our theoretical lower bound (i.e., $\sigma=1/\sqrt{2\ln M}$). We then evaluate two training regimes: (i) $\sigma=0$ without clipping (clean training) and (ii) full DP-SGD with clipping and Gaussian noise with $\sigma$ set to our theoretical lower bound. The purpose of these experiments is not to achieve peak accuracy, but to demonstrate that the noise levels implied by our lower bound are already severe in practice. Additional implementation and tuning details are provided in the Appendix B.

\paragraph{Multi-view interpretation.}
Although our bounds are derived for a single epoch, this setting naturally models federated learning scenarios in which each client contributes one local epoch and the server observes many such client updates across rounds. Accordingly, we repeat the one-epoch mechanism over $E$ independent views,  corresponding to observing $E$ independent client updates. Our aim is not to optimize final accuracy, but to empirically illustrate the severity of the lower bound as the number of observed views increases.

\paragraph{Results and interpretation.}
Tables~\ref{tab:cifar10-resnet18-merged}--\ref{tab:agnews-tx-small-bert-base-256-merged} report test accuracy across datasets, architectures, batch sizes, and epoch budgets under shuffling and Poisson subsampling. Across all settings, introducing Gaussian noise at our lower bound induces a substantial and persistent utility gap relative to both clean training and clipping-only baselines. This degradation is already pronounced at small epoch budgets and does not vanish with additional epochs, indicating that the noise floor imposed by our lower bound fundamentally constrains optimization rather than merely slowing convergence. While shuffling and Poisson subsampling exhibit comparable behavior in the clean and clipping-only regimes, their DP counterparts consistently suffer similar accuracy losses. Overall, these results empirically corroborate the fundamental limitations predicted by our analysis. Extended experimental results on additional model scales (Tiny, Small, and Base Transformers; ViT-Tiny/Small/Base; ResNet-34, WideResNet-28x10) and datasets (AG News, CIFAR-10, and CIFAR-100, SVHN) can be found in Appendix C.

\begin{table}[t]
\centering
\small
\caption{CIFAR-10 / ResNet-18. Rows correspond to batch size (BS) and epoch budget; columns compare Random Shuffling vs.\ Poisson Subsampling under $\sigma{=}0$ and DP-SGD at the shown $\sigma$.}
\label{tab:cifar10-resnet18-merged}
\begin{tabular}{llrrrr}
\toprule
\multirow{2}{*}{BS} & \multirow{2}{*}{Epoch} & \multicolumn{2}{c}{shuffling} & \multicolumn{2}{c}{poisson} \\
 &  & $\sigma=0$ & DP ($\sigma$) & $\sigma=0$ & DP ($\sigma$) \\
\midrule
\multirow{3}{*}{128} & 1 & 48.0 & 38.0 (0.29) & 47.9 & 38.9 (0.29) \\
 & 10 & 80.1 & 40.4 (0.29) & 79.4 & 40.5 (0.29) \\
 & 25 & 82.2 & 43.4 (0.29) & 81.8 & 45.7 (0.29) \\
\midrule
\multirow{3}{*}{256} & 1 & 44.9 & 40.8 (0.31) & 47.6 & 39.7 (0.31) \\
 & 10 & 68.6 & 50.0 (0.31) & 77.8 & 46.4 (0.31) \\
 & 25 & 78.7 & 41.7 (0.31) & 79.2 & 46.4 (0.31) \\
\midrule
\multirow{3}{*}{512} & 1 & 39.9 & 35.3 (0.33) & 42.6 & 36.2 (0.33) \\
 & 10 & 70.5 & 59.8 (0.33) & 71.9 & 60.3 (0.33) \\
 & 25 & 79.5 & 61.9 (0.33) & 79.0 & 62.7 (0.33) \\
\midrule
\multirow{3}{*}{1024} & 1 & 29.5 & 28.6 (0.36) & 32.6 & 28.7 (0.36) \\
 & 10 & 60.4 & 56.7 (0.36) & 61.2 & 57.5 (0.36) \\
 & 25 & 77.9 & 67.0 (0.36) & 72.8 & 65.7 (0.36) \\
\midrule
\multirow{3}{*}{2048} & 1 & 27.8 & 23.2 (0.40) & 29.0 & 23.5 (0.40) \\
 & 10 & 53.7 & 47.6 (0.40) & 55.3 & 46.9 (0.40) \\
 & 25 & 71.4 & 61.9 (0.40) & 69.2 & 61.9 (0.40) \\
\midrule
\multirow{3}{*}{4096} & 1 & 15.6 & 14.4 (0.45) & 15.1 & 14.5 (0.45) \\
 & 10 & 45.9 & 38.3 (0.45) & 46.9 & 40.9 (0.45) \\
 & 25 & 54.5 & 48.3 (0.45) & 57.6 & 50.0 (0.45) \\
\bottomrule
\end{tabular}
\end{table}

\begin{table}[t]
\centering
\small
\caption{AG News / Transformer-Base ($\approx$136M parameters). Rows correspond to batch size (BS) and epoch budget; columns compare Random Shuffling vs.\ Poisson Subsampling under $\sigma{=}0$ and DP-SGD at the shown $\sigma$.}
\label{tab:agnews-tx-small-bert-base-256-merged}
\begin{tabular}{llrrrr}
\toprule
\multirow{2}{*}{BS} & \multirow{2}{*}{Epoch} & \multicolumn{2}{c}{shuffling} & \multicolumn{2}{c}{poisson} \\
 &  & $\sigma=0$ & DP ($\sigma$) & $\sigma=0$ & DP ($\sigma$) \\
\midrule
\multirow{3}{*}{128} & 1 & 86.8 & 74.7 (0.27) & 87.6 & 74.1 (0.27) \\
 & 10 & 90.7 & 53.9 (0.27) & 89.9 & 56.8 (0.27) \\
 & 25 & 91.4 & 82.6 (0.27) & 91.1 & 82.7 (0.27) \\
\midrule
\multirow{3}{*}{256} & 1 & 86.8 & 76.1 (0.29) & 85.9 & 75.9 (0.29) \\
 & 10 & 91.0 & 27.1 (0.29) & 90.4 & 25.9 (0.29) \\
 & 25 & 90.8 & 76.3 (0.29) & 89.5 & 73.4 (0.29) \\
\midrule
\multirow{3}{*}{512} & 1 & 84.8 & 76.6 (0.30) & 82.9 & 75.2 (0.30) \\
 & 10 & 89.1 & 84.1 (0.30) & 89.6 & 83.6 (0.30) \\
 & 25 & 90.2 & 82.7 (0.30) & 90.3 & 82.3 (0.30) \\
\bottomrule
\end{tabular}
\end{table}

\section{Discussion and Future Directions}
\label{sec:discussion}

Our results expose a structural tension in DP-SGD under the standard worst-case adversarial model. This section discusses the implications of this limitation, how it relates to existing privacy frameworks, and which directions appear promising for overcoming or reinterpreting these constraints.

\paragraph{Is the adversary too strong?}
The lower bounds we derive are with respect to the standard worst-case DP adversary, who is allowed arbitrary side information and observes all noisy updates. From this perspective, our results may be interpreted less as an indictment of DP-SGD itself and more as a limitation of the adversarial model. This raises the question of whether different privacy notions such as instance-based PAC-privacy~\cite{DBLP:conf/crypto/XiaoD23} can meaningfully relax the trade-off without abandoning rigorous protection. An appealing direction is to understand how separation behaves when privacy is required only with high probability over the data distribution, rather than uniformly over all neighboring datasets.

\paragraph{Algorithmic alternatives to noise scaling.}
A key takeaway from both our theory and experiments is that increasing the noise multiplier alone is a blunt instrument. Improving privacy--utility trade-offs may require algorithmic changes rather than purely stronger noise calibration. Promising directions include modifying how gradients are clipped and aggregated, reducing the effective dimensionality or sparsity of updates before noise injection, and reconsidering training schedules such as stopping early or running only partial epochs. An open question is whether such interventions can qualitatively weaken the adversary and yielding different privacy--utility behavior rather than merely shifting constants within the same worst-case regime.

\paragraph{Multi-epoch behavior.}
Although our main lower bound is derived for a single epoch, multi-epoch training is the norm in practice. Existing asymptotic analyses within the $\mu$-GDP framework~\cite{DBLP:journals/corr/abs-1905-02383} suggest that privacy loss under Poisson subsampling converges to a Gaussian limit under composition. However, such results do not provide explicit non-asymptotic rates, nor do they directly capture the separation metric that governs worst-case distinguishability (see Appendix F for detailed analysis). Understanding how separation evolves under repeated composition in the multi-epoch setting remains an open problem, and extending our separation bounds to this regime is an important direction for future work.

\paragraph{Implications for practice and the broader picture.}
Empirically, we observe that the noise levels implied by our lower bounds already induce substantial accuracy degradation at realistic batch sizes and model scales. Importantly, this does not imply that private learning is infeasible in practice; rather, it clarifies the concrete costs imposed by insisting on strong worst-case privacy guarantees within standard SGD-based training pipelines. More broadly, our results highlight that the combination of worst-case adversarial definitions and conventional optimization dynamics imposes real and sometimes underappreciated constraints on achievable privacy--utility trade-offs. A central open question is whether future progress will primarily come from new algorithmic designs, weaker but still meaningful privacy notions, or a principled combination of both. We view this work as a step toward understanding the boundary of what DP-SGD can and cannot provide under worst-case semantics, and toward motivating alternative frameworks.

\section{Conclusion}
\label{sec:conclusion}
We analyzed DP-SGD under realistic shuffling and Poisson subsampling through the $f$-DP hypothesis-testing framework and established a fundamental limitation under the standard worst-case adversarial model: the noise multiplier and worst-case distinguishability cannot be simultaneously driven below explicit lower bounds. This geometric limitation applies to both sampling schemes with empirical results confirming that the implied noise levels already cause substantial utility degradation at practical scales. Importantly, these findings should not be read as a negative statement about private learning per se; rather, they describe the boundary of what DP-SGD can achieve under strong worst-case definitions, and motivate the exploration of alternative algorithms, adaptive training mechanisms, or weaker but still meaningful privacy notions when this boundary proves too restrictive.

\section*{Acknowledgments}
The contribution of Marten van Dijk and Murat Bilgehan Ertan to this publication is part of the project CiCS of the research program Gravitation which is (partly) financed by the Dutch Research Council (NWO) under the grant 024.006.037. We acknowledge the use of the DAS-6 High-Performance Computing cluster at Vrije Universiteit Amsterdam for GPU-based experiments~\cite{bal2016medium}.

\bibliographystyle{plainnat}
\bibliography{bib}

@InProceedings{DBLP:conf/icml/ChuaGK0MSZ24,
  title = 	 {How Private are {DP}-{SGD} Implementations?},
  author =       {Chua, Lynn and Ghazi, Badih and Kamath, Pritish and Kumar, Ravi and Manurangsi, Pasin and Sinha, Amer and Zhang, Chiyuan},
  booktitle = 	 {Proceedings of the 41st International Conference on Machine Learning},
  pages = 	 {8904--8918},
  year = 	 {2024},
  editor = 	 {Salakhutdinov, Ruslan and Kolter, Zico and Heller, Katherine and Weller, Adrian and Oliver, Nuria and Scarlett, Jonathan and Berkenkamp, Felix},
  volume = 	 {235},
  series = 	 {Proceedings of Machine Learning Research},
  month = 	 {21--27 Jul},
  publisher =    {PMLR},
  pdf = 	 {https://raw.githubusercontent.com/mlresearch/v235/main/assets/chua24a/chua24a.pdf},
  url = 	 {https://proceedings.mlr.press/v235/chua24a.html},
  address = {Vienna, Austria}
}

@inproceedings{DBLP:conf/nips/ChuaGK0MSZ24,
  author       = {Lynn Chua and
                  Badih Ghazi and
                  Pritish Kamath and
                  Ravi Kumar and
                  Pasin Manurangsi and
                  Amer Sinha and
                  Chiyuan Zhang},
  editor       = {Amir Globersons and
                  Lester Mackey and
                  Danielle Belgrave and
                  Angela Fan and
                  Ulrich Paquet and
                  Jakub M. Tomczak and
                  Cheng Zhang},
  title        = {Scalable {DP-SGD:} Shuffling vs. Poisson Subsampling},
  booktitle    = {Advances in Neural Information Processing Systems 38: Annual Conference
                  on Neural Information Processing Systems 2024, NeurIPS 2024, Vancouver,
                  BC, Canada, December 10 - 15, 2024},
  year         = {2024},
  publisher    = {NeurIPS},
 pages = {70026--70047},

  address      = {Vancouver, BC, Canada},
  url          = {http://papers.nips.cc/paper\_files/paper/2024/hash/81c252b45b3bd7d9bf080eb27794b762-Abstract-Conference.html},
  timestamp    = {Thu, 13 Feb 2025 16:56:44 +0100},
  biburl       = {https://dblp.org/rec/conf/nips/ChuaGK0MSZ24.bib},
  bibsource    = {dblp computer science bibliography, https://dblp.org}
}

@inproceedings{DBLP:conf/ccs/AbadiCGMMT016,
  author       = {Mart{\'{\i}}n Abadi and
                  Andy Chu and
                  Ian J. Goodfellow and
                  H. Brendan McMahan and
                  Ilya Mironov and
                  Kunal Talwar and
                  Li Zhang},
  editor       = {Edgar R. Weippl and
                  Stefan Katzenbeisser and
                  Christopher Kruegel and
                  Andrew C. Myers and
                  Shai Halevi},
  title        = {Deep Learning with Differential Privacy},
  booktitle    = {Proceedings of the 2016 {ACM} {SIGSAC} Conference on Computer and
                  Communications Security, Vienna, Austria, October 24-28, 2016},
  pages        = {308--318},
  publisher    = {{ACM}},
  address      = {Vienna, Austria},
  year         = {2016},
  url          = {https://doi.org/10.1145/2976749.2978318},
  doi          = {10.1145/2976749.2978318},
  timestamp    = {Tue, 10 Nov 2020 20:00:49 +0100},
  biburl       = {https://dblp.org/rec/conf/ccs/AbadiCGMMT016.bib},
  bibsource    = {dblp computer science bibliography, https://dblp.org}
}

@inproceedings{DBLP:conf/csfw/Mironov17,
  author       = {Ilya Mironov},
  title        = {R{\'{e}}nyi Differential Privacy},
  booktitle    = {30th {IEEE} Computer Security Foundations Symposium, {CSF} 2017, Santa
                  Barbara, CA, USA, August 21-25, 2017},
  pages        = {263--275},
  publisher    = {{IEEE} Computer Society},
  year         = {2017},
  url          = {https://doi.org/10.1109/CSF.2017.11},
  address      = {Barbara, CA, USA},
  doi          = {10.1109/CSF.2017.11},
  timestamp    = {Fri, 24 Mar 2023 00:04:59 +0100},
  biburl       = {https://dblp.org/rec/conf/csfw/Mironov17.bib},
  bibsource    = {dblp computer science bibliography, https://dblp.org}
}

@inproceedings{DBLP:conf/aistats/KoskelaJH20,
  author       = {Antti Koskela and
                  Joonas J{\"{a}}lk{\"{o}} and
                  Antti Honkela},
  editor       = {Silvia Chiappa and
                  Roberto Calandra},
  title        = {Computing Tight Differential Privacy Guarantees Using {FFT}},
  booktitle    = {The 23rd International Conference on Artificial Intelligence and Statistics,
                  {AISTATS} 2020, 26-28 August 2020, Online [Palermo, Sicily, Italy]},
  series       = {Proceedings of Machine Learning Research},
  volume       = {108},
  pages        = {2560--2569},
  publisher    = {{PMLR}},
  address      = {Palermo, Sicily, Italy},
  year         = {2020},
  url          = {http://proceedings.mlr.press/v108/koskela20b.html},
  timestamp    = {Mon, 29 Jun 2020 18:03:58 +0200},
  biburl       = {https://dblp.org/rec/conf/aistats/KoskelaJH20.bib},
  bibsource    = {dblp computer science bibliography, https://dblp.org}
}

@inproceedings{DBLP:conf/ccs/0001M18,
  author       = {Sebastian Meiser and
                  Esfandiar Mohammadi},
  editor       = {David Lie and
                  Mohammad Mannan and
                  Michael Backes and
                  XiaoFeng Wang},
  title        = {Tight on Budget?: Tight Bounds for r-Fold Approximate Differential
                  Privacy},
  booktitle    = {Proceedings of the 2018 {ACM} {SIGSAC} Conference on Computer and
                  Communications Security, {CCS} 2018, Toronto, ON, Canada, October
                  15-19, 2018},
  pages        = {247--264},
  publisher    = {{ACM}},
  address      = {Toronto, ON, Canada},
  year         = {2018},
  url          = {https://doi.org/10.1145/3243734.3243765},
  doi          = {10.1145/3243734.3243765},
  timestamp    = {Tue, 10 Nov 2020 20:00:51 +0100},
  biburl       = {https://dblp.org/rec/conf/ccs/0001M18.bib},
  bibsource    = {dblp computer science bibliography, https://dblp.org}
}

@article{DBLP:journals/corr/abs-1905-02383,
  author       = {Jinshuo Dong and
                  Aaron Roth and
                  Weijie J. Su},
  title        = {Gaussian Differential Privacy},
  journal      = {CoRR},
  volume       = {abs/1905.02383},
  year         = {2019},
  url          = {http://arxiv.org/abs/1905.02383},
  eprinttype    = {arXiv},
  eprint       = {1905.02383},
  numpages = {86},
  timestamp    = {Tue, 08 Oct 2019 14:34:23 +0200},
  biburl       = {https://dblp.org/rec/journals/corr/abs-1905-02383.bib},
  bibsource    = {dblp computer science bibliography, https://dblp.org}
}

@inproceedings{DBLP:conf/crypto/XiaoD23,
  author       = {Hanshen Xiao and
                  Srinivas Devadas},
  editor       = {Helena Handschuh and
                  Anna Lysyanskaya},
  title        = {{PAC} Privacy: Automatic Privacy Measurement and Control of Data Processing},
  booktitle    = {Advances in Cryptology - {CRYPTO} 2023 - 43rd Annual International
                  Cryptology Conference, {CRYPTO} 2023, Santa Barbara, CA, USA, August
                  20-24, 2023, Proceedings, Part {II}},
  series       = {Lecture Notes in Computer Science},
  volume       = {14082},
  pages        = {611--644},
  publisher    = {Springer},
  address      = {Santa Barbara, CA, USA},
  year         = {2023},
  url          = {https://doi.org/10.1007/978-3-031-38545-2\_20},
  doi          = {10.1007/978-3-031-38545-2\_20},
  timestamp    = {Mon, 14 Aug 2023 16:16:25 +0200},
  biburl       = {https://dblp.org/rec/conf/crypto/XiaoD23.bib},
  bibsource    = {dblp computer science bibliography, https://dblp.org}
}

@article{opacus,
  author       = {Ashkan Yousefpour and
                  Igor Shilov and
                  Alexandre Sablayrolles and
                  Davide Testuggine and
                  Karthik Prasad and
                  Mani Malek and
                  John Nguyen and
                  Sayan Ghosh and
                  Akash Bharadwaj and
                  Jessica Zhao and
                  Graham Cormode and
                  Ilya Mironov},
  title        = {Opacus: User-Friendly Differential Privacy Library in PyTorch},
  journal      = {CoRR},
  volume       = {abs/2109.12298},
  numpages = {18},
  year         = {2021},
  url          = {https://arxiv.org/abs/2109.12298},
  eprinttype    = {arXiv},
  eprint       = {2109.12298},
  timestamp    = {Thu, 26 Jan 2023 15:24:59 +0100},
  biburl       = {https://dblp.org/rec/journals/corr/abs-2109-12298.bib},
  bibsource    = {dblp computer science bibliography, https://dblp.org}
}

@misc{tensorflow2015-whitepaper,
title={ {TensorFlow}: Large-Scale Machine Learning on Heterogeneous Systems},
url={https://www.tensorflow.org/},
note={Software available from tensorflow.org},
author={
    Mart\'{i}n~Abadi and
    Ashish~Agarwal and
    Paul~Barham and
    Eugene~Brevdo and
    Zhifeng~Chen and
    Craig~Citro and
    Greg~S.~Corrado and
    Andy~Davis and
    Jeffrey~Dean and
    Matthieu~Devin and
    Sanjay~Ghemawat and
    Ian~Goodfellow and
    Andrew~Harp and
    Geoffrey~Irving and
    Michael~Isard and
    Yangqing Jia and
    Rafal~Jozefowicz and
    Lukasz~Kaiser and
    Manjunath~Kudlur and
    Josh~Levenberg and
    Dandelion~Man\'{e} and
    Rajat~Monga and
    Sherry~Moore and
    Derek~Murray and
    Chris~Olah and
    Mike~Schuster and
    Jonathon~Shlens and
    Benoit~Steiner and
    Ilya~Sutskever and
    Kunal~Talwar and
    Paul~Tucker and
    Vincent~Vanhoucke and
    Vijay~Vasudevan and
    Fernanda~Vi\'{e}gas and
    Oriol~Vinyals and
    Pete~Warden and
    Martin~Wattenberg and
    Martin~Wicke and
    Yuan~Yu and
    Xiaoqiang~Zheng},
  year={2015},
}

@misc{jax-privacy2022github,
 author = {Balle, Borja and Berrada, Leonard and Charles, Zachary and
Choquette-Choo, Christopher A and De, Soham and Doroshenko, Vadym and Dvijotham,
Dj and Galen, Andrew and Ganesh, Arun and Ghalebikesabi, Sahra and Hayes, Jamie
and Kairouz, Peter and McKenna, Ryan and McMahan, Brendan and Pappu, Aneesh and
Ponomareva, Natalia and Pravilov, Mikhail and Rush, Keith and Smith, Samuel L
and Stanforth, Robert},
  title = {{JAX}-{P}rivacy: Algorithms for Privacy-Preserving Machine Learning in JAX},
  url = {http://github.com/google-deepmind/jax_privacy},
  version = {0.4.0},
  year = {2025},
}

@inproceedings{DBLP:conf/soda/FeldmanMT23,
  author       = {Vitaly Feldman and
                  Audra McMillan and
                  Kunal Talwar},
  editor       = {Nikhil Bansal and
                  Viswanath Nagarajan},
  title        = {Stronger Privacy Amplification by Shuffling for Renyi and Approximate
                  Differential Privacy},
  booktitle    = {Proceedings of the 2023 {ACM-SIAM} Symposium on Discrete Algorithms,
                  {SODA} 2023, Florence, Italy, January 22-25, 2023},
  pages        = {4966--4981},
  publisher    = {{SIAM}},
  year         = {2023},
  url          = {https://doi.org/10.1137/1.9781611977554.ch181},
  doi          = {10.1137/1.9781611977554.CH181},
  timestamp    = {Fri, 17 Feb 2023 09:28:57 +0100},
  biburl       = {https://dblp.org/rec/conf/soda/FeldmanMT23.bib},
address = {Florence, Italy},
  bibsource    = {dblp computer science bibliography, https://dblp.org}
}

@article{DBLP:journals/jair/PonomarevaHKXDMVCT23,
  author       = {Natalia Ponomareva and
                  Hussein Hazimeh and
                  Alex Kurakin and
                  Zheng Xu and
                  Carson Denison and
                  H. Brendan McMahan and
                  Sergei Vassilvitskii and
                  Steve Chien and
                  Abhradeep Guha Thakurta},
  title        = {How to DP-fy {ML:} {A} Practical Guide to Machine Learning with Differential
                  Privacy},
  journal      = {J. Artif. Intell. Res.},
  volume       = {77},
  pages        = {1113--1201},
  year         = {2023},
  url          = {https://doi.org/10.1613/jair.1.14649},
  doi          = {10.1613/JAIR.1.14649},
  timestamp    = {Tue, 02 Apr 2024 08:33:03 +0200},
  biburl       = {https://dblp.org/rec/journals/jair/PonomarevaHKXDMVCT23.bib},
  bibsource    = {dblp computer science bibliography, https://dblp.org}
}

@inproceedings{DBLP:conf/nips/BirrellEBP24,
  author       = {Jeremiah Birrell and
                  Reza Ebrahimi and
                  Rouzbeh Behnia and
                  Jason Pacheco},
  editor       = {Amir Globersons and
                  Lester Mackey and
                  Danielle Belgrave and
                  Angela Fan and
                  Ulrich Paquet and
                  Jakub M. Tomczak and
                  Cheng Zhang},
  title        = {Differentially Private Stochastic Gradient Descent with Fixed-Size
                  Minibatches: Tighter {RDP} Guarantees with or without Replacement},
  booktitle    = {Advances in Neural Information Processing Systems 38: Annual Conference
                  on Neural Information Processing Systems 2024, NeurIPS 2024, Vancouver,
                  BC, Canada, December 10 - 15, 2024},
  year         = {2024},
  url          = {http://papers.nips.cc/paper\_files/paper/2024/hash/14fef58f09f2ebe69306e0a322e3be2b-Abstract-Conference.html},
address = {Vancouver, Canada},
 pages = {11087--11131},
  timestamp    = {Fri, 16 May 2025 09:46:46 +0200},
  biburl       = {https://dblp.org/rec/conf/nips/BirrellEBP24.bib},
  bibsource    = {dblp computer science bibliography, https://dblp.org},
  publisher    = {NeurIPS}
}

@InProceedings{pmlr-v97-zhu19c,
  title = 	 {Poission Subsampled Rényi Differential Privacy},
  author =       {Zhu, Yuqing and Wang, Yu-Xiang},
  booktitle = 	 {Proceedings of the 36th International Conference on Machine Learning},
  pages = 	 {7634--7642},
  year = 	 {2019},
  editor = 	 {Chaudhuri, Kamalika and Salakhutdinov, Ruslan},
  volume = 	 {97},
  series = 	 {Proceedings of Machine Learning Research},
  month = 	 {09--15 Jun},
address = {""},
  publisher =    {PMLR},
  pdf = 	 {http://proceedings.mlr.press/v97/zhu19c/zhu19c.pdf},
  url = 	 {https://proceedings.mlr.press/v97/zhu19c.html},
  abstract = 	 {We consider the problem of privacy-amplification by under the Renyi Differential Privacy framework. This is the main technique underlying the moments accountants (Abadi et al., 2016) for differentially private deep learning. Unlike previous attempts on this problem which deals with Sampling with Replacement, we consider the Poisson subsampling scheme which selects each data point independently with a coin toss. This allows us to significantly simplify and tighten the bounds for the RDP of subsampled mechanisms and derive numerically stable approximation schemes. In particular, for subsampled Gaussian mechanism and subsampled Laplace mechanism, we prove an analytical formula of their RDP that exactly matches the lower bound. The result is the first of its kind and we numerically demonstrate an order of magnitude improvement in the privacy-utility tradeoff.}
}

@article{DBLP:journals/corr/abs-1908-10530,
  author       = {Ilya Mironov and
                  Kunal Talwar and
                  Li Zhang},
  title        = {R{\'{e}}nyi Differential Privacy of the Sampled Gaussian Mechanism},
  journal      = {CoRR},
  volume       = {abs/1908.10530},
  year         = {2019},
  url          = {http://arxiv.org/abs/1908.10530},
  eprinttype    = {arXiv},
  eprint       = {1908.10530},
  numpages = {14},
  timestamp    = {Thu, 11 Aug 2022 16:33:49 +0200},
  biburl       = {https://dblp.org/rec/journals/corr/abs-1908-10530.bib},
  bibsource    = {dblp computer science bibliography, https://dblp.org}
}

@article{DBLP:journals/jmlr/NguyenTPND21,
  author       = {Lam M. Nguyen and
                  Quoc Tran{-}Dinh and
                  Dzung T. Phan and
                  Phuong Ha Nguyen and
                  Marten van Dijk},
  title        = {A Unified Convergence Analysis for Shuffling-Type Gradient Methods},
  journal      = {J. Mach. Learn. Res.},
  volume       = {22},
  pages        = {207:1--207:44},
  year         = {2021},
  url          = {https://jmlr.org/papers/v22/20-1238.html},
  timestamp    = {Wed, 11 Sep 2024 14:41:28 +0200},
  biburl       = {https://dblp.org/rec/journals/jmlr/NguyenTPND21.bib},
  bibsource    = {dblp computer science bibliography, https://dblp.org}
}

@inproceedings{DBLP:conf/icml/HaochenS19,
  author       = {Jeff Z. HaoChen and
                  Suvrit Sra},
  editor       = {Kamalika Chaudhuri and
                  Ruslan Salakhutdinov},
  title        = {Random Shuffling Beats {SGD} after Finite Epochs},
  booktitle    = {Proceedings of the 36th International Conference on Machine Learning,
                  {ICML} 2019, 9-15 June 2019, Long Beach, California, {USA}},
  series       = {Proceedings of Machine Learning Research},
  volume       = {97},
  pages        = {2624--2633},
  publisher    = {{PMLR}},
  year         = {2019},
  url          = {http://proceedings.mlr.press/v97/haochen19a.html},
  timestamp    = {Thu, 25 Jun 2020 07:31:31 +0200},
  biburl       = {https://dblp.org/rec/conf/icml/HaochenS19.bib},
  bibsource    = {dblp computer science bibliography, https://dblp.org},
address = {Long Beach, California, USA}
}

@article{DBLP:journals/ijon/MengCWML19,
  author       = {Qi Meng and
                  Wei Chen and
                  Yue Wang and
                  Zhi{-}Ming Ma and
                  Tie{-}Yan Liu},
  title        = {Convergence analysis of distributed stochastic gradient descent with
                  shuffling},
  journal      = {Neurocomputing},
  volume       = {337},
  pages        = {46--57},
  year         = {2019},
  url          = {https://doi.org/10.1016/j.neucom.2019.01.037},
  doi          = {10.1016/J.NEUCOM.2019.01.037},
  timestamp    = {Sun, 19 Jan 2025 14:02:54 +0100},
  biburl       = {https://dblp.org/rec/journals/ijon/MengCWML19.bib},
  bibsource    = {dblp computer science bibliography, https://dblp.org}
}

@article{DBLP:journals/mp/GurbuzbalabanOP21,
  author       = {Mert G{\"{u}}rb{\"{u}}zbalaban and
                  Asuman E. Ozdaglar and
                  Pablo A. Parrilo},
  title        = {Why random reshuffling beats stochastic gradient descent},
  journal      = {Math. Program.},
  volume       = {186},
  number       = {1},
  pages        = {49--84},
  year         = {2021},
  url          = {https://doi.org/10.1007/s10107-019-01440-w},
  doi          = {10.1007/S10107-019-01440-W},
  timestamp    = {Sun, 19 Jan 2025 13:52:27 +0100},
  biburl       = {https://dblp.org/rec/journals/mp/GurbuzbalabanOP21.bib},
  bibsource    = {dblp computer science bibliography, https://dblp.org}
}

@article{DBLP:journals/popets/SommerMM19,
  author       = {David M. Sommer and
                  Sebastian Meiser and
                  Esfandiar Mohammadi},
  title        = {Privacy Loss Classes: The Central Limit Theorem in Differential Privacy},
  journal      = {Proc. Priv. Enhancing Technol.},
  volume       = {2019},
  number       = {2},
  pages        = {245--269},
  year         = {2019},
  url          = {https://doi.org/10.2478/popets-2019-0029},
  doi          = {10.2478/POPETS-2019-0029},
  timestamp    = {Sun, 19 Jan 2025 14:11:04 +0100},
  biburl       = {https://dblp.org/rec/journals/popets/SommerMM19.bib},
  bibsource    = {dblp computer science bibliography, https://dblp.org}
}

@inproceedings{DBLP:conf/soda/ErlingssonFMRTT19,
  author       = {{\'{U}}lfar Erlingsson and
                  Vitaly Feldman and
                  Ilya Mironov and
                  Ananth Raghunathan and
                  Kunal Talwar and
                  Abhradeep Thakurta},
  editor       = {Timothy M. Chan},
  title        = {Amplification by Shuffling: From Local to Central Differential Privacy
                  via Anonymity},
  booktitle    = {Proceedings of the Thirtieth Annual {ACM-SIAM} Symposium on Discrete
                  Algorithms, {SODA} 2019, San Diego, California, USA, January 6-9,
                  2019},
  pages        = {2468--2479},
  publisher    = {{SIAM}},
  year         = {2019},
  url          = {https://doi.org/10.1137/1.9781611975482.151},
  doi          = {10.1137/1.9781611975482.151},
  timestamp    = {Thu, 15 Jul 2021 13:49:01 +0200},
  biburl       = {https://dblp.org/rec/conf/soda/ErlingssonFMRTT19.bib},
  bibsource    = {dblp computer science bibliography, https://dblp.org},
address = {San Diego, California, USA}
}

@inproceedings{DBLP:conf/focs/FeldmanMT21,
  author       = {Vitaly Feldman and
                  Audra McMillan and
                  Kunal Talwar},
  title        = {Hiding Among the Clones: {A} Simple and Nearly Optimal Analysis of
                  Privacy Amplification by Shuffling},
  booktitle    = {62nd {IEEE} Annual Symposium on Foundations of Computer Science, {FOCS}
                  2021, Denver, CO, USA, February 7-10, 2022},
  pages        = {954--964},
  publisher    = {{IEEE}},
  year         = {2021},
  url          = {https://doi.org/10.1109/FOCS52979.2021.00096},
  doi          = {10.1109/FOCS52979.2021.00096},
  timestamp    = {Tue, 08 Jul 2025 16:39:17 +0200},
  biburl       = {https://dblp.org/rec/conf/focs/FeldmanMT21.bib},
  bibsource    = {dblp computer science bibliography, https://dblp.org},
address = {Denver, CO, USA}
}

@inproceedings{DBLP:conf/eurocrypt/DworkKMMN06,
  author       = {Cynthia Dwork and
                  Krishnaram Kenthapadi and
                  Frank McSherry and
                  Ilya Mironov and
                  Moni Naor},
  editor       = {Serge Vaudenay},
  title        = {Our Data, Ourselves: Privacy Via Distributed Noise Generation},
  booktitle    = {Advances in Cryptology - {EUROCRYPT} 2006, 25th Annual International
                  Conference on the Theory and Applications of Cryptographic Techniques,
                  St. Petersburg, Russia, May 28 - June 1, 2006, Proceedings},
  series       = {Lecture Notes in Computer Science},
  volume       = {4004},
  pages        = {486--503},
  publisher    = {Springer},
  year         = {2006},
  url          = {https://doi.org/10.1007/11761679\_29},
  doi          = {10.1007/11761679\_29},
  timestamp    = {Tue, 14 May 2019 10:00:53 +0200},
  biburl       = {https://dblp.org/rec/conf/eurocrypt/DworkKMMN06.bib},
  bibsource    = {dblp computer science bibliography, https://dblp.org},
 address = {St. Petersburg, Russia}
}

@inproceedings{DBLP:conf/aistats/0005DW22,
  author       = {Yuqing Zhu and
                  Jinshuo Dong and
                  Yu{-}Xiang Wang},
  editor       = {Gustau Camps{-}Valls and
                  Francisco J. R. Ruiz and
                  Isabel Valera},
  title        = {Optimal Accounting of Differential Privacy via Characteristic Function},
  booktitle    = {International Conference on Artificial Intelligence and Statistics,
                  {AISTATS} 2022, 28-30 March 2022, Virtual Event},
  series       = {Proceedings of Machine Learning Research},
  volume       = {151},
  pages        = {4782--4817},
  publisher    = {{PMLR}},
  year         = {2022},
  url          = {https://proceedings.mlr.press/v151/zhu22c.html},
  timestamp    = {Sat, 30 Sep 2023 09:34:08 +0200},
  biburl       = {https://dblp.org/rec/conf/aistats/0005DW22.bib},
  bibsource    = {dblp computer science bibliography, https://dblp.org},
  address = {Virtual Event}
}

@inproceedings{DBLP:conf/IEEEares/ZhuTPDC25,
  author       = {Chaoyi Zhu and
                  Jiayi Tang and
                  Juan F. P{\'{e}}rez and
                  Marten van Dijk and
                  Lydia Y. Chen},
  editor       = {Mila Dalla Preda and
                  Sebastian Schrittwieser and
                  Vincent Naessens and
                  Bjorn De Sutter},
  title        = {{DP-TLDM:} Differentially Private Tabular Latent Diffusion Model},
  booktitle    = {Availability, Reliability and Security - 20th International Conference,
                  {ARES} 2025, Ghent, Belgium, August 11-14, 2025, Proceedings, Part
                  {I}},
  series       = {Lecture Notes in Computer Science},
  volume       = {15992},
  pages        = {337--357},
  publisher    = {Springer},
  year         = {2025},
  url          = {https://doi.org/10.1007/978-3-032-00624-0\_17},
  doi          = {10.1007/978-3-032-00624-0\_17},
  timestamp    = {Thu, 11 Sep 2025 20:25:44 +0200},
  biburl       = {https://dblp.org/rec/conf/IEEEares/ZhuTPDC25.bib},
  bibsource    = {dblp computer science bibliography, https://dblp.org},
 address = {Ghent, Belgium}
}

@inproceedings{DBLP:conf/nips/WangS0SS23,
  author       = {Chendi Wang and
                  Buxin Su and
                  Jiayuan Ye and
                  Reza Shokri and
                  Weijie J. Su},
  editor       = {Alice Oh and
                  Tristan Naumann and
                  Amir Globerson and
                  Kate Saenko and
                  Moritz Hardt and
                  Sergey Levine},
  title        = {Unified Enhancement of Privacy Bounds for Mixture Mechanisms via f-Differential
                  Privacy},
  booktitle    = {Advances in Neural Information Processing Systems 36: Annual Conference
                  on Neural Information Processing Systems 2023, NeurIPS 2023, New Orleans,
                  LA, USA, December 10 - 16, 2023},
  publisher    = {NeurIPS},
  pages = {55051--55063},
  year         = {2023},
  url          = {http://papers.nips.cc/paper\_files/paper/2023/hash/acb3e20075b0a2dfa3565f06681578e5-Abstract-Conference.html},
  timestamp    = {Fri, 01 Mar 2024 16:26:20 +0100},
  biburl       = {https://dblp.org/rec/conf/nips/WangS0SS23.bib},
  bibsource    = {dblp computer science bibliography, https://dblp.org},
address = {New Orleans, LA, USA}
}

@inproceedings{DBLP:conf/tcc/DworkMNS06,
  author       = {Cynthia Dwork and
                  Frank McSherry and
                  Kobbi Nissim and
                  Adam D. Smith},
  editor       = {Shai Halevi and
                  Tal Rabin},
  title        = {Calibrating Noise to Sensitivity in Private Data Analysis},
  booktitle    = {Theory of Cryptography, Third Theory of Cryptography Conference, {TCC}
                  2006, New York, NY, USA, March 4-7, 2006, Proceedings},
  series       = {Lecture Notes in Computer Science},
  volume       = {3876},
  pages        = {265--284},
  publisher    = {Springer},
  year         = {2006},
  url          = {https://doi.org/10.1007/11681878\_14},
  doi          = {10.1007/11681878\_14},
  timestamp    = {Tue, 14 Oct 2025 19:39:32 +0200},
  biburl       = {https://dblp.org/rec/conf/tcc/DworkMNS06.bib},
  bibsource    = {dblp computer science bibliography, https://dblp.org},
address = {New York, NY, USA}
}

@article{DBLP:journals/fttcs/DworkR14,
  author       = {Cynthia Dwork and
                  Aaron Roth},
  title        = {The Algorithmic Foundations of Differential Privacy},
  journal      = {Found. Trends Theor. Comput. Sci.},
  volume       = {9},
  number       = {3-4},
  pages        = {211--407},
  year         = {2014},
  url          = {https://doi.org/10.1561/0400000042},
  doi          = {10.1561/0400000042},
  timestamp    = {Sat, 30 Sep 2023 10:13:44 +0200},
  biburl       = {https://dblp.org/rec/journals/fttcs/DworkR14.bib},
  bibsource    = {dblp computer science bibliography, https://dblp.org}
}

@article{DBLP:journals/cacm/Dwork11,
  author       = {Cynthia Dwork},
  title        = {A firm foundation for private data analysis},
  journal      = {Commun. {ACM}},
  volume       = {54},
  number       = {1},
  pages        = {86--95},
  year         = {2011},
  url          = {https://doi.org/10.1145/1866739.1866758},
  doi          = {10.1145/1866739.1866758},
  timestamp    = {Tue, 06 Nov 2018 12:51:40 +0100},
  biburl       = {https://dblp.org/rec/journals/cacm/Dwork11.bib},
  bibsource    = {dblp computer science bibliography, https://dblp.org}
}

@article{DBLP:journals/corr/abs-1911-11607,
  author       = {Zhiqi Bu and
                  Jinshuo Dong and
                  Qi Long and
                  Weijie J. Su},
  title        = {Deep Learning with Gaussian Differential Privacy},
  journal      = {CoRR},
  volume       = {abs/1911.11607},
  year         = {2019},
  url          = {http://arxiv.org/abs/1911.11607},
  eprinttype    = {arXiv},
  numpages = {32},
  eprint       = {1911.11607},
  timestamp    = {Tue, 03 Dec 2019 20:41:07 +0100},
  biburl       = {https://dblp.org/rec/journals/corr/abs-1911-11607.bib},
  bibsource    = {dblp computer science bibliography, https://dblp.org}
}

@misc{sinha2025vaultgemmadifferentiallyprivategemma,
      title={VaultGemma: A Differentially Private Gemma Model}, 
      author={Amer Sinha and Thomas Mesnard and Ryan McKenna and Daogao Liu and Christopher A. Choquette-Choo and Yangsibo Huang and Da Yu and George Kaissis and Zachary Charles and Ruibo Liu and Lynn Chua and Pritish Kamath and Pasin Manurangsi and Steve He and Chiyuan Zhang and Badih Ghazi and Borja De Balle Pigem and Prem Eruvbetine and Tris Warkentin and Armand Joulin and Ravi Kumar},
      year={2025},
      eprint={2510.15001},
      archivePrefix={arXiv},
      primaryClass={cs.CR},
      url={https://arxiv.org/abs/2510.15001}, 
}

@misc{ding2017collectingtelemetrydataprivately,
      title={Collecting Telemetry Data Privately}, 
      author={Bolin Ding and Janardhan Kulkarni and Sergey Yekhanin},
      year={2017},
      eprint={1712.01524},
      archivePrefix={arXiv},
      primaryClass={cs.CR},
      url={https://arxiv.org/abs/1712.01524}, 
}

@inproceedings{10.1145/3219819.3226070,
author = {Abowd, John M.},
title = {The U.S. Census Bureau Adopts Differential Privacy},
year = {2018},
isbn = {9781450355520},
publisher = {Association for Computing Machinery},
address = {New York, NY, USA},
url = {https://doi.org/10.1145/3219819.3226070},
doi = {10.1145/3219819.3226070},
abstract = {The U.S. Census Bureau announced, via its Scientific Advisory Committee, that it would protect the publications of the 2018 End-to-End Census Test (E2E) using differential privacy. The E2E test is a dress rehearsal for the 2020 Census, the constitutionally mandated enumeration of the population used to reapportion the House of Representatives and redraw every legislative district in the country. Systems that perform successfully in the E2E test are then used in the production of the 2020 Census. Motivation: The Census Bureau conducted internal research that confirmed that the statistical disclosure limitation systems used for the 2000 and 2010 Censuses had serious vulnerabilities that were exposed by the Dinur and Nissim (2003) database reconstruction theorem. We designed a differentially private publication system that directly addressed these vulnerabilities while preserving the fitness for use of the core statistical products.Problem statement: Designing and engineering production differential privacy systems requires two primary components: (1) inventing and constructing algorithms that deliver maximum accuracy for a given privacy-loss budget and (2) insuring that the privacy-loss budget can be directly controlled by the policy-makers who must choose an appropriate point on the accuracy-privacy-loss tradeoff. The first problem lies in the domain of computer science. The second lies in the domain of economics. Approach: The algorithms under development for the 2020 Census focus on the data used to draw legislative districts and to enforce the 1965 Voting Rights Act (VRA). These algorithms efficiently distribute the noise injected by differential privacy. The Data Stewardship Executive Policy Committee selects the privacy-loss parameter after reviewing accuracy-privacy-loss graphs.},
booktitle = {Proceedings of the 24th ACM SIGKDD International Conference on Knowledge Discovery \& Data Mining},
pages = {2867},
numpages = {1},
keywords = {economics of privacy, differential privacy},
location = {London, United Kingdom},
series = {KDD '18}
}

@inproceedings{10.1145/2660267.2660348,
author = {Erlingsson, \'{U}lfar and Pihur, Vasyl and Korolova, Aleksandra},
title = {RAPPOR: Randomized Aggregatable Privacy-Preserving Ordinal Response},
year = {2014},
isbn = {9781450329576},
publisher = {Association for Computing Machinery},
address = {New York, NY, USA},
url = {https://doi.org/10.1145/2660267.2660348},
doi = {10.1145/2660267.2660348},
abstract = {Randomized Aggregatable Privacy-Preserving Ordinal Response, or RAPPOR, is a technology for crowdsourcing statistics from end-user client software, anonymously, with strong privacy guarantees. In short, RAPPORs allow the forest of client data to be studied, without permitting the possibility of looking at individual trees. By applying randomized response in a novel manner, RAPPOR provides the mechanisms for such collection as well as for efficient, high-utility analysis of the collected data. In particular, RAPPOR permits statistics to be collected on the population of client-side strings with strong privacy guarantees for each client, and without linkability of their reports. This paper describes and motivates RAPPOR, details its differential-privacy and utility guarantees, discusses its practical deployment and properties in the face of different attack models, and, finally, gives results of its application to both synthetic and real-world data.},
booktitle = {Proceedings of the 2014 ACM SIGSAC Conference on Computer and Communications Security},
pages = {1054–1067},
numpages = {14},
keywords = {statistical inference, privacy protection, population statistics, crowdsourcing, cloud computing},
location = {Scottsdale, Arizona, USA},
series = {CCS '14}
}

@misc{tang2025privatefinetuninglargelanguage,
      title={Private Fine-tuning of Large Language Models with Zeroth-order Optimization}, 
      author={Xinyu Tang and Ashwinee Panda and Milad Nasr and Saeed Mahloujifar and Prateek Mittal},
      year={2025},
      eprint={2401.04343},
      archivePrefix={arXiv},
      primaryClass={cs.LG},
      url={https://arxiv.org/abs/2401.04343}, 
}

@misc{de2022unlockinghighaccuracydifferentiallyprivate,
      title={Unlocking High-Accuracy Differentially Private Image Classification through Scale}, 
      author={Soham De and Leonard Berrada and Jamie Hayes and Samuel L. Smith and Borja Balle},
      year={2022},
      eprint={2204.13650},
      archivePrefix={arXiv},
      primaryClass={cs.LG},
      url={https://arxiv.org/abs/2204.13650}, 
}

@misc{chen2021gswgangradientsanitizedapproachlearning,
      title={GS-WGAN: A Gradient-Sanitized Approach for Learning Differentially Private Generators}, 
      author={Dingfan Chen and Tribhuvanesh Orekondy and Mario Fritz},
      year={2021},
      eprint={2006.08265},
      archivePrefix={arXiv},
      primaryClass={cs.LG},
      url={https://arxiv.org/abs/2006.08265}, 
}

@misc{dockhorn2023differentiallyprivatediffusionmodels,
      title={Differentially Private Diffusion Models}, 
      author={Tim Dockhorn and Tianshi Cao and Arash Vahdat and Karsten Kreis},
      year={2023},
      eprint={2210.09929},
      archivePrefix={arXiv},
      primaryClass={stat.ML},
      url={https://arxiv.org/abs/2210.09929}, 
}

@misc{anil2021largescaledifferentiallyprivatebert,
      title={Large-Scale Differentially Private BERT}, 
      author={Rohan Anil and Badih Ghazi and Vineet Gupta and Ravi Kumar and Pasin Manurangsi},
      year={2021},
      eprint={2108.01624},
      archivePrefix={arXiv},
      primaryClass={cs.LG},
      url={https://arxiv.org/abs/2108.01624}, 
}

@misc{he2022exploringlimitsdifferentiallyprivate,
      title={Exploring the Limits of Differentially Private Deep Learning with Group-wise Clipping}, 
      author={Jiyan He and Xuechen Li and Da Yu and Huishuai Zhang and Janardhan Kulkarni and Yin Tat Lee and Arturs Backurs and Nenghai Yu and Jiang Bian},
      year={2022},
      eprint={2212.01539},
      archivePrefix={arXiv},
      primaryClass={cs.LG},
      url={https://arxiv.org/abs/2212.01539}, 
}

@misc{igamberdiev2024dpnmtscalabledifferentiallyprivatemachine,
      title={DP-NMT: Scalable Differentially-Private Machine Translation}, 
      author={Timour Igamberdiev and Doan Nam Long Vu and Felix Künnecke and Zhuo Yu and Jannik Holmer and Ivan Habernal},
      year={2024},
      eprint={2311.14465},
      archivePrefix={arXiv},
      primaryClass={cs.CL},
      url={https://arxiv.org/abs/2311.14465}, 
}

@misc{yue2023synthetictextgenerationdifferential,
      title={Synthetic Text Generation with Differential Privacy: A Simple and Practical Recipe}, 
      author={Xiang Yue and Huseyin A. Inan and Xuechen Li and Girish Kumar and Julia McAnallen and Hoda Shajari and Huan Sun and David Levitan and Robert Sim},
      year={2023},
      eprint={2210.14348},
      archivePrefix={arXiv},
      primaryClass={cs.CL},
      url={https://arxiv.org/abs/2210.14348}, 
}

@misc{mckenna2025scalinglawsdifferentiallyprivate,
      title={Scaling Laws for Differentially Private Language Models}, 
      author={Ryan McKenna and Yangsibo Huang and Amer Sinha and Borja Balle and Zachary Charles and Christopher A. Choquette-Choo and Badih Ghazi and George Kaissis and Ravi Kumar and Ruibo Liu and Da Yu and Chiyuan Zhang},
      year={2025},
      eprint={2501.18914},
      archivePrefix={arXiv},
      primaryClass={cs.LG},
      url={https://arxiv.org/abs/2501.18914}, 
}

@inproceedings{DBLP:conf/nips/BalleBG18,
  author       = {Borja Balle and
                  Gilles Barthe and
                  Marco Gaboardi},
  editor       = {Samy Bengio and
                  Hanna M. Wallach and
                  Hugo Larochelle and
                  Kristen Grauman and
                  Nicol{\`{o}} Cesa{-}Bianchi and
                  Roman Garnett},
  title        = {Privacy Amplification by Subsampling: Tight Analyses via Couplings
                  and Divergences},
  booktitle    = {Advances in Neural Information Processing Systems 31: Annual Conference
                  on Neural Information Processing Systems 2018, NeurIPS 2018, December
                  3-8, 2018, Montr{\'{e}}al, Canada},
  pages        = {6280--6290},
  year         = {2018},
  address = {Montreal, Canada},
  publisher    = {NeurIPS},
  url          = {https://proceedings.neurips.cc/paper/2018/hash/3b5020bb891119b9f5130f1fea9bd773-Abstract.html},
  timestamp    = {Sun, 06 Oct 2024 21:12:31 +0200},
  biburl       = {https://dblp.org/rec/conf/nips/BalleBG18.bib},
  bibsource    = {dblp computer science bibliography, https://dblp.org}
}

@InProceedings{pmlr-v139-kairouz21b,
  title = 	 {Practical and Private (Deep) Learning Without Sampling or Shuffling},
  author =       {Kairouz, Peter and Mcmahan, Brendan and Song, Shuang and Thakkar, Om and Thakurta, Abhradeep and Xu, Zheng},
  booktitle = 	 {Proceedings of the 38th International Conference on Machine Learning},
  pages = 	 {5213--5225},
  year = 	 {2021},
  editor = 	 {Meila, Marina and Zhang, Tong},
  volume = 	 {139},
  series = 	 {Proceedings of Machine Learning Research},
  month = 	 {18--24 Jul},
  publisher =    {PMLR},
  pdf = 	 {http://proceedings.mlr.press/v139/kairouz21b/kairouz21b.pdf},
  url = 	 {https://proceedings.mlr.press/v139/kairouz21b.html},
  abstract = 	 {We consider training models with differential privacy (DP) using mini-batch gradients. The existing state-of-the-art, Differentially Private Stochastic Gradient Descent (DP-SGD), requires \emph{privacy amplification by sampling or shuffling} to obtain the best privacy/accuracy/computation trade-offs. Unfortunately, the precise requirements on exact sampling and shuffling can be hard to obtain in important practical scenarios, particularly federated learning (FL). We design and analyze a DP variant of Follow-The-Regularized-Leader (DP-FTRL) that compares favorably (both theoretically and empirically) to amplified DP-SGD, while allowing for much more flexible data access patterns. DP-FTRL does not use any form of privacy amplification.},
  address = {Virtual}
}

@misc{jax2018github,
  author = {James Bradbury and Roy Frostig and Peter Hawkins and Matthew James Johnson and Chris Leary and Dougal Maclaurin and George Necula and Adam Paszke and Jake Vander{P}las and Skye Wanderman-{M}ilne and Qiao Zhang},
  title = {{JAX}: composable transformations of {P}ython+{N}um{P}y programs},
  url = {http://github.com/jax-ml/jax},
  version = {0.3.13},
  year = {2018},
}

@inproceedings{DBLP:conf/nips/BagdasaryanPS19,
  author       = {Eugene Bagdasaryan and
                  Omid Poursaeed and
                  Vitaly Shmatikov},
  editor       = {Hanna M. Wallach and
                  Hugo Larochelle and
                  Alina Beygelzimer and
                  Florence d'Alch{\'{e}}{-}Buc and
                  Emily B. Fox and
                  Roman Garnett},
  title        = {Differential Privacy Has Disparate Impact on Model Accuracy},
  booktitle    = {Advances in Neural Information Processing Systems 32: Annual Conference
                  on Neural Information Processing Systems 2019, NeurIPS 2019, December
                  8-14, 2019, Vancouver, BC, Canada},
  pages        = {15453--15462},
  year         = {2019},
  url          = {https://proceedings.neurips.cc/paper/2019/hash/fc0de4e0396fff257ea362983c2dda5a-Abstract.html},
  timestamp    = {Mon, 16 May 2022 15:41:51 +0200},
  biburl       = {https://dblp.org/rec/conf/nips/BagdasaryanPS19.bib},
  address= {Vancouver, BC, Canada},
  publisher    = {NeurIPS},
  bibsource    = {dblp computer science bibliography, https://dblp.org}
}

@article{DBLP:journals/corr/abs-2105-07985,
  author       = {Franziska Boenisch and
                  Philip Sperl and
                  Konstantin B{\"{o}}ttinger},
  title        = {Gradient Masking and the Underestimated Robustness Threats of Differential
                  Privacy in Deep Learning},
  journal      = {CoRR},
  volume       = {abs/2105.07985},
  year         = {2021},
  url          = {https://arxiv.org/abs/2105.07985},
  eprinttype    = {arXiv},
  eprint       = {2105.07985},
  timestamp    = {Tue, 18 May 2021 18:46:40 +0200},
  biburl       = {https://dblp.org/rec/journals/corr/abs-2105-07985.bib},
  bibsource    = {dblp computer science bibliography, https://dblp.org},
  numpages = {13},
  pages = {1-13}
}

@article{DBLP:journals/corr/abs-2201-12328,
  author       = {Alexey Kurakin and
                  Steve Chien and
                  Shuang Song and
                  Roxana Geambasu and
                  Andreas Terzis and
                  Abhradeep Thakurta},
  title        = {Toward Training at ImageNet Scale with Differential Privacy},
  journal      = {CoRR},
  volume       = {abs/2201.12328},
  year         = {2022},
  url          = {https://arxiv.org/abs/2201.12328},
  eprinttype    = {arXiv},
  eprint       = {2201.12328},
  timestamp    = {Wed, 02 Feb 2022 15:00:01 +0100},
  biburl       = {https://dblp.org/rec/journals/corr/abs-2201-12328.bib},
  bibsource    = {dblp computer science bibliography, https://dblp.org},
  numpages = {25},
  pages = {1-25}
}

@article{DBLP:journals/corr/abs-2502-17772,
  author       = {Hao Liang and
                  Wanrong Zhang and
                  Xinlei He and
                  Kaishun Wu and
                  Hong Xing},
  title        = {An Improved Privacy and Utility Analysis of Differentially Private
                  {SGD} with Bounded Domain and Smooth Losses},
  journal      = {CoRR},
  volume       = {abs/2502.17772},
  year         = {2025},
  url          = {https://doi.org/10.48550/arXiv.2502.17772},
  doi          = {10.48550/ARXIV.2502.17772},
  eprinttype    = {arXiv},
  eprint       = {2502.17772},
  timestamp    = {Mon, 24 Mar 2025 16:14:30 +0100},
  biburl       = {https://dblp.org/rec/journals/corr/abs-2502-17772.bib},
  bibsource    = {dblp computer science bibliography, https://dblp.org},
  numpages = {19},
  pages = {1-19}
}

@article{DBLP:journals/corr/abs-2111-13895,
  author       = {Yinchen Shen and
                  Zhiguo Wang and
                  Ruoyu Sun and
                  Xiaojing Shen},
  title        = {Towards Understanding the Impact of Model Size on Differential Private
                  Classification},
  journal      = {CoRR},
  volume       = {abs/2111.13895},
  year         = {2021},
  url          = {https://arxiv.org/abs/2111.13895},
  eprinttype    = {arXiv},
  eprint       = {2111.13895},
  timestamp    = {Fri, 15 Aug 2025 07:36:43 +0200},
  biburl       = {https://dblp.org/rec/journals/corr/abs-2111-13895.bib},
  bibsource    = {dblp computer science bibliography, https://dblp.org},
  numpages = {14},
  pages = {1-14}
}

@article{DBLP:journals/corr/abs-2012-07828,
  author       = {Nurislam Tursynbek and
                  Aleksandr Petiushko and
                  Ivan V. Oseledets},
  title        = {Robustness Threats of Differential Privacy},
  journal      = {CoRR},
  volume       = {abs/2012.07828},
  year         = {2020},
  url          = {https://arxiv.org/abs/2012.07828},
  eprinttype    = {arXiv},
  eprint       = {2012.07828},
  timestamp    = {Sat, 02 Jan 2021 15:43:30 +0100},
  biburl       = {https://dblp.org/rec/journals/corr/abs-2012-07828.bib},
  bibsource    = {dblp computer science bibliography, https://dblp.org},
  numpages = {16},
  pages = {1-16}
}

@inproceedings{DBLP:conf/cvpr/DengDSLL009,
  author       = {Jia Deng and
                  Wei Dong and
                  Richard Socher and
                  Li{-}Jia Li and
                  Kai Li and
                  Li Fei{-}Fei},
  title        = {ImageNet: {A} large-scale hierarchical image database},
  booktitle    = {2009 {IEEE} Computer Society Conference on Computer Vision and Pattern
                  Recognition {(CVPR} 2009), 20-25 June 2009, Miami, Florida, {USA}},
  pages        = {248--255},
  publisher    = {{IEEE} Computer Society},
  year         = {2009},
  url          = {https://doi.org/10.1109/CVPR.2009.5206848},
  doi          = {10.1109/CVPR.2009.5206848},
  timestamp    = {Fri, 08 Nov 2024 10:13:55 +0100},
  biburl       = {https://dblp.org/rec/conf/cvpr/DengDSLL009.bib},
  address = {Miami, FLorida, {USA}},
  bibsource    = {dblp computer science bibliography, https://dblp.org}
}

@inproceedings{DBLP:conf/nips/SchuhmannBVGWCC22,
  author       = {Christoph Schuhmann and
                  Romain Beaumont and
                  Richard Vencu and
                  Cade Gordon and
                  Ross Wightman and
                  Mehdi Cherti and
                  Theo Coombes and
                  Aarush Katta and
                  Clayton Mullis and
                  Mitchell Wortsman and
                  Patrick Schramowski and
                  Srivatsa Kundurthy and
                  Katherine Crowson and
                  Ludwig Schmidt and
                  Robert Kaczmarczyk and
                  Jenia Jitsev},
  editor       = {Sanmi Koyejo and
                  S. Mohamed and
                  A. Agarwal and
                  Danielle Belgrave and
                  K. Cho and
                  A. Oh},
  title        = {{LAION-5B:} An open large-scale dataset for training next generation
                  image-text models},
  booktitle    = {Advances in Neural Information Processing Systems 35: Annual Conference
                  on Neural Information Processing Systems 2022, NeurIPS 2022, New Orleans,
                  LA, USA, November 28 - December 9, 2022},
  year         = {2022},
  url          = {http://papers.nips.cc/paper\_files/paper/2022/hash/a1859debfb3b59d094f3504d5ebb6c25-Abstract-Datasets\_and\_Benchmarks.html},
  timestamp    = {Mon, 08 Jan 2024 16:31:36 +0100},
  biburl       = {https://dblp.org/rec/conf/nips/SchuhmannBVGWCC22.bib},
  address = {New Orleans, LA, USA},
  publisher = {NeurIPS},
  bibsource    = {dblp computer science bibliography, https://dblp.org}
}

@inproceedings{DBLP:conf/naacl/DevlinCLT19,
  author       = {Jacob Devlin and
                  Ming{-}Wei Chang and
                  Kenton Lee and
                  Kristina Toutanova},
  editor       = {Jill Burstein and
                  Christy Doran and
                  Thamar Solorio},
  title        = {{BERT:} Pre-training of Deep Bidirectional Transformers for Language
                  Understanding},
  booktitle    = {Proceedings of the 2019 Conference of the North American Chapter of
                  the Association for Computational Linguistics: Human Language Technologies,
                  {NAACL-HLT} 2019, Minneapolis, MN, USA, June 2-7, 2019, Volume 1 (Long
                  and Short Papers)},
  pages        = {4171--4186},
  publisher    = {Association for Computational Linguistics},
  year         = {2019},
  address = {Minneapolis, MN, USA},
  url          = {https://doi.org/10.18653/v1/n19-1423},
  doi          = {10.18653/V1/N19-1423},
  timestamp    = {Mon, 26 Sep 2022 12:21:55 +0200},
  biburl       = {https://dblp.org/rec/conf/naacl/DevlinCLT19.bib},
  bibsource    = {dblp computer science bibliography, https://dblp.org}
}

@misc{TFDS,
  author = {TensorFlow},
  title = {{TensorFlow Datasets}, A collection of ready-to-use datasets},
  howpublished = {\url{https://www.tensorflow.org/datasets}},
}

@inproceedings{lhoest-etal-2021-datasets,
    title = "Datasets: A Community Library for Natural Language Processing",
    author = "Lhoest, Quentin  and
      Villanova del Moral, Albert  and
      Jernite, Yacine  and
      Thakur, Abhishek  and
      von Platen, Patrick  and
      Patil, Suraj  and
      Chaumond, Julien  and
      Drame, Mariama  and
      Plu, Julien  and
      Tunstall, Lewis  and
      Davison, Joe  and
      {\v{S}}a{\v{s}}ko, Mario  and
      Chhablani, Gunjan  and
      Malik, Bhavitvya  and
      Brandeis, Simon  and
      Le Scao, Teven  and
      Sanh, Victor  and
      Xu, Canwen  and
      Patry, Nicolas  and
      McMillan-Major, Angelina  and
      Schmid, Philipp  and
      Gugger, Sylvain  and
      Delangue, Cl{\'e}ment  and
      Matussi{\`e}re, Th{\'e}o  and
      Debut, Lysandre  and
      Bekman, Stas  and
      Cistac, Pierric  and
      Goehringer, Thibault  and
      Mustar, Victor  and
      Lagunas, Fran{\c{c}}ois  and
      Rush, Alexander  and
      Wolf, Thomas",
    booktitle = "Proceedings of the 2021 Conference on Empirical Methods in Natural Language Processing: System Demonstrations",
    month = nov,
    year = "2021",
    address = "Online and Punta Cana, Dominican Republic",
    publisher = "Association for Computational Linguistics",
    url = "https://aclanthology.org/2021.emnlp-demo.21",
    pages = "175--184",
    abstract = "The scale, variety, and quantity of publicly-available NLP datasets has grown rapidly as researchers propose new tasks, larger models, and novel benchmarks. Datasets is a community library for contemporary NLP designed to support this ecosystem. Datasets aims to standardize end-user interfaces, versioning, and documentation, while providing a lightweight front-end that behaves similarly for small datasets as for internet-scale corpora. The design of the library incorporates a distributed, community-driven approach to adding datasets and documenting usage. After a year of development, the library now includes more than 650 unique datasets, has more than 250 contributors, and has helped support a variety of novel cross-dataset research projects and shared tasks. The library is available at https://github.com/huggingface/datasets.",
    eprint={2109.02846},
    archivePrefix={arXiv},
    primaryClass={cs.CL},
}

@TECHREPORT{Krizhevsky09learningmultiple,
            author={Alex Krizhevsky},
            title={Learning multiple layers of features from tiny images},
            institution={},
            year={2009},
            url = {https://www.cs.utoronto.ca/~kriz/learning-features-2009-TR.pdf}
}

@inproceedings{37648,title	= {Reading Digits in Natural Images with Unsupervised Feature Learning},author	= {Yuval Netzer and Tao Wang and Adam Coates and Alessandro Bissacco and Bo Wu and Andrew Y. Ng},year	= {2011},URL	= {http://ufldl.stanford.edu/housenumbers/nips2011_housenumbers.pdf},booktitle	= {NIPS Workshop on Deep Learning and Unsupervised Feature Learning 2011}}

@inproceedings{DBLP:conf/cvpr/HeZRS16,
  author       = {Kaiming He and
                  Xiangyu Zhang and
                  Shaoqing Ren and
                  Jian Sun},
  title        = {Deep Residual Learning for Image Recognition},
  booktitle    = {2016 {IEEE} Conference on Computer Vision and Pattern Recognition,
                  {CVPR} 2016, Las Vegas, NV, USA, June 27-30, 2016},
  pages        = {770--778},
  publisher    = {{IEEE} Computer Society},
  year         = {2016},
  url          = {https://doi.org/10.1109/CVPR.2016.90},
  doi          = {10.1109/CVPR.2016.90},
  address  = {Las Vegas, NV, USA},
  timestamp    = {Fri, 24 Mar 2023 00:02:57 +0100},
  biburl       = {https://dblp.org/rec/conf/cvpr/HeZRS16.bib},
  bibsource    = {dblp computer science bibliography, https://dblp.org}
}

@inproceedings{DBLP:conf/iclr/DosovitskiyB0WZ21,
  author       = {Alexey Dosovitskiy and
                  Lucas Beyer and
                  Alexander Kolesnikov and
                  Dirk Weissenborn and
                  Xiaohua Zhai and
                  Thomas Unterthiner and
                  Mostafa Dehghani and
                  Matthias Minderer and
                  Georg Heigold and
                  Sylvain Gelly and
                  Jakob Uszkoreit and
                  Neil Houlsby},
  title        = {An Image is Worth 16x16 Words: Transformers for Image Recognition
                  at Scale},
  booktitle    = {9th International Conference on Learning Representations, {ICLR} 2021,
                  Virtual Event, Austria, May 3-7, 2021},
  publisher    = {OpenReview.net},
  year         = {2021},
  url          = {https://openreview.net/forum?id=YicbFdNTTy},
  timestamp    = {Wed, 23 Jun 2021 17:36:39 +0200},
  biburl       = {https://dblp.org/rec/conf/iclr/DosovitskiyB0WZ21.bib},
  bibsource    = {dblp computer science bibliography, https://dblp.org},
  address = {Austria, May 3-7, 2021}
}

@inproceedings{DBLP:conf/nips/VaswaniSPUJGKP17,
  author       = {Ashish Vaswani and
                  Noam Shazeer and
                  Niki Parmar and
                  Jakob Uszkoreit and
                  Llion Jones and
                  Aidan N. Gomez and
                  Lukasz Kaiser and
                  Illia Polosukhin},
  editor       = {Isabelle Guyon and
                  Ulrike von Luxburg and
                  Samy Bengio and
                  Hanna M. Wallach and
                  Rob Fergus and
                  S. V. N. Vishwanathan and
                  Roman Garnett},
  title        = {Attention is All you Need},
  booktitle    = {Advances in Neural Information Processing Systems 30: Annual Conference
                  on Neural Information Processing Systems 2017, December 4-9, 2017,
                  Long Beach, CA, {USA}},
  pages        = {5998--6008},
  year         = {2017},
  url          = {https://proceedings.neurips.cc/paper/2017/hash/3f5ee243547dee91fbd053c1c4a845aa-Abstract.html},
  timestamp    = {Thu, 21 Jan 2021 15:15:21 +0100},
  biburl       = {https://dblp.org/rec/conf/nips/VaswaniSPUJGKP17.bib},
  address = {Long Beach, CA, {USA}},
  publisher = {NeurIPS},
  bibsource    = {dblp computer science bibliography, https://dblp.org}
}

@inproceedings{DBLP:conf/nips/ZhangZL15,
  author       = {Xiang Zhang and
                  Junbo Jake Zhao and
                  Yann LeCun},
  editor       = {Corinna Cortes and
                  Neil D. Lawrence and
                  Daniel D. Lee and
                  Masashi Sugiyama and
                  Roman Garnett},
  title        = {Character-level Convolutional Networks for Text Classification},
  booktitle    = {Advances in Neural Information Processing Systems 28: Annual Conference
                  on Neural Information Processing Systems 2015, December 7-12, 2015,
                  Montreal, Quebec, Canada},
  pages        = {649--657},
  year         = {2015},
  url          = {https://proceedings.neurips.cc/paper/2015/hash/250cf8b51c773f3f8dc8b4be867a9a02-Abstract.html},
  timestamp    = {Mon, 16 May 2022 15:41:51 +0200},
  biburl       = {https://dblp.org/rec/conf/nips/ZhangZL15.bib},
  address = {Montreal, Quebec, Canada},
  publisher = {NeurIPS},
  bibsource    = {dblp computer science bibliography, https://dblp.org}
}

@inproceedings{DBLP:conf/nips/ZhangS19a,
  author       = {Biao Zhang and
                  Rico Sennrich},
  editor       = {Hanna M. Wallach and
                  Hugo Larochelle and
                  Alina Beygelzimer and
                  Florence d'Alch{\'{e}}{-}Buc and
                  Emily B. Fox and
                  Roman Garnett},
  title        = {Root Mean Square Layer Normalization},
  booktitle    = {Advances in Neural Information Processing Systems 32: Annual Conference
                  on Neural Information Processing Systems 2019, NeurIPS 2019, December
                  8-14, 2019, Vancouver, BC, Canada},
  pages        = {12360--12371},
  year         = {2019},
  publisher = {NeurIPS},
  address = {Vancouver, BC, Canada},
  url          = {https://proceedings.neurips.cc/paper/2019/hash/1e8a19426224ca89e83cef47f1e7f53b-Abstract.html},
  timestamp    = {Fri, 21 Oct 2022 14:36:34 +0200},
  biburl       = {https://dblp.org/rec/conf/nips/ZhangS19a.bib},
  bibsource    = {dblp computer science bibliography, https://dblp.org}
}

@article{bal2016medium,
  author       = {Henri E. Bal and
                  Dick H. J. Epema and
                  Cees de Laat and
                  Rob van Nieuwpoort and
                  John W. Romein and
                  Frank J. Seinstra and
                  Cees Snoek and
                  Harry A. G. Wijshoff},
  title        = {A Medium-Scale Distributed System for Computer Science Research: Infrastructure
                  for the Long Term},
  journal      = {Computer},
  volume       = {49},
  number       = {5},
  pages        = {54--63},
  year         = {2016},
  url          = {https://doi.org/10.1109/MC.2016.127},
  doi          = {10.1109/MC.2016.127},
  timestamp    = {Wed, 07 Dec 2022 23:03:40 +0100},
  biburl       = {https://dblp.org/rec/journals/computer/BalELNRSSW16.bib},
  bibsource    = {dblp computer science bibliography, https://dblp.org}
}
\newpage

\appendix
\section{Artifact}
Even though our core contribution is theoretical, the experiments provide supporting evidence. To facilitate independent verification, we provide a complete artifact to reproduce all empirical results reported in Section~7 and Appendix C and F\footnote{\url{https://github.com/bilgehanertan/dpsgd-fundamental-limitations}}. Our implementation is based on open-source frameworks and public datasets, and all experiments are run with Python~3.11 (the required packages is provided in the artifact). \href{https://github.com/bilgehanertan/dpsgd-fundamental-limitations}{Link to the Artifact}.

\subsection{Main Empirical Results (Section~7 and Appendix~E)}
All datasets used in our experiments are public and are downloaded automatically when running the code. Specifically, CIFAR-10, CIFAR-100, and SVHN are loaded via TensorFlow Datasets (TFDS)~\cite{TFDS}. AG News is loaded via \texttt{torchtext} if available, and otherwise via HuggingFace \texttt{datasets} (\texttt{ag\_news})~\cite{lhoest-etal-2021-datasets}. None of these require authentication. Detailed documentation can also be found in the \texttt{README.md} file in the artifact. One can reproduce our results in two ways:

\paragraph{Reproduce from existing runs (recommended).}
For each model/dataset pair and batch size, we provide a corresponding script in folder \texttt{reproduce\_scripts/} that calls \texttt{reproduce.py} on the appropriate run directories under \texttt{results/}. This script reads the \emph{saved} experimental specification from each run directory, including:
(i) the per-epoch hyperparameters used in that run (recorded per row in \texttt{utility\_sigma\_th.csv}),
(ii) the selected clipping constant $C^{\dagger}$,
(iii) the noise level $\sigma$ (derived from our bound),
and (iv) the deterministic seed plan in \texttt{seeds.json}.
It then re-runs training and evaluation and writes under the run directory: \emph{reproduce/}. 

\paragraph{Run from scratch (optional).}
To generate new run directories, reviewers can execute \texttt{run\_dp.py}, which (a) selects $C^{\dagger}$ using a 1-epoch clip-only procedure, and (b) evaluates two training conditions (clean training and DP-SGD) for each epoch budget in \{1, 10, 25\}. Each epoch is an independent run (trained from the same initialization scheme but with its own derived seed, and hyperparameters), and these runs are executed sequentially.

\paragraph{Hardware and determinism.}
Depending on dataset size, model architecture, and batch size, GPU memory requirements vary (approximately 16--24GB for some configurations and up to $\sim$80GB for others). While we aim for determinism via fixed seeds, minor numerical differences across GPU/driver stacks and mixed precision behavior may lead to negligible deviations in reproduced metrics.

\subsection{Figures in Appendix F}
To reproduce the plots in Appendix F, Python 3 with the \texttt{matplotlib}, \texttt{numpy}, and \texttt{scipy} libraries is required. The figures can be generated by running the script \texttt{appendix\_f.py} provided in the artifact above.
\section{Implementation Details}
\label{app:implementation}

This appendix provides a detailed description of our experimental pipeline, including the DP-SGD implementation, sampling mechanisms, microbatching strategy, and the neural network architectures used in our experiments. All implementations are designed to closely reflect practical DP-SGD deployments while preserving alignment with the theoretical assumptions used in the main body of the paper. One can find our artifact at \href{https://anonymous.4open.science/r/dpsgd-fundamental-limitation-jaxdp-7655/README.md}{Artifact Link}.

\subsection{Experimental Pipeline and DP-SGD Implementation}
\label{app:dpsgd}

All experiments are implemented in \texttt{JAX} using the official \emph{DP-SGD} primitives from the \emph{\texttt{jax-privacy}} library.\footnote{For DP-SGD computing, we use the \texttt{jax-privacy} implementation at version 1.0.0. For microbatching, we use the latest branch.}
We rely on the library’s reference implementation of DP-SGD without modifying its core components. We evaluate our bounds on standard image and text benchmarks, including CIFAR-10/100~\cite{Krizhevsky09learningmultiple} (50k training examples), SVHN~\cite{37648} ($\sim$73k training examples), and the AG News corpus~\cite{DBLP:conf/nips/ZhangZL15} (120k training examples). 

\subsection{Microbatching}
\label{app:microbatching}

To efficiently compute per-example clipped gradients, we use the \emph{unchanged microbatching mechanism} provided by \texttt{jax-privacy}. Microbatching splits a logical batch into smaller microbatches, computes gradients for each microbatch, and accumulates clipped gradients incrementally.

Crucially, this microbatching strategy is an \emph{implementation optimization only}: it does not alter the effective privacy mechanism. Padding examples are explicitly handled by inserting dummy indices (marked by $-1$) whose gradients are zeroed out, ensuring that padding has no effect on either optimization or privacy.

Throughout all experiments, microbatching is used solely to reduce memory usage and compilation overhead, and does not change the noise scale, clipping bound, or privacy accounting.

\subsection{Sampling Mechanisms}
\label{app:sampling}
We explicitly control the batch formation mechanism to instantiate the sampling regimes studied in the main paper, up to standard practical considerations required for efficient accelerator execution.

\paragraph{Fixed-size shuffling.}
In the shuffled regime, at the beginning of each epoch we draw a fresh uniform random permutation of the $N$ training examples and iterate through it in contiguous fixed-size batches of size $b$.
To keep step shapes constant, we discard the final incomplete remainder batch and thus use only the first $\lfloor N/b\rfloor \cdot b$ examples per epoch, yielding exactly $\lfloor N/b\rfloor$ updates per epoch.
This corresponds to the standard shuffled DP-SGD mechanism with equal-sized, disjoint batches, applied to an effective dataset of size $\lfloor N/b\rfloor \cdot b$.

\paragraph{Poisson subsampling.}
In the Poisson regime, at each step we include each example independently with probability $q$, yielding a randomly sized sampled index set.
For JIT compatibility, the resulting variable-length index sets are padded \emph{after sampling} with dummy indices (marked by $-1$) to a fixed compiled shape (a multiple of the microbatch size).
Padded indices are mapped to a safe index for data loading and are accompanied by an \texttt{is\_padding\_example} mask, ensuring that their contributions to the gradient are exactly zero.
This padding procedure also handles the corner case of empty sampled batches by producing a non-empty dummy batch of fixed shape.

\paragraph{Normalization under Poisson sampling.}
In our DP-SGD implementation, we normalize gradients and calibrate Gaussian noise using the \emph{expected} batch size $b = qN$ under Poisson subsampling, treating non-sampled examples as contributing zero gradient.
For shuffling-based training, each update operates on exactly $b$ examples, and normalization uses the realized (non-padding) batch size.

\subsection{Vision Transformer Architectures}
\label{app:vit}

For image classification experiments, we use a minimal Vision Transformer (ViT) adapted to $32 \times 32$ images (e.g., CIFAR-style datasets). Images are split into non-overlapping patches using a convolutional patch embedding layer with kernel size and stride equal to the patch size.

The embedded patches are flattened, prepended with a learnable \texttt{[CLS]} token, and combined with learned positional embeddings. The resulting sequence is processed by a stack of Transformer encoder blocks using a \emph{pre-layer normalization} design. Each block consists of multi-head self-attention followed by a feed-forward MLP with GELU activations and residual connections.

To simplify experiments, all dropout rates are set to zero. The final representation of the \texttt{[CLS]} token is passed through a linear classification head. We evaluate multiple ViT configurations (Tiny, Small, Base) that vary in depth, width, and number of attention heads, while keeping the architectural structure fixed.

\subsubsection{Vision Transformer Variants}
\label{app:vit-variants}

We evaluate multiple Vision Transformer (ViT) variants that share the same architectural template but differ in width, depth, and attention capacity. All variants operate on $32 \times 32$ images and use a fixed patch size of $4 \times 4$, resulting in $8 \times 8 = 64$ image patches per input.

\paragraph{Common Architecture.}
All ViT variants follow the same high-level structure:
\begin{itemize}
    \item Patch embedding via a convolutional layer with kernel size and stride equal to the patch size.
    \item A learnable \texttt{[CLS]} token prepended to the patch sequence.
    \item Learned absolute positional embeddings.
    \item A stack of Transformer encoder blocks with pre-layer normalization.
    \item Classification via the final \texttt{[CLS]} representation.
\end{itemize}

\paragraph{ViT-Tiny.}
The \texttt{vit\_tiny\_cifar} variant is designed to be lightweight and DP-friendly. It uses a narrow embedding dimension and few attention heads, reducing both parameter count and gradient variance.

\begin{itemize}
    \item Embedding dimension: $d_{\text{model}} = 192$
    \item Number of layers: $12$
    \item Attention heads: $3$
    \item MLP expansion ratio: $4$
\end{itemize}

\paragraph{ViT-Small.}
The \texttt{vit\_small\_cifar} variant increases representational capacity while retaining the same depth.

\begin{itemize}
    \item Embedding dimension: $d_{\text{model}} = 384$
    \item Number of layers: $12$
    \item Attention heads: $6$
    \item MLP expansion ratio: $4$
\end{itemize}

\paragraph{ViT-Base.}
The \texttt{vit\_base\_cifar} variant mirrors standard ViT-Base scaling laws adapted to small images.

\begin{itemize}
    \item Embedding dimension: $d_{\text{model}} = 768$
    \item Number of layers: $12$
    \item Attention heads: $12$
    \item MLP expansion ratio: $4$
\end{itemize}

\paragraph{ViT Architecture Schematic.}
For clarity, the shared architecture of all ViT variants is summarized below:

\begin{verbatim}
Input Image (32x32xC)
   ↓ Patch Conv (4x4, stride 4)
64 Patch Embeddings (d_model)
   ↓
[CLS] Token + Positional Embedding
   ↓
Transformer Encoder × L layers
   ↓
LayerNorm
   ↓
CLS Token
   ↓
Linear Classifier
\end{verbatim}

\subsection{Text Transformer Architectures}
\label{app:text}

For text classification tasks, we use encoder-only Transformer models with RMS normalization and SwiGLU feed-forward layers. Input tokens are embedded using learned token and positional embeddings and processed by a stack of Transformer blocks.

Each block employs RMSNorm~\cite{DBLP:conf/nips/ZhangS19a} instead of LayerNorm to match common practice in modern Transformer implementations. The feed-forward sublayer uses a SwiGLU activation, which empirically improves optimization stability under noisy gradients. As with the vision models, dropout is disabled throughout.

Attention masks are used to prevent padded tokens from contributing to attention scores. Classification is performed by pooling the representation of the first token and applying a linear output head. We evaluate multiple model scales by varying depth, hidden dimension, and sequence length.

\subsubsection{Tokenization and Vocabulary Construction}
\label{app:tokenization}
To strictly isolate the impact of privacy noise on the optimization process, we train all text models from scratch with random initialization rather than fine-tuning pretrained language models (e.g., BERT~\cite{DBLP:conf/naacl/DevlinCLT19}). Hence, we employ a standalone tokenization and vocabulary construction pipeline instead of relying on pretrained tokenizers.

We use a simple frequency-based tokenizer with a fixed vocabulary size of $V = 30{,}000$. Text is first lowercased and tokenized via whitespace splitting. The vocabulary is then constructed by selecting the $V$ most frequent tokens in the training corpus. All remaining tokens are mapped to a generic \texttt{[UNK]} symbol, and sequences are padded with a dedicated \texttt{[PAD]} token. A special \texttt{[CLS]} token is prepended to each sequence for classification.

\subsubsection{Text Transformer Variants}
\label{app:text-variants}

For text classification, we evaluate a family of Transformer encoder models that differ in sequence length, width, and depth. All models share the same architectural backbone and are trained from scratch.

\paragraph{Common Architecture.}
All text models consist of:
\begin{itemize}
    \item Token and positional embeddings.
    \item A stack of Transformer blocks with pre-normalization using RMSNorm.
    \item Multi-head self-attention followed by a SwiGLU feed-forward network.
    \item Classification via the first (CLS) token.
\end{itemize}

\paragraph{Tiny-128.}
The \texttt{tiny\_128} model is a compact Transformer.

\begin{itemize}
    \item Maximum sequence length: $128$
    \item Embedding dimension: $256$
    \item Number of layers: $4$
    \item Attention heads: $4$
    \item Feed-forward dimension: $1024$
\end{itemize}

\paragraph{Small-128.}
The \texttt{small\_128} variant increases depth and width while keeping the same sequence length.

\begin{itemize}
    \item Maximum sequence length: $128$
    \item Embedding dimension: $512$
    \item Number of layers: $6$
    \item Attention heads: $8$
    \item Feed-forward dimension: $2048$
\end{itemize}

\paragraph{Base-256}
\begin{itemize}
    \item Embedding dimension: $768$
    \item Number of layers: $12$
    \item Attention heads: $12$
    \item Feed-forward dimension: $3072$
    \item Maximum sequence length: $256$
\end{itemize}

Increasing sequence length substantially raises computational cost and attention sensitivity.

\paragraph{Text Transformer Architecture Schematic.}
The shared structure of all text Transformer variants is summarized below:

\begin{verbatim}
Input Tokens (B x T)
   ↓ Token + Positional Embedding
   ↓
[RMSNorm → Self-Attention → Residual]
[RMSNorm → SwiGLU FFN → Residual]
        × L layers
   ↓
RMSNorm
   ↓
CLS Token
   ↓
Linear Classifier
\end{verbatim}

\subsection{Convolutional Architectures}
\label{app:cnn}

For convolutional image classification experiments, we use ResNet-style and Wide ResNet architectures adapted to small-resolution images (e.g., CIFAR-10, CIFAR-100, SVHN). All convolutional models are trained from scratch and use normalization layers that do not rely on batch-dependent statistics, making them compatible with DP-SGD.

\paragraph{Normalization.}
By default, we replace Batch Normalization with Group Normalization. Specifically, we use an adaptive GroupNorm variant that selects the largest valid number of groups not exceeding $32$ based on the channel dimension. This choice avoids batch-dependent running statistics while remaining close in spirit to standard GroupNorm. BatchNorm is not used in DP experiments.

\paragraph{CIFAR-style stem.}
All convolutional models use a CIFAR-style stem consisting of a single $3\times3$ convolution with stride $1$ and no max-pooling, followed by normalization and ReLU. 

\subsubsection{ResNet Architectures}
\label{app:resnet}

We evaluate ResNet-18 and ResNet-34 architectures adapted to CIFAR-style inputs. Both models follow the standard residual block structure with two $3\times3$ convolutions per block and identity skip connections, with projection shortcuts used when the spatial resolution or channel dimension changes.

\paragraph{ResNet-18.}
ResNet-18 consists of four stages with $\{2,2,2,2\}$ residual blocks and channel widths $\{64,128,256,512\}$. Spatial downsampling is performed by strided convolutions at the beginning of each stage (except the first). Global average pooling is applied before a linear classification head.

\paragraph{ResNet-34.}
ResNet-34 increases depth using $\{3,4,6,3\}$ residual blocks while keeping the same stage-wise channel configuration. This model is used primarily on CIFAR-100 to evaluate the effect of increased depth under DP-SGD.

\paragraph{Activation and regularization.}
All ResNet variants use ReLU activations. Dropout is disabled throughout to avoid introducing additional sources of randomness beyond DP noise.

\subsubsection{Wide ResNet Architectures}
\label{app:wideresnet}

To study the effect of increased width, we evaluate a Wide ResNet with depth $28$ and widening factor $10$ (WideResNet-28$\times$10). This architecture follows the standard Wide ResNet design, consisting of $6n+4$ layers grouped into three stages with increasing channel widths.

Each residual block contains two $3\times3$ convolutions with normalization and ReLU activations, and projection shortcuts are used when dimensions mismatch. As with standard Wide ResNets, widening increases representational capacity without increasing depth.

WideResNet-28$\times$10 is evaluated on SVHN, where wider architectures are known to perform well, allowing us to examine the interaction between model width and DP-SGD noise.

\paragraph{Architectural schematic.}
The common structure of the convolutional models is summarized below:

\begin{verbatim}
Input Image (32x32xC)
   ↓
3x3 Conv + Norm + ReLU
   ↓
Residual Blocks × stages
   ↓
Global Average Pooling
   ↓
Linear Classifier
\end{verbatim}

\subsection{Reproducibility and Determinism}
\label{app:reproducibility}

All experiments use explicitly seeded random number generators for sampling, shuffling, and noise generation. Batch indices, padding behavior, and microbatch ordering are deterministic given the seed. This ensures reproducibility across runs and enables controlled comparisons between Poisson subsampling and shuffling-based regimes.

\section{Extended Experimental Results}
Tables~\ref{tab:agnews-tx-small-tiny-128-merged}--\ref{tab:svhn-wideresnet28x10-merged} present the complete evaluation across all architecture variants (Transformer-Tiny/Small, ViT-Tiny/Small/Base, WideResNet-28$\times$10, and ResNet-34) and datasets (SVHN, AG News, CIFAR-10, and CIFAR-100) omitted from the main text. Across all settings, we observe the same behavior as discussed in Section 7. This effect persists across model scales, batch sizes, and sampling mechanisms, and is visible even for large-capacity models such as WideResNet-28$\times$10 and ViT-Base. Moreover, random shuffling and Poisson subsampling consistently yield comparable utility under DP, indicating that the observed degradation is not an artifact of the sampling scheme but rather a consequence of the underlying privacy constraints. These extended results reinforce our central claim that the privacy--utility trade-off identified by our analysis manifests broadly across architectures and datasets.

\begin{table}[htbp]
\centering
\small
\caption{AG News / Transformer-Tiny ($\approx$3M parameters). Rows correspond to batch size (BS) and epoch budget; columns compare Random Shuffling vs.\ Poisson Subsampling under $\sigma{=}0$ and DP-SGD at the shown $\sigma$.}
\label{tab:agnews-tx-small-tiny-128-merged}
\begin{tabular}{llrrrr}
\toprule
\multirow{2}{*}{BS} & \multirow{2}{*}{Epoch} & \multicolumn{2}{c}{shuffling} & \multicolumn{2}{c}{poisson} \\
 &  & $\sigma=0$ & DP ($\sigma$) & $\sigma=0$ & DP ($\sigma$) \\
\midrule
\multirow{3}{*}{128} & 1 & 87.5 & 77.3 (0.27) & 86.5 & 76.1 (0.27) \\
 & 10 & 90.9 & 83.3 (0.27) & 90.2 & 83.6 (0.27) \\
 & 25 & 91.2 & 81.1 (0.27) & 90.3 & 79.3 (0.27) \\
\midrule
\multirow{3}{*}{256} & 1 & 87.0 & 79.1 (0.29) & 87.3 & 78.0 (0.29) \\
 & 10 & 91.2 & 83.6 (0.29) & 90.3 & 80.6 (0.29) \\
 & 25 & 90.2 & 85.8 (0.29) & 91.4 & 85.2 (0.29) \\
\midrule
\multirow{3}{*}{512} & 1 & 85.3 & 80.1 (0.30) & 84.1 & 78.5 (0.30) \\
 & 10 & 90.3 & 84.2 (0.30) & 87.6 & 84.3 (0.30) \\
 & 25 & 91.1 & 85.2 (0.30) & 90.0 & 85.4 (0.30) \\
\midrule
\multirow{3}{*}{1024} & 1 & 80.5 & 75.1 (0.32) & 81.9 & 74.8 (0.32) \\
 & 10 & 87.3 & 86.4 (0.32) & 86.7 & 85.8 (0.32) \\
 & 25 & 90.4 & 86.3 (0.32) & 90.1 & 87.0 (0.32) \\
\bottomrule
\end{tabular}
\end{table}

\begin{table}[htbp]
\centering
\small
\caption{AG News / Transformer-Small ($\approx$40M parameters). Rows correspond to batch size (BS) and epoch budget; columns compare Random Shuffling vs.\ Poisson Subsampling under $\sigma{=}0$ and DP-SGD at the shown $\sigma$.}
\label{tab:agnews-tx-small-small-128-merged}
\begin{tabular}{llrrrr}
\toprule
\multirow{2}{*}{BS} & \multirow{2}{*}{Epoch} & \multicolumn{2}{c}{shuffling} & \multicolumn{2}{c}{poisson} \\
 &  & $\sigma=0$ & DP ($\sigma$) & $\sigma=0$ & DP ($\sigma$) \\
\midrule
\multirow{3}{*}{128} & 1 & 87.9 & 76.5 (0.27) & 87.4 & 75.3 (0.27) \\
 & 10 & 91.3 & 71.4 (0.27) & 90.4 & 70.3 (0.27) \\
 & 25 & 90.8 & 79.0 (0.27) & 91.0 & 77.0 (0.27) \\
\midrule
\multirow{3}{*}{256} & 1 & 87.5 & 77.3 (0.29) & 86.6 & 75.7 (0.29) \\
 & 10 & 90.9 & 81.8 (0.29) & 90.0 & 81.4 (0.29) \\
 & 25 & 90.0 & 66.4 (0.29) & 90.3 & 64.2 (0.29) \\
\midrule
\multirow{3}{*}{512} & 1 & 86.2 & 77.8 (0.30) & 84.6 & 76.0 (0.30) \\
 & 10 & 90.3 & 83.7 (0.30) & 90.4 & 83.2 (0.30) \\
 & 25 & 90.9 & 83.0 (0.30) & 90.7 & 83.6 (0.30) \\
\midrule
\multirow{3}{*}{1024} & 1 & 82.1 & 75.1 (0.32) & 81.2 & 75.0 (0.32) \\
 & 10 & 89.5 & 86.7 (0.32) & 89.7 & 86.8 (0.32) \\
 & 25 & 87.5 & 87.8 (0.32) & 87.5 & 87.6 (0.32) \\
\bottomrule
\end{tabular}
\end{table}

\begin{table}[htbp]
\centering
\small
\caption{CIFAR-10 / ViT-Tiny ($\approx$12M parameters). Rows correspond to batch size (BS) and epoch budget; columns compare Random Shuffling vs.\ Poisson Subsampling under $\sigma{=}0$ and DP-SGD at the shown $\sigma$.}
\label{tab:cifar10-vit-tiny-cifar-merged}
\begin{tabular}{llrrrr}
\toprule
\multirow{2}{*}{BS} & \multirow{2}{*}{Epoch} & \multicolumn{2}{c}{shuffling} & \multicolumn{2}{c}{poisson} \\
 &  & $\sigma=0$ & DP ($\sigma$) & $\sigma=0$ & DP ($\sigma$) \\
\midrule
\multirow{3}{*}{128} & 1 & 38.8 & 38.9 (0.29) & 43.8 & 38.1 (0.29) \\
 & 10 & 57.8 & 41.8 (0.29) & 58.3 & 40.8 (0.29) \\
 & 25 & 67.6 & 48.8 (0.29) & 66.1 & 48.5 (0.29) \\
\midrule
\multirow{3}{*}{256} & 1 & 38.5 & 39.3 (0.31) & 42.6 & 40.7 (0.31) \\
 & 10 & 58.4 & 31.9 (0.31) & 57.1 & 34.8 (0.31) \\
 & 25 & 68.1 & 29.4 (0.31) & 65.4 & 37.1 (0.31) \\
\midrule
\multirow{3}{*}{512} & 1 & 39.5 & 38.4 (0.33) & 40.2 & 39.3 (0.33) \\
 & 10 & 57.8 & 46.6 (0.33) & 56.5 & 45.4 (0.33) \\
 & 25 & 62.7 & 52.8 (0.33) & 59.6 & 52.0 (0.33) \\
\midrule
\multirow{3}{*}{1024} & 1 & 35.3 & 35.3 (0.36) & 35.7 & 35.8 (0.36) \\
 & 10 & 55.6 & 50.5 (0.36) & 54.4 & 50.5 (0.36) \\
 & 25 & 61.8 & 56.5 (0.36) & 60.9 & 55.0 (0.36) \\
\bottomrule
\end{tabular}
\end{table}

\begin{table}[htbp]
\centering
\small
\caption{CIFAR-10 / ViT-Small ($\approx$19M parameters). Rows correspond to batch size (BS) and epoch budget; columns compare Random Shuffling vs.\ Poisson Subsampling under $\sigma{=}0$ and DP-SGD at the shown $\sigma$.}
\label{tab:cifar10-vit-small-cifar-merged}
\begin{tabular}{llrrrr}
\toprule
\multirow{2}{*}{BS} & \multirow{2}{*}{Epoch} & \multicolumn{2}{c}{shuffling} & \multicolumn{2}{c}{poisson} \\
 &  & $\sigma=0$ & DP ($\sigma$) & $\sigma=0$ & DP ($\sigma$) \\
\midrule
\multirow{3}{*}{128} & 1 & 44.7 & 38.1 (0.29) & 45.8 & 39.0 (0.29) \\
 & 10 & 61.0 & 43.5 (0.29) & 59.1 & 36.0 (0.29) \\
 & 25 & 64.0 & 51.1 (0.29) & 62.5 & 43.3 (0.29) \\
\midrule
\multirow{3}{*}{256} & 1 & 41.9 & 40.3 (0.31) & 43.3 & 40.0 (0.31) \\
 & 10 & 57.7 & 36.8 (0.31) & 57.4 & 38.7 (0.31) \\
 & 25 & 63.4 & 27.8 (0.31) & 62.7 & 36.6 (0.31) \\
\midrule
\multirow{3}{*}{512} & 1 & 39.4 & 37.9 (0.33) & 42.3 & 38.2 (0.33) \\
 & 10 & 56.1 & 50.5 (0.33) & 56.9 & 50.3 (0.33) \\
 & 25 & 63.4 & 51.7 (0.33) & 60.9 & 47.0 (0.33) \\
\midrule
\multirow{3}{*}{1024} & 1 & 37.1 & 33.9 (0.36) & 36.8 & 33.5 (0.36) \\
 & 10 & 55.6 & 49.7 (0.36) & 55.1 & 49.3 (0.36) \\
 & 25 & 61.1 & 55.4 (0.36) & 60.7 & 54.4 (0.36) \\
\bottomrule
\end{tabular}
\end{table}

\begin{table}[htbp]
\centering
\small
\caption{CIFAR-10 / ViT-Base ($\approx$85M parameters). Rows correspond to batch size (BS) and epoch budget; columns compare Random Shuffling vs.\ Poisson Subsampling under $\sigma{=}0$ and DP-SGD at the shown $\sigma$.}
\label{tab:cifar10-vit-base-cifar-merged}
\begin{tabular}{llrrrr}
\toprule
\multirow{2}{*}{BS} & \multirow{2}{*}{Epoch} & \multicolumn{2}{c}{shuffling} & \multicolumn{2}{c}{poisson} \\
 &  & $\sigma=0$ & DP ($\sigma$) & $\sigma=0$ & DP ($\sigma$) \\
\midrule
\multirow{3}{*}{128} & 1 & 42.4 & 39.1 (0.29) & 43.9 & 38.6 (0.29) \\
 & 10 & 60.3 & 36.5 (0.29) & 59.0 & 33.5 (0.29) \\
 & 25 & 61.9 & 9.8 (0.29) & 61.1 & 10.3 (0.29) \\
\midrule
\multirow{3}{*}{256} & 1 & 37.9 & 40.2 (0.31) & 41.5 & 40.2 (0.31) \\
 & 10 & 57.5 & 33.4 (0.31) & 55.9 & 31.6 (0.31) \\
 & 25 & 61.8 & 31.5 (0.31) & 63.2 & 31.8 (0.31) \\
\midrule
\multirow{3}{*}{512} & 1 & 42.2 & 38.8 (0.33) & 40.2 & 39.4 (0.33) \\
 & 10 & 56.1 & 48.9 (0.33) & 54.9 & 49.7 (0.33) \\
 & 25 & 56.0 & 50.2 (0.33) & 56.3 & 47.1 (0.33) \\
\midrule
\multirow{3}{*}{1024} & 1 & 36.8 & 34.7 (0.36) & 36.1 & 34.6 (0.36) \\
 & 10 & 55.5 & 50.8 (0.36) & 53.4 & 49.9 (0.36) \\
 & 25 & 56.5 & 52.0 (0.36) & 56.3 & 52.3 (0.36) \\
\bottomrule
\end{tabular}
\end{table}

\begin{table}[htbp]
\centering
\small
\caption{CIFAR-100 / ResNet-34. Rows correspond to batch size (BS) and epoch budget; columns compare Random Shuffling vs.\ Poisson Subsampling under $\sigma{=}0$ and DP-SGD at the shown $\sigma$.}
\label{tab:cifar100-resnet34-merged}
\begin{tabular}{llrrrr}
\toprule
\multirow{2}{*}{BS} & \multirow{2}{*}{Epoch} & \multicolumn{2}{c}{shuffling} & \multicolumn{2}{c}{poisson} \\
 &  & $\sigma=0$ & DP ($\sigma$) & $\sigma=0$ & DP ($\sigma$) \\
\midrule
\multirow{3}{*}{128} & 1 & 8.7 & 2.6 (0.29) & 8.7 & 2.5 (0.29) \\
 & 10 & 40.9 & 7.6 (0.29) & 42.8 & 7.8 (0.29) \\
 & 25 & 48.4 & 9.9 (0.29) & 48.5 & 10.6 (0.29) \\
\midrule
\multirow{3}{*}{256} & 1 & 7.1 & 3.2 (0.31) & 6.0 & 3.0 (0.31) \\
 & 10 & 37.6 & 13.7 (0.31) & 40.5 & 13.0 (0.31) \\
 & 25 & 48.1 & 14.4 (0.31) & 44.8 & 13.1 (0.31) \\
\midrule
\multirow{3}{*}{512} & 1 & 4.0 & 3.0 (0.33) & 4.6 & 3.1 (0.33) \\
 & 10 & 36.2 & 16.1 (0.33) & 37.2 & 16.7 (0.33) \\
 & 25 & 46.9 & 18.3 (0.33) & 46.7 & 20.0 (0.33) \\
\midrule
\multirow{3}{*}{1024} & 1 & 3.1 & 1.7 (0.36) & 2.6 & 2.1 (0.36) \\
 & 10 & 28.3 & 15.3 (0.36) & 28.5 & 15.6 (0.36) \\
 & 25 & 32.9 & 24.2 (0.36) & 35.7 & 21.9 (0.36) \\
\bottomrule
\end{tabular}
\end{table}

\begin{table}[htbp]
\centering
\small
\caption{SVHN / WideResNet-28x10 ($\approx$36M parameters). Rows correspond to batch size (BS) and epoch budget; columns compare Random Shuffling vs.\ Poisson Subsampling under $\sigma{=}0$ and DP-SGD at the shown $\sigma$.}
\label{tab:svhn-wideresnet28x10-merged}
\begin{tabular}{llrrrr}
\toprule
\multirow{2}{*}{BS} & \multirow{2}{*}{Epoch} & \multicolumn{2}{c}{shuffling} & \multicolumn{2}{c}{poisson} \\
 &  & $\sigma=0$ & DP ($\sigma$) & $\sigma=0$ & DP ($\sigma$) \\
\midrule
\multirow{3}{*}{128} & 1 & 94.9 & 85.9 (0.24) & 94.6 & 83.7 (0.24) \\
 & 10 & 97.7 & 89.0 (0.24) & 97.7 & 89.0 (0.24) \\
 & 25 & 97.7 & 89.8 (0.24) & 97.6 & 89.5 (0.24) \\
\midrule
\multirow{3}{*}{256} & 1 & 94.8 & 87.4 (0.25) & 95.0 & 85.4 (0.25) \\
 & 10 & 97.5 & 91.4 (0.25) & 97.4 & 91.4 (0.25) \\
 & 25 & 97.4 & 91.7 (0.25) & 97.3 & 92.3 (0.25) \\
\midrule
\multirow{3}{*}{512} & 1 & 92.3 & 88.7 (0.27) & 91.9 & 87.3 (0.27) \\
 & 10 & 97.2 & 92.0 (0.27) & 97.4 & 92.3 (0.27) \\
 & 25 & 97.4 & 93.2 (0.27) & 96.9 & 93.1 (0.27) \\
\bottomrule
\end{tabular}
\end{table}

\section{Proof of Lemma 6.4}
\label{app:proof-l64}
\begin{proof}[Proof of Lemma 6.4]
We upper bound the Poisson-subsampled trade-off curve by conditioning on the number of times the
distinguishing record is sampled and reducing the resulting experiment to a two-branch mixture.
First, recall from Section~\ref{sec:np-lemma} that for any (possibly randomized) test $\phi$,
\[
\alpha(\phi) = \mathbf{E}[\phi \mid d],
\qquad
\beta(\phi) = 1 - \mathbf{E}[\phi \mid d'],
\]
and the trade-off function is
\[
f(a) = \inf_{\phi:\,\alpha(\phi)\le a} \beta(\phi).
\]
\paragraph{Step 1: Conditioning on $K$ and coarsening $\{K\ge 1\}$.}
Let $K \sim \mathrm{Binomial}(M,q)$ denote the number of rounds in which the differentiating record
is included during the epoch, and let $p:=\Pr[K=0]=(1-q)^M$.

We write the Poisson experiment as a mixture over $K$.
On the event $\{K\ge 1\}$, the record is included at least once; to obtain an upper bound it is
convenient to \emph{coarsen} this event by replacing the conditional law given $\{K\ge 1\}$ by the
conditional law given $\{K=1\}$. Intuitively, this discards information because observing the record multiple times can only help
the adversary.
Thus, the true Poisson trade-off curve is upper bounded by the trade-off curve of the
two-branch mixture experiment with
\[
K=0 \text{ with probability } p,
\qquad
K=1 \text{ with probability } 1-p,
\]
where on the second branch we use the conditional law of the original mechanism given $K=1$.
We denote this mixture trade-off curve by $f_{\mathrm{mix}}$; then
\begin{equation}
\label{eq:pois-le-mix}
f_{\mathrm{pois}}(\alpha)\ \le\ f_{\mathrm{mix}}(\alpha)
\qquad \text{for all }\alpha\in[0,1].
\end{equation}

\paragraph{Step 2: Analysis of the $K=0$ branch (never sampled).}
We analyze the hypothesis-testing problem conditional on $K=0$, i.e., the event that the
distinguishing record is never included in any sampled batch over the $M$ rounds.

Under zero-out adjacency, the neighboring datasets satisfy
\[
d=(d\cap d')\cup\{\bot\},
\qquad
d'=(d\cap d')\cup\{\xi_N\},
\]
where $\bot$ is the ghost record with identically zero gradient contribution and $\xi_N$ is the real differing record. Conditioned on $K=0$, the distinguishing location (the $N$th record position) is never sampled. Hence, on the event $K=0$, the gradients actually computed in all rounds depend only on the common subset $d\cap d'$.

As a result, under both $H_0\equiv d$ and $H_1\equiv d'$, the joint distribution of the observed
outputs $(x_1,\dots,x_M)$ depends only on $d\cap d'$ and is identical under the two hypotheses.
Moreover, this distribution coincides with the null-hypothesis likelihood in the shuffling
experiment, namely the Gaussian likelihood given in Eq.~\eqref{eq:likelihood-h0}. Then, conditioned on $K=0$, the two hypotheses are statistically indistinguishable,
and the adversary can do no better than random guessing.

Then, for any (possibly randomized) rejection rule $\phi$, the conditional errors satisfy
\begin{align}
\alpha_0(\phi)
=
\mathbb{E}\!\left[\phi \,\middle|\, H_0,\ K=0\right]
=
\mathbb{E}\!\left[\phi \,\middle|\, d\cap d'\right],
\\
\beta_0(\phi)
=
\mathbb{E}\!\left[1-\phi \,\middle|\, H_1,\ K=0\right]
=
1-\mathbb{E}\!\left[\phi \,\middle|\, d\cap d'\right].
\end{align}
In particular, $\beta_0(\phi)=1-\alpha_0(\phi)$.

Finally, by zero-out adjacency, the presence or absence of the $N$th record does not affect the
shuffling experiment: shuffling on the full dataset $d$ induces the same distribution of
$(x_1,\dots,x_M)$ as shuffling on $d\cap d'$.
Therefore,
\[
\mathbb{E}\!\left[\phi \,\middle|\, d\cap d'\right]
=
\mathbb{E}\!\left[\phi \,\middle|\, d\right]
=
\alpha_{\mathrm{shuf}}(\phi),
\]
and we may identify the $K=0$-branch errors as
\[
\alpha_0(\phi)=\alpha_{\mathrm{shuf}}(\phi),
\qquad
\beta_0(\phi)=1-\alpha_{\mathrm{shuf}}(\phi).
\]

\paragraph{Step 3: Analysis of the $K=1$ branch (sampled exactly once).}
Now condition on $K=1$.
Then the distinguishing record is included in exactly one of the $M$ rounds.
By symmetry of Poisson subsampling across rounds, the (random) round index in which it is included is uniform over
$\{1,\dots,M\}$.

Consequently, conditional on $K=1$, the induced observation model is exactly the same as in the one-epoch shuffling
analysis where the distinguishing record $\xi_N$ is included in exactly one of the $M$ rounds: under $H_0$ no round contains a shifted contribution, while under $H_1$ exactly one round contains a shift of magnitude $1/\sigma$, uniformly over the $M$ possible rounds. Equivalently, the conditional likelihoods of the observed outputs $(x_1,\dots,x_M)$ coincide with the shuffling likelihoods in Eqs.~\eqref{eq:likelihood-h0}--\eqref{eq:likelihood-h1}.

Therefore, for every (possibly randomized) rejection rule $\phi$, the conditional type-I/type-II errors in the
$K=1$ branch equal the corresponding shuffling errors:
\begin{align}
\alpha_1(\phi)
:=
\Pr[\phi=1 \mid H_0,\ K=1]
=
\alpha_{\mathrm{shuf}}(\phi),
\\
\beta_1(\phi)
:=
\Pr[\phi=0 \mid H_1,\ K=1]
=
\beta_{\mathrm{shuf}}(\phi).
\end{align}

\paragraph{Step 4: Mixture errors expressed via shuffling quantities.}
Fix any (possibly randomized) rejection rule $\phi$ applied to the observed outputs $(x_1,\dots,x_M)$.
From Steps~2 and~3 we have the exact identifications
\[
\alpha_0(\phi)=\alpha_{\mathrm{shuf}}(\phi),
\qquad
\beta_0(\phi)=1-\alpha_{\mathrm{shuf}}(\phi),
\]
and
\[
\alpha_1(\phi)=\alpha_{\mathrm{shuf}}(\phi),
\qquad
\beta_1(\phi)=\beta_{\mathrm{shuf}}(\phi).
\]
Therefore, the mixture type-I and type-II errors satisfy
\[
\alpha_{\mathrm{mix}}(\phi)
=
p\,\alpha_{\mathrm{shuf}}(\phi)+(1-p)\,\alpha_{\mathrm{shuf}}(\phi)
=
\alpha_{\mathrm{shuf}}(\phi),
\]
and
\[
\beta_{\mathrm{mix}}(\phi)
=
p\bigl(1-\alpha_{\mathrm{shuf}}(\phi)\bigr)+(1-p)\,\beta_{\mathrm{shuf}}(\phi).
\]

By definition of the trade-off function,
\begin{align}
f_{\mathrm{mix}}(\alpha)
&=
\inf_{\phi:\ \alpha_{\mathrm{mix}}(\phi)\le \alpha}\ \beta_{\mathrm{mix}}(\phi)\nonumber\\
&=
\inf_{\phi:\ \alpha_{\mathrm{shuf}}(\phi)\le \alpha}
\Bigl[
p\bigl(1-\alpha_{\mathrm{shuf}}(\phi)\bigr)
+
(1-p)\,\beta_{\mathrm{shuf}}(\phi)
\Bigr].
\label{eq:f-mix-start}
\end{align}
\paragraph{Step 5: Reduction to the shuffling trade-off}
When minimizing $\beta_{\mathrm{shuf}}$ subject to the constraint
$\alpha_{\mathrm{shuf}}(\phi)\le\alpha$, the infimum is attained (or approached)
by rejection rules satisfying $\alpha_{\mathrm{shuf}}(\phi)=\alpha$. Hence both components of the objective
\[
p\bigl(1-\alpha_{\mathrm{shuf}}(\phi)\bigr)+(1-p)\beta_{\mathrm{shuf}}(\phi)
\]
are minimized by the same boundary choice $\alpha_{\mathrm{shuf}}(\phi)=\alpha$.
Therefore,
\begin{align*}
f_{\mathrm{mix}}(\alpha)
&=
\inf_{\phi:\ \alpha_{\mathrm{shuf}}(\phi)= \alpha}
\Bigl[p\bigl(1-\alpha_{\mathrm{shuf}}(\phi)\bigr)+(1-p)\beta_{\mathrm{shuf}}(\phi)\Bigr]\\
&=
p(1-\alpha)
+
(1-p)\inf_{\phi:\ \alpha_{\mathrm{shuf}}(\phi)= \alpha}\beta_{\mathrm{shuf}}(\phi).
\end{align*}

By the definition of the shuffling trade-off curve,
\[
f_{\mathrm{shuf}}(\alpha)
=
\inf_{\phi:\ \alpha_{\mathrm{shuf}}(\phi)= \alpha}\beta_{\mathrm{shuf}}(\phi),
\]
and therefore
\[
f_{\mathrm{mix}}(\alpha)=p(1-\alpha)+(1-p)f_{\mathrm{shuf}}(\alpha).
\]

\paragraph{Step 6: Concluding for Poisson.}
Finally, using the assumed bound
$f_{\mathrm{shuf}}(\alpha)\le f'_{\mathrm{shuf}}(\alpha)$, we obtain
\[
f_{\mathrm{mix}}(\alpha)
\le
p(1-\alpha)+(1-p)\,f'_{\mathrm{shuf}}(\alpha).
\]
Combining this with \eqref{eq:pois-le-mix} proves \eqref{eq:poisson-mixture-bound}.
\end{proof}

\section{From Separation to $(\varepsilon,\delta)$-DP Guarantees}
\label{sec:app-kappa-to-eps-delta}

\begin{lemma}[$\mu$-GDP induces Gaussian separation]
\label{lem:kappa-of-mu}
If a mechanism is $\mu$-GDP~\cite{DBLP:journals/corr/abs-1905-02383}, i.e., its $f$-DP trade-off curve equals the Gaussian trade-off curve $G_\mu$,
\[
G_\mu(\alpha)=\Phi\!\bigl(\Phi^{-1}(1-\alpha)-\mu\bigr),
\]
then its separation satisfies
\begin{equation}
\operatorname{sep}(G_\mu)
=
\frac{1-2\alpha^*}{\sqrt2}
=
\frac{2\Phi(\mu/2)-1}{\sqrt2},
\label{eq:sep-of-mu}
\end{equation}
where $\alpha^*=\Phi(-\mu/2)$ is the unique fixed point of $G_\mu$.
In particular, the separation is monotone increasing in $\mu$.
\end{lemma}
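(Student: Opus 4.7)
The plan is to reduce the statement to Lemma~5.4 (Separation) by verifying its hypotheses for $G_\mu$ and then identifying the fixed point explicitly. First, I would observe that the Gaussian trade-off curve $G_\mu$ is well-known to be symmetric (i.e., $G_\mu$ is an involution on $[0,1]$) and convex; these properties are stated in the reference to Dong et al.~\cite{DBLP:journals/corr/abs-1905-02383} that the paper already cites. Hence Lemma~5.4 applies directly and yields
\[
\operatorname{sep}(G_\mu) \;=\; \frac{1-2\alpha^*}{\sqrt{2}},
\]
where $\alpha^*\in(0,1/2]$ is the unique fixed point satisfying $G_\mu(\alpha^*)=\alpha^*$. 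This reduces the problem to identifying $\alpha^*$.

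Next, I would solve the fixed-point equation $\Phi(\Phi^{-1}(1-\alpha^*)-\mu)=\alpha^*$ by guessing $\alpha^*=\Phi(-\mu/2)$ and verifying. With this choice, $1-\alpha^*=\Phi(\mu/2)$, so $\Phi^{-1}(1-\alpha^*)=\mu/2$ and the argument of the outer $\Phi$ becomes $\mu/2-\mu=-\mu/2$, giving $G_\mu(\alpha^*)=\Phi(-\mu/2)=\alpha^*$ as required. Uniqueness already follows from Lemma~5.4 (by symmetry and convexity), so no further argument is needed. Substituting $\alpha^*=\Phi(-\mu/2)$ into $(1-2\alpha^*)/\sqrt{2}$ and using $1-\Phi(-\mu/2)=\Phi(\mu/2)$ yields the second equality in~\eqref{eq:sep-of-mu}.

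Finally, for monotonicity, I would simply note that $\mu\mapsto \Phi(\mu/2)$ is strictly increasing on $\mathbb{R}_{\ge 0}$ since $\Phi$ is strictly increasing, so $\mu\mapsto (2\Phi(\mu/2)-1)/\sqrt{2}$ is strictly increasing as well.

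I do not anticipate any substantive obstacle: the entire argument is an immediate corollary of Lemma~5.4 once the fixed point of $G_\mu$ is identified. The only non-trivial conceptual ingredient, symmetry and convexity of $G_\mu$, is standard and already invoked elsewhere in the paper.
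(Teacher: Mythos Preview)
Your proposal is correct and mirrors the paper's proof almost exactly: both invoke Lemma~5.4 to reduce to the fixed point, identify $\alpha^*=\Phi(-\mu/2)$ (you by guess-and-verify, the paper by solving $\Phi^{-1}(\alpha^*)=\Phi^{-1}(1-\alpha^*)-\mu$ via the antisymmetry of $\Phi^{-1}$), and conclude monotonicity from that of $\Phi$. The only minor difference is that you explicitly check the symmetry/convexity hypotheses of Lemma~5.4, which the paper leaves implicit.
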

\begin{proof}[Proof of Lemma~\ref{lem:kappa-of-mu}]
The fixed point condition $G_\mu(\alpha^*)=\alpha^*$ is
\[
\Phi^{-1}(\alpha^*)=\Phi^{-1}(1-\alpha^*)-\mu.
\]
Using symmetry $\Phi^{-1}(1-\alpha)=-\Phi^{-1}(\alpha)$, we obtain
\[
\mu=-2\,\Phi^{-1}(\alpha^*),
\qquad\text{hence}\qquad
\alpha^*=\Phi(-\mu/2).
\]
By Lemma~\ref{lem:sep-fixed-point}, $\operatorname{sep}(G_\mu)=\frac{1-2\alpha^*}{\sqrt2}$, which yields
\[
\operatorname{sep}(G_\mu)=\frac{1-2\Phi(-\mu/2)}{\sqrt2}
=\frac{2\Phi(\mu/2)-1}{\sqrt2}.
\]
Monotonicity follows since $\Phi$ is increasing.
\end{proof}

We now show how a lower bound on the separation 
$\kappa_{\mathrm{shuf}}$ 
of the one-epoch shuffled DP-SGD mechanism yields a corresponding
lower bound on any $(\varepsilon,\delta)$-DP guarantee assigned to the
same mechanism.  
Our argument proceeds in three steps:
(i) recall the canonical $(\varepsilon,\delta)$-DP trade-off curve 
$f_{\varepsilon,\delta}$; (ii) compute its separation $\operatorname{sep}(f_{\varepsilon,\delta})$ in
closed form; and  (iii) compare this separation to $\kappa_{\mathrm{shuf}}$.
Under the standard convention $\delta = 1/N$, this comparison produces a
\emph{minimum admissible privacy parameter}~$\varepsilon$ or equivalently, implies that the noise multiplier $\sigma$ must be larger than our bound.

\paragraph{The $(\varepsilon,\delta)$-DP trade-off $f_{\varepsilon,\delta}$.}
Following~\cite{DBLP:journals/corr/abs-1905-02383} (Proposition 2.4-2.5), any
$(\varepsilon,\delta)$-DP guarantee corresponds to the
piecewise-linear trade-off function
\begin{equation}
\label{eq:f-eps-delta-def}
f_{\varepsilon,\delta}(\alpha)
=
\max\Bigl\{
0,\,
1-\delta - e^{\varepsilon}\alpha,\,
e^{-\varepsilon}(1-\delta-\alpha)
\Bigr\},
\qquad \alpha\in[0,1].
\end{equation}
This curve is the \emph{tightest} possible $f$-DP trade-off compatible
with $(\varepsilon,\delta)$-DP: if a mechanism is $(\varepsilon,\delta)$-DP,
then its true trade-off curve under that mechanism must satisfy
\begin{equation}
\label{eq:f-shuf-above-fepsdelta}
f_{\mathrm{shuf}}(\alpha)
\;\ge\;
f_{\varepsilon,\delta}(\alpha)
\qquad\forall\,\alpha\in[0,1],
\end{equation}
since every hypothesis test available to the adversary must obey the
constraints induced by the $(\varepsilon,\delta)$-DP definition.

\begin{lemma}[Separation of the $(\varepsilon,\delta)$-DP trade-off]
\label{lem:sep-fepsdelta}
Let $f_{\varepsilon,\delta}$ be defined as in
\eqref{eq:f-eps-delta-def}. Then
\begin{equation}
\label{eq:kappa-eps-delta-final}
\kappa_{\varepsilon,\delta}
:=
sep(f_{\varepsilon,\delta})
=
\frac{e^{\varepsilon}-1+2\delta}{(1+e^{\varepsilon})\sqrt{2}}.
\end{equation}
\begin{proof}
Recall that the $(\varepsilon,\delta)$-DP trade-off function
$f_{\varepsilon,\delta}$ is symmetric and piecewise linear, with two linear
branches meeting at a unique fixed point. Explicitly,
\[
f_{\varepsilon,\delta}(\alpha)
=
\max\bigl\{
1-\delta-e^{\varepsilon}\alpha,\;
e^{-\varepsilon}(1-\delta-\alpha)
\bigr\}.
\]
We solve for a fixed point $\hat a$ satisfying
$f_{\varepsilon,\delta}(\hat a)=\hat a$.
On the left linear branch,
$f_{\varepsilon,\delta}(\alpha) = 1-\delta - e^{\varepsilon}\alpha$.
The fixed-point equation becomes
\[
\alpha = 1-\delta - e^{\varepsilon}\alpha,
\qquad
\hat a
= \frac{1-\delta}{1+e^{\varepsilon}}.
\]
Using the fixed-point formula
$sep(f)=(1-2\hat a)/\sqrt{2}$, we obtain
\begin{align*}
1-2\hat a
&= 1 - \frac{2(1-\delta)}{1+e^{\varepsilon}}
= \frac{e^{\varepsilon}-1+2\delta}{1+e^{\varepsilon}},
\end{align*}
yielding~\eqref{eq:kappa-eps-delta-final}. Notice that the same value of $\hat a$ is obtained when taking the right linear branch of $f_{\varepsilon,\delta}$, so the resulting separation is unchanged. We remark that this consistency is expected, as every symmetric convex trade-off function has exactly one fixed point.

\end{proof}
\end{lemma}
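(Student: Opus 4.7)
The plan is to invoke Lemma~\ref{lem:sep-fixed-point} (the closed-form expression for separation of symmetric convex trade-off functions) and reduce the problem to solving a scalar fixed-point equation on one of the linear pieces of $f_{\varepsilon,\delta}$. Concretely, I would first observe that $f_{\varepsilon,\delta}$ defined in~\eqref{eq:f-eps-delta-def} is convex (as the pointwise maximum of affine functions and the constant $0$), decreasing in $\alpha$, and symmetric about the antidiagonal $\beta=1-\alpha$. Symmetry can be checked directly from the defining formula: the two nonzero affine pieces $\alpha\mapsto 1-\delta-e^{\varepsilon}\alpha$ and $\alpha\mapsto e^{-\varepsilon}(1-\delta-\alpha)$ are reflections of one another across the antidiagonal, so their pointwise maximum (truncated at $0$) is invariant under that reflection. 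Hence $f_{\varepsilon,\delta}$ satisfies all the hypotheses of Lemma~\ref{lem:sep-fixed-point}.

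Next, I would apply Lemma~\ref{lem:sep-fixed-point} to reduce the computation of $\operatorname{sep}(f_{\varepsilon,\delta})$ to finding the unique fixed point $\hat{a}\in(0,1/2]$ satisfying $f_{\varepsilon,\delta}(\hat{a})=\hat{a}$, after which $\operatorname{sep}(f_{\varepsilon,\delta})=(1-2\hat{a})/\sqrt{2}$. Since $\hat{a}\le 1/2$, it lies on the left linear branch of~\eqref{eq:f-eps-delta-def}, and the fixed-point equation becomes the linear equation $\hat{a} = 1-\delta - e^{\varepsilon}\hat{a}$, giving $\hat{a}=(1-\delta)/(1+e^{\varepsilon})$. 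Substituting this into $\operatorname{sep}(f_{\varepsilon,\delta})=(1-2\hat{a})/\sqrt{2}$ and simplifying the resulting fraction yields exactly the claimed expression
\[
\operatorname{sep}(f_{\varepsilon,\delta})
=\frac{e^{\varepsilon}-1+2\delta}{(1+e^{\varepsilon})\sqrt{2}}.
\]

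The only subtlety, and hence the step deserving the most care, is verifying that $\hat{a}=(1-\delta)/(1+e^{\varepsilon})$ really lies in the interval on which the chosen affine piece is active. One should check that at $\alpha=\hat{a}$ the value $1-\delta-e^{\varepsilon}\hat{a}$ indeed dominates both the other affine piece $e^{-\varepsilon}(1-\delta-\hat{a})$ and the flat piece $0$. This reduces to an elementary algebraic check; by symmetry, solving the fixed-point equation on the right branch produces the same $\hat{a}$, which is a convenient sanity check that the two branches agree at the diagonal and confirms that the fixed point is unique. Beyond this consistency verification, the argument is essentially a short algebraic manipulation with no technical obstacle.
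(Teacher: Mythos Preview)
Your proposal is correct and follows essentially the same approach as the paper: invoke Lemma~\ref{lem:sep-fixed-point}, solve the fixed-point equation on the left affine branch to obtain $\hat a=(1-\delta)/(1+e^{\varepsilon})$, and substitute into $(1-2\hat a)/\sqrt{2}$. Your treatment is slightly more careful in explicitly checking the symmetry/convexity hypotheses and the branch consistency, but the argument is the same.
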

\paragraph{Ordering of separations.}
Since the trade-off of the shuffled mechanism satisfies \eqref{eq:f-shuf-above-fepsdelta},
its pointwise separation from the random-guessing line $\beta = 1-\alpha$ must satisfy
\[
\operatorname{sep}_{f_{\mathrm{shuf}}}(\alpha)
\;\le\;
\operatorname{sep}_{f_{\varepsilon,\delta}}(\alpha)
\qquad\forall\,\alpha.
\]
Taking maxima over $\alpha$ and applying Lemma~\ref{lem:sep-fixed-point} gives
\begin{equation}
\label{eq:kappa-chain-main}
\kappa_{\mathrm{shuf}}
\;\le\;
\kappa_{\varepsilon,\delta}.
\end{equation}

On the other hand, Theorem~\ref{thm:shuffling} provides an explicit
lower bound on $\kappa_{\mathrm{shuf}}$: for one-epoch shuffling with  $\sigma < 1/\sqrt{2\ln M}$,
\[
\kappa_{\mathrm{shuf}}
\;\ge\;
\frac{1}{\sqrt{8}}\left(1 - \frac{1}{\sqrt{4\pi\ln M}}\right).
\]
Combining this with~\eqref{eq:kappa-chain-main}, any
$(\varepsilon,\delta)$-DP pair consistent with the shuffled mechanism
must satisfy
\[
\kappa_{\varepsilon,\delta}
\;\ge\;
\kappa_{\mathrm{shuf}}
\;\ge\;
\frac{1}{\sqrt{8}}
\!\left(1 - \frac{1}{\sqrt{4\pi\ln M}}\right).
\]

\begin{lemma}[Lower bound on $\varepsilon$ from separation]
\label{lem:eps-lower-from-kappa}
Let $\kappa_{\mathrm{shuf}} = \operatorname{sep}(f_{\mathrm{shuf}})$ denote 
the separation of the shuffled mechanism’s $f$-DP trade-off curve.  
If the same mechanism is also reported as $(\varepsilon,\delta)$-DP, then
$\varepsilon$ must satisfy
\begin{equation}
\label{eq:eps-lower-final}
\varepsilon
\;\ge\;
\ln\!\left(
\frac{1 + \kappa_{\mathrm{shuf}}\sqrt{2} - 2\delta}
     {1 - \kappa_{\mathrm{shuf}}\sqrt{2}}
\right).
\end{equation}
\end{lemma}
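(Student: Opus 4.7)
The plan is to combine the two ingredients already assembled in the preceding discussion of Appendix E: the ordering \eqref{eq:kappa-chain-main}, namely $\kappa_{\mathrm{shuf}} \le \kappa_{\varepsilon,\delta}$, and the closed-form expression \eqref{eq:kappa-eps-delta-final} from Lemma E.2. Substituting the latter into the former yields the single inequality
\[
\kappa_{\mathrm{shuf}} \;\le\; \frac{e^{\varepsilon}-1+2\delta}{(1+e^{\varepsilon})\sqrt{2}},
\]
which already contains all the information needed; the remainder of the proof is purely algebraic manipulation to solve for $\varepsilon$.

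Concretely, I would multiply through by $(1+e^{\varepsilon})\sqrt{2} > 0$ (preserving the inequality direction), group the terms containing $e^{\varepsilon}$ on one side, and factor to obtain
\[
e^{\varepsilon}\bigl(1 - \kappa_{\mathrm{shuf}}\sqrt{2}\bigr) \;\ge\; 1 + \kappa_{\mathrm{shuf}}\sqrt{2} - 2\delta.
\]
At this point the critical sign check arises: I must verify that $1 - \kappa_{\mathrm{shuf}}\sqrt{2} > 0$ before dividing, so that the inequality is preserved. This is immediate from the absolute separation bound \eqref{eq:sep-upperbound-absolute} in Lemma~\ref{lem:sep-fixed-point}, which gives $\kappa_{\mathrm{shuf}} < 1/\sqrt{2}$ for any symmetric and convex trade-off curve that is not identically the random-guessing line. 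Dividing by this strictly positive factor and applying $\ln(\cdot)$, which is monotone increasing, yields \eqref{eq:eps-lower-final}.

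The argument has essentially no hard step: the geometric content is already encoded in \eqref{eq:kappa-chain-main} and Lemma~\ref{lem:sep-fepsdelta}, and what remains is a one-line rearrangement. The only subtlety, and the place where one must be careful, is ensuring the denominator $1 - \kappa_{\mathrm{shuf}}\sqrt{2}$ is strictly positive so that the inequality does not flip; this is also what guarantees that the right-hand side of \eqref{eq:eps-lower-final} is well defined as a finite real number. As a sanity check, the bound behaves correctly in limiting cases: as $\kappa_{\mathrm{shuf}} \to 0$ with $\delta \to 0$, the right-hand side tends to $\ln 1 = 0$ (no privacy constraint), while as $\kappa_{\mathrm{shuf}} \to 1/\sqrt{2}$ the bound diverges, matching the fact that the separation upper limit corresponds to trivial $(\varepsilon,\delta)$-DP.
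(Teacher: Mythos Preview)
Your proposal is correct and follows essentially the same approach as the paper's proof: both combine the ordering $\kappa_{\mathrm{shuf}} \le \kappa_{\varepsilon,\delta}$ with the closed form from Lemma~\ref{lem:sep-fepsdelta}, rearrange to isolate $e^{\varepsilon}$, invoke $\kappa_{\mathrm{shuf}}<1/\sqrt{2}$ to justify dividing by $1-\kappa_{\mathrm{shuf}}\sqrt{2}$, and take logarithms. The only cosmetic difference is that the paper restates the reason for the separation ordering inside the proof, whereas you simply cite \eqref{eq:kappa-chain-main}.
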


\begin{proof}
For any $(\varepsilon,\delta)$-DP guarantee, the induced trade-off curve
$f_{\varepsilon,\delta}$ has separation
$\kappa_{\varepsilon,\delta}$ as given in Lemma~\ref{lem:sep-fepsdelta}:
\[
\kappa_{\varepsilon,\delta}
= \frac{e^{\varepsilon}-1+2\delta}{(1+e^{\varepsilon})\sqrt{2}}.
\]
Since $f_{\mathrm{shuf}}(\alpha) \ge f_{\varepsilon,\delta}(\alpha)$ for
all $\alpha$, the separations must satisfy
$\kappa_{\varepsilon,\delta} \ge \kappa_{\mathrm{shuf}}$.
Thus
\[
\frac{e^{\varepsilon}-1+2\delta}{(1+e^{\varepsilon})\sqrt{2}}
\;\ge\;
\kappa_{\mathrm{shuf}}.
\]
Rearranging,
\[
e^{\varepsilon}(1-\kappa_{\mathrm{shuf}}\sqrt{2})
\;\ge\;
1 + \kappa_{\mathrm{shuf}}\sqrt{2} - 2\delta.
\]
Since $\kappa_{\mathrm{shuf}} < 1/\sqrt{2}$ ensures the coefficient of
$e^{\varepsilon}$ is positive, division yields
\[
e^{\varepsilon}
\;\ge\;
\frac{1 + \kappa_{\mathrm{shuf}}\sqrt{2} - 2\delta}
     {1 - \kappa_{\mathrm{shuf}}\sqrt{2}},
\]
and taking logarithms proves~\eqref{eq:eps-lower-final}.
\end{proof}

\emph{Remark.} The formula in~\eqref{eq:eps-lower-final} is well-defined because Lemma~\ref{lem:sep-fixed-point} ensures that $\kappa_{\mathrm{shuf}} < 1/\sqrt{2}$.

\paragraph{Interpretation and numerical illustration.}
Lemma~\ref{lem:eps-lower-from-kappa} shows that the geometric separation
$\kappa_{\mathrm{shuf}}$ imposes a hard constraint on any
$(\varepsilon,\delta)$-DP description of the same mechanism.  In typical
DP-SGD practice one fixes $\delta = 1/N$, so for a given separation
$\kappa$ the smallest admissible $\varepsilon$ is
\[
\varepsilon_{\min}(\kappa;N)
=
\ln\!\left(
\frac{1 + \kappa\sqrt{2} - 2/N}
     {1 - \kappa\sqrt{2}}
\right).
\]
Theorem~\ref{thm:shuffling} provides a \emph{lower bound} on the
shuffled separation $\kappa_{\mathrm{shuf}}$. Since $\varepsilon_{\min}(\kappa;N)$ is monotonically increasing in
$\kappa$, plugging in this lower bound yields the \emph{smallest}
$\varepsilon$ compatible with our one-epoch worst-case analysis; any
larger true separation would force a strictly larger $\varepsilon$. We note that while a smaller $\varepsilon$ is possible, it requires a larger $\sigma$ (violating the assumption of our lower bound), effectively hurting utility.

For Poisson subsampling, Theorem~\ref{thm:poisson} gives $\kappa_{\mathrm{pois}} \ge (1-1/e)\,\kappa_{\mathrm{shuf}}$, so the same formula with $\kappa = \kappa_{\mathrm{pois}}$ yields the corresponding $\varepsilon_{\min}$ for the Poisson regime.

Table~\ref{tab:kappa-eps} reports these \emph{best-case} values for a
representative choice $\delta = 1/N$ with $N = 10^8$.  Even at this
optimistic one-epoch baseline, shuffled DP-SGD already forces
$\varepsilon \approx 1$, while the Poisson regime inherits a comparable,
though slightly weaker, constraint. In other words, the worst-case adversary possesses a substantial advantage in distinguishing datasets, which makes the formal privacy guarantees insufficient for practical deployment. To bypass this limitation, future work must either move away from the standard worst-case adversarial framework or introduce fundamental algorithmic modifications.

\begin{table}[t]
\centering
\caption{
\textbf{Minimum $\varepsilon$ implied by separation bounds (one epoch, $\delta=1/N$, $N=10^8$).}
For each number of rounds $M$, we plug the lower bound
$\kappa_{\mathrm{shuf}}$ from Theorem~\ref{thm:shuffling} and the
induced Poisson bound
$\kappa_{\mathrm{pois}} \ge (1-1/e)\,\kappa_{\mathrm{shuf}}$
into Lemma~\ref{lem:eps-lower-from-kappa} to obtain the smallest
$\varepsilon$ consistent with our worst-case $f$-DP analysis.
Larger true separations would only increase these $\varepsilon$ values.
}
\label{tab:kappa-eps}
\begin{tabular}{rcccc}
\toprule
$M$
 & $\kappa_{\mathrm{shuf}}(M)$
 & $\varepsilon_{\min}^{\mathrm{shuf}}$
 & $\kappa_{\mathrm{pois}}(M)$
 & $\varepsilon_{\min}^{\mathrm{pois}}$
\\
\midrule
$1{,}000$ & $0.316$ & $0.96$ & $0.200$ & $0.58$ \\
$3{,}000$ & $0.318$ & $0.97$ & $0.201$ & $0.59$ \\
$10{,}000$ & $0.321$ & $0.98$ & $0.203$ & $0.59$ \\
$30{,}000$ & $0.322$ & $0.98$ & $0.204$ & $0.59$ \\
$100{,}000$ & $0.324$ & $0.99$ & $0.205$ & $0.60$ \\
$300{,}000$ & $0.325$ & $1.00$ & $0.206$ & $0.60$ \\
$1{,}000{,}000$ & $0.327$ & $1.00$ & $0.207$ & $0.60$ \\
$3{,}000{,}000$ & $0.328$ & $1.00$ & $0.207$ & $0.60$ \\
$5{,}000{,}000$ & $0.328$ & $1.01$ & $0.207$ & $0.60$ \\
\bottomrule
\end{tabular}
\end{table}

\section{Asymptotic Separation Bound for Poisson Subsampling}
\label{sec:app-asymptotic}
Before turning to our main results, we first examine what the asymptotic $\mu$-GDP theory~\cite{DBLP:journals/corr/abs-1905-02383} predicts for the separation metric under Poisson subsampling. The purpose of this section is not to derive new guarantees, but rather to relate existing $\mu$-GDP results to our geometric notion of separation, to analyze their implications in the proportional-growth regime, and to motivate the form of our subsequent bounds.

Concretely, we translate the resulting $\mu$-GDP characterization into an explicit prediction for separation. As a simple illustration, we will later specialize this analysis to a fixed choice of $\sigma^{-1}$ corresponding to our theoretical lower bound, in order to illustrate what the asymptotic theory suggests at that noise level.

\begin{lemma}[Asymptotic $\mu$-GDP separation for Poisson Subsampling]
\label{lem:asymptotic-muGDP-separation}
Fix $\sigma^{-1}>0$ and let $(E_M)_{M\ge 1}$ be a sequence of epoch budgets,
where $E_M$ denotes the total number of \emph{epochs} performed when the
effective dataset size parameter is $M$.
Assume that this sequence satisfies the scaling
\begin{equation}
\sqrt{\frac{E_M}{M}} \;\longrightarrow\; c,
\label{eq:epoch-scaling}
\end{equation}
for some constant $c>0$ as $M\to\infty$. 
Let $\mu(M,E_M,\sigma^{-1})$ denote the asymptotic $\mu$-GDP parameter given
by Corollary~5.4 of~\cite{DBLP:journals/corr/abs-1905-02383} under Poisson
subsampling with noise scale $\sigma$, and define the corresponding
$\mu$-GDP--predicted separation
\[
\kappa_{\mu\text{-GDP}}(M,E_M,\sigma^{-1})
:=
\operatorname{sep}\!\left(G_{\mu(M,E_M,\sigma^{-1})}\right).
\]
Then, along any sequence $(E_M)_{M\ge 1}$ satisfying~\eqref{eq:epoch-scaling},
the $\mu$-GDP--predicted separation converges as
\begin{equation}
\kappa_{\mu\text{-GDP}}(M,E_M,\sigma^{-1})
\;\longrightarrow\;
\operatorname{sep}\!\left(G_{\mu_\infty}\right),
\label{eq:kappa-muGDP-limit}
\end{equation}
where
\begin{equation}
\mu_\infty
=
\sqrt{2}\,c\,
\sqrt{
e^{\sigma^{-2}}\Phi(1.5\sigma^{-1})
+3\Phi(-0.5\sigma^{-1})
-2
}.
\label{eq:mu-limit}
\end{equation}
Moreover, for each $M$,
\begin{equation}
\kappa_{\mu\text{-GDP}}(M,E_M,\sigma^{-1})
=
\frac{2\Phi\!\left(\mu(M,E_M,\sigma^{-1})/2\right)-1}{\sqrt{2}}.
\label{eq:kappa-muGDP-explicit}
\end{equation}

Finally, for any $\mu>0$, the separation of a $\mu$-GDP mechanism satisfies
the tail bound
\begin{equation}
\operatorname{sep}(G_\mu)
\;\ge\;
\frac{1}{\sqrt{2}}
-
\frac{2}{\sqrt{\pi}}\cdot \frac{\exp(-\mu^2/8)}{\mu}.
\label{eq:kappa-tail-one-shot}
\end{equation}
In particular, substituting $\mu=\mu(M,E_M,\sigma^{-1})$ yields an explicit
lower bound on the limiting $\mu$-GDP separation.
\end{lemma}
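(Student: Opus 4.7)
The plan is to split the statement into three essentially independent claims and handle each by a short invocation of an existing result followed by elementary manipulation: the explicit formula (\ref{eq:kappa-muGDP-explicit}) comes straight from Lemma~\ref{lem:kappa-of-mu}; the asymptotic limit (\ref{eq:kappa-muGDP-limit})--(\ref{eq:mu-limit}) reduces to the $\mu$-GDP central limit theorem of Dong et al.; and the tail bound (\ref{eq:kappa-tail-one-shot}) follows from a standard Mills-ratio estimate applied to the explicit formula.

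For the explicit formula, I would apply Lemma~\ref{lem:kappa-of-mu} directly. By hypothesis the mechanism's trade-off curve is $G_{\mu(M,E_M,\sigma^{-1})}$, and Lemma~\ref{lem:kappa-of-mu} then gives $\operatorname{sep}(G_\mu) = (2\Phi(\mu/2) - 1)/\sqrt{2}$, which upon substitution of $\mu = \mu(M,E_M,\sigma^{-1})$ is exactly (\ref{eq:kappa-muGDP-explicit}).

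For the asymptotic limit, I would invoke the $\mu$-GDP central limit theorem for Poisson-subsampled Gaussian composition (Corollary~5.4 of~\cite{DBLP:journals/corr/abs-1905-02383}). Under our parameterization, each epoch consists of $M$ rounds with per-step Poisson sampling probability $p = 1/M$, so $E_M$ epochs correspond to $T = M E_M$ composition steps and $p\sqrt{T} = \sqrt{E_M/M} \to c$ by the scaling assumption (\ref{eq:epoch-scaling}). Plugging $p\sqrt{T} \to c$ into the Dong--Roth--Su formula yields the expression for $\mu_\infty$ in (\ref{eq:mu-limit}). Since the map $\mu \mapsto (2\Phi(\mu/2) - 1)/\sqrt{2}$ is continuous, convergence $\mu(M,E_M,\sigma^{-1}) \to \mu_\infty$ transfers immediately to convergence of the corresponding separations via the identity proved in the previous step, giving (\ref{eq:kappa-muGDP-limit}).

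For the tail bound, I would rewrite (\ref{eq:kappa-muGDP-explicit}) using $2\Phi(\mu/2) - 1 = 1 - 2\Phi(-\mu/2)$ to obtain $\operatorname{sep}(G_\mu) = 1/\sqrt{2} - \sqrt{2}\,\Phi(-\mu/2)$, and then apply the standard Mills-ratio estimate $\Phi(-t) \le e^{-t^2/2}/(t\sqrt{2\pi})$ at $t = \mu/2$; this yields $\sqrt{2}\,\Phi(-\mu/2) \le (2/\sqrt{\pi})\,e^{-\mu^2/8}/\mu$, and rearranging gives exactly (\ref{eq:kappa-tail-one-shot}). The main obstacle I anticipate is the second step: I need to verify that the constant inside the square root of (\ref{eq:mu-limit}) matches Dong et al.'s CLT constant exactly under the mapping $p=1/M$, $T = M E_M$, and that the regularity hypotheses of Corollary~5.4 (bounded second moment of the privacy loss and the Lindeberg-type condition governing the CLT convergence) hold along the chosen sequence $(E_M)$; any mismatch in parameterization would propagate into the limit. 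The other two steps are purely algebraic once Lemma~\ref{lem:kappa-of-mu} is available.
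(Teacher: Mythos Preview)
Your proposal is correct and matches the paper's proof essentially line for line: the same three-part decomposition (explicit formula via Lemma~\ref{lem:kappa-of-mu}, asymptotic limit via Corollary~5.4 of Dong et al.\ under the reparameterization $p\sqrt{T}=\sqrt{E_M/M}$, and tail bound via the Mills-ratio/Gaussian-tail inequality). The only cosmetic difference is that the paper writes the tail step as $1-\Phi(\mu/2)$ where you write $\Phi(-\mu/2)$, and the paper does not dwell on verifying the CLT regularity hypotheses you flag as a potential obstacle---it simply cites Corollary~5.4 at face value.
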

\begin{proof}

Corollary~5.4 of~\cite{DBLP:journals/corr/abs-1905-02383} establishes the privacy guarantees for \emph{NoisySGD}, which denotes the standard stochastic gradient descent algorithm with gradient clipping and Gaussian noise injection. In the context of our work, NoisySGD corresponds to the DP-SGD mechanism (Algorithm~\ref{alg:batchsgd}) when instantiated with Poisson subsampling.

Specifically, the result states that if $m\sqrt{T}/N$ converges to a finite constant $c$, where $m$ is the expected batch size and $T$ is the total number of gradient steps, then the mechanism is asymptotically $\mu$-GDP with parameter
\begin{equation}
\label{eq:mugdp-full}
\mu
=
\sqrt{2}\,\frac{m\sqrt{T}}{N}\,
\sqrt{
e^{\sigma^{-2}}\Phi(1.5\sigma^{-1})
+3\Phi(-0.5\sigma^{-1})
-2
}.
\end{equation}
Using the epoch reparameterization
$E_M := Tm/N$ and $M := N/m$, we obtain
\[
\frac{m\sqrt{T}}{N}=\sqrt{\frac{E_M}{M}}.
\]
Hence, under the scaling assumption~\eqref{eq:epoch-scaling},
$\mu(M,E_M,\sigma^{-1})$ converges to the limit $\mu_\infty$ given
in~\eqref{eq:mu-limit}.

By Lemma~\ref{lem:kappa-of-mu}, the separation induced by $\mu$-GDP is obtained
by evaluating the Gaussian trade-off curve at its unique fixed point, yielding
$\operatorname{sep}(G_\mu)=\frac{2\Phi(\mu/2)-1}{\sqrt{2}}$.
Applying this expression to $\mu(M,E_M,\sigma^{-1})$ and assuming
$M\to\infty$ gives~\eqref{eq:kappa-muGDP-limit}.
For the tail bound, Lemma~\ref{lem:kappa-of-mu} implies
\[
\frac{1}{\sqrt2}-\operatorname{sep}(G_\mu)=\sqrt2\,(1-\Phi(\mu/2)).
\]
Applying the Gaussian tail inequality $1-\Phi(t)\le \varphi(t)/t$ for $t>0$ at
$t=\mu/2$ yields
\[
\frac{1}{\sqrt2}-\operatorname{sep}(G_\mu)
\le
\sqrt2\cdot \frac{\varphi(\mu/2)}{\mu/2}.
\]
Since $\varphi(\mu/2)=\frac{1}{\sqrt{2\pi}}\exp(-\mu^2/8)$, rearranging gives
\eqref{eq:kappa-tail-one-shot}.
\end{proof}

\paragraph{Interpretation.}
Lemma~\ref{lem:asymptotic-muGDP-separation} characterizes the behavior of the
$\mu$-GDP approximation under a specific joint asymptotic regime in which the
epoch budget and the number of rounds grow proportionally.
Fix $\sigma^{-1}>0$, and let $(E_M)_{M\ge 1}$ be any sequence of epoch budgets such
that
\[
\sqrt{\frac{E_M}{M}} \;\longrightarrow\; c
\qquad\text{as } M\to\infty,
\]
for some constant $c>0$.
Define the $\mu$-GDP–predicted separation
\[
\kappa_{\mu\text{-GDP}}(M,E_M,\sigma^{-1})
:= \operatorname{sep}\!\left(G_{\mu(M,E_M,\sigma^{-1})}\right),
\]
and write
\[
\varepsilon(M,E_M,\sigma^{-1})
:= \bigl|\kappa_{\mu\text{-GDP}}(M,E_M,\sigma^{-1})
- \operatorname{sep}\!\left(G_{\mu_\infty}\right)
\bigr|
\]
for the approximation error relative to the limiting GDP separation.
Then Lemma~\ref{lem:asymptotic-muGDP-separation} implies that
\[
\varepsilon(M,E_M,\sigma^{-1}) \;\longrightarrow\; 0
\qquad\text{as } M\to\infty,
\]
whenever $\sqrt{E_M/M}\to c$.

In particular, fix a reference pair $(M^\star,E^\star)$ and define
$c := \sqrt{E^\star/M^\star}$.
For any sequence $(E_M)_{M\ge 1}$ satisfying $E_{M^\star}=E^\star$ and
$\sqrt{E_M/M}\to c$, the quantity $\sqrt{E_M/M}$ necessarily oscillates around
$c$, with oscillations vanishing as $M$ grows.
Up to this point, the statement is fully rigorous and follows directly from the
asymptotic characterization of $\mu(M,E,\sigma^{-1})$.

At a heuristic level, this picture suggests the following intuition.
If $M^\star$ is sufficiently large, then the regime $(M^\star,E^\star)$ already
lies close to the asymptotic scaling curve $\sqrt{E_M/M}=c$.
Consequently, one expects that the $\mu$-GDP prediction
$\kappa_{\mu\text{-GDP}}(M^\star,E^\star,\sigma^{-1})$ is already close to its
asymptotic value, and hence that $\varepsilon(M^\star,E^\star,\sigma^{-1})$ is small. This intuition, however, does not by itself constitute a formal guarantee. The $f$-DP framework establishes asymptotic convergence to a Gaussian trade-off curve, but does not provide convergence rate results. As a result, closeness to the asymptotic regime alone cannot be translated into a formal privacy guarantee.

\subsection{Illustrative Asymptotic Instantiation}

\begin{figure}[ht]
    \centering
    \includegraphics[width=\linewidth]{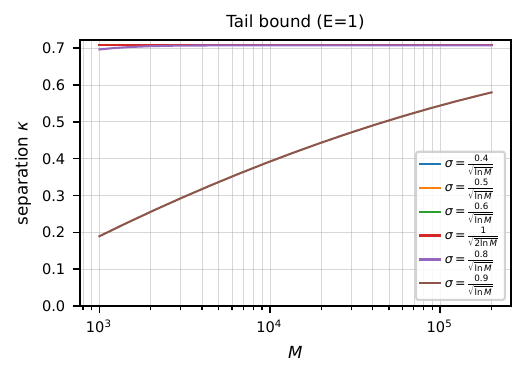}
    \caption{
    Explicit lower bound on the separation $\operatorname{sep}\!\left(G_{\mu(M,1,\sigma^{-1})}\right)$ as a function of the number of rounds per epoch $M$ under noise schedules of the form $\sigma = s/\sqrt{\ln M}$, with $E=1$.
    }
    \label{fig:kappa-tailLB-vs-M}
\end{figure}
\begin{figure}[ht]
    \centering
    \includegraphics[width=\linewidth]{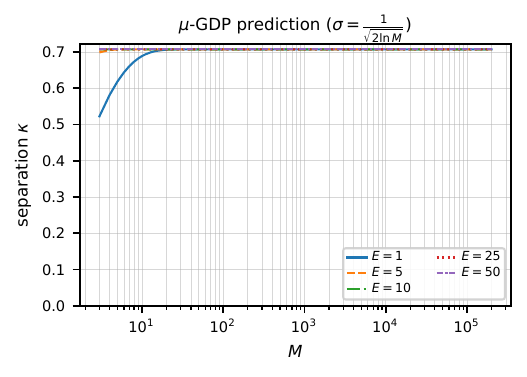}
    \caption{
    GDP-predicted separation $\kappa_{\mu\text{-GDP}}(M,E,\sigma^{-1})$ as a function of the number of rounds per epoch $M$
    under the noise schedule $\sigma = 1/\sqrt{2\ln M}$, for several epoch counts $E$.
    }
    \label{fig:kappa-gdp-vs-M}
\end{figure}

We now carry out a simple asymptotic instantiation to illustrate what
Lemma~\ref{lem:asymptotic-muGDP-separation} predicts when the noise level is set
according to our theoretical lower bound that will be derived later in Theorem~\ref{thm:shuffling}.
Throughout this subsection we fix the one-epoch setting $E=1$.
Let
\[
\sigma \;=\; \frac{s}{\sqrt{\ln M}},
\qquad\text{equivalently}\qquad
\sigma^{-1} \;=\; \frac{\sqrt{\ln M}}{s},
\]
for a fixed constant $s>0$.

Substituting this expression into the asymptotic $\mu$-GDP formula of
Corollary~5.4 in~\cite{DBLP:journals/corr/abs-1905-02383}, we obtain
\begin{align}
\mu(M,1,\sigma^{-1})
&=
\sqrt{2}\,\sqrt{\frac{1}{M}}\,
\sqrt{
\exp(\sigma^{-2})\,\Phi(1.5\sigma^{-1})
+3\Phi(-0.5\sigma^{-1})
-2
}
\notag\\
&=
\sqrt{2}\,\frac{1}{\sqrt{M}}\,
\sqrt{
\exp\!\Bigl(\tfrac{\ln M}{s^2}\Bigr)\,
\Phi\!\Bigl(\tfrac{1.5\sqrt{\ln M}}{s}\Bigr)
+3\Phi\!\Bigl(-\tfrac{0.5\sqrt{\ln M}}{s}\Bigr)
-2
}.
\label{eq:mu-plugged-E1}
\end{align}

To obtain a simple and explicit lower bound, we bound the terms inside the
square root in a conservative manner.
First, since $\Phi(x)\ge \tfrac12$ for all $x\ge 0$, we have
\[
\Phi\!\Bigl(\tfrac{1.5\sqrt{\ln M}}{s}\Bigr)
\;\ge\;
\frac12.
\]
Second, since $\Phi(x)\ge 0$ for all $x\in\mathbb{R}$, we may drop the negative-tail
term entirely:
\[
3\Phi\!\Bigl(-\tfrac{0.5\sqrt{\ln M}}{s}\Bigr) \;\ge\; 0.
\]
Applying these bounds to~\eqref{eq:mu-plugged-E1} yields
\begin{align}
\mu(M,1,\sigma^{-1})
&\;\ge\;
\sqrt{2}\,\frac{1}{\sqrt{M}}\,
\sqrt{
\frac12\,\exp\!\Bigl(\tfrac{\ln M}{s^2}\Bigr)
-2
}
\notag\\
&=
\sqrt{2}\,\frac{1}{\sqrt{M}}\,
\sqrt{
\frac12\,M^{1/s^2}-2
}
\;=\;
\frac{1}{\sqrt{M}}\,
\sqrt{M^{1/s^2}-4},
\label{eq:mu-lower-E1}
\end{align}
where the last equality holds whenever $M^{1/s^2}\ge 4$.

Finally, we simplify the expression further by observing that for sufficiently
large $M$,
\[
M^{1/s^2}-4 \;\ge\; \tfrac12\,M^{1/s^2}.
\]
Applying this bound to~\eqref{eq:mu-lower-E1} yields
\[
\mu(M,1,\sigma^{-1})
\;\ge\;
\frac{1}{\sqrt{M}}\,
\sqrt{\tfrac12\,M^{1/s^2}}
\;=\;
\frac{1}{\sqrt{2}}\,
M^{\frac{1}{2s^2}-\frac12}.
\]

Substituting this expression into the tail bound
\eqref{eq:kappa-tail-one-shot} gives the fully explicit separation bound
\begin{equation}
\operatorname{sep}\!\left(G_{\mu(M,1,\sigma^{-1})}\right)
\;\ge\;
\frac{1}{\sqrt{2}}
-
\frac{2}{\sqrt{\pi}}\,
\frac{
\exp\!\Bigl(
-\frac{1}{16}\,M^{\frac{1}{s^2}-1}
\Bigr)
}{
\frac{1}{\sqrt{2}}\,M^{\frac{1}{2s^2}-\frac12}
}.
\label{eq:sep-explicit-exercise-clean}
\end{equation}

This instantiation illustrates that, when the noise multiplier is set at the level
$\sigma = s/\sqrt{\ln M}$ (and $E=1$), the asymptotic $\mu$-GDP theory predicts a
separation that increases with $M$ and approaches its maximal value
$1/\sqrt{2}$.
Figures~\ref{fig:kappa-tailLB-vs-M} and~\ref{fig:kappa-gdp-vs-M} visualize the implications
of Lemma~\ref{lem:asymptotic-muGDP-separation} and its asymptotic instantiation for the
separation metric as the number of rounds per epoch $M$ increases.
Figure~\ref{fig:kappa-tailLB-vs-M} plots the fully explicit lower bound obtained by
combining the Gaussian tail bound in Eq.~\eqref{eq:kappa-tail-one-shot} with the explicit
lower bound on the $\mu$-GDP parameter derived in
Eqs.~\eqref{eq:mu-lower-E1}–\eqref{eq:sep-explicit-exercise-clean} under the noise schedule
$\sigma = s/\sqrt{\ln M}$ (with $E=1$).
Figure~\ref{fig:kappa-gdp-vs-M} shows the corresponding $\mu$-GDP–predicted separation
$\kappa_{\mu\text{-GDP}}(M,E,\sigma^{-1})$ defined in
Eq.~\eqref{eq:kappa-muGDP-explicit}, where $\mu(M,E,\sigma^{-1})$ is given by the asymptotic
characterization in Eq.~\eqref{eq:mugdp-full} with epoch scaling~\eqref{eq:epoch-scaling}.
Both figures demonstrate that, under noise schedules of order
$\sigma=\Theta(1/\sqrt{\ln M})$, the separation increases rapidly with $M$ and approaches
its maximal value $1/\sqrt{2}$.

We emphasize that, unlike the asymptotic $\mu$-GDP approximation discussed above,
our main separation bound is non-asymptotic. In particular, it holds for every finite $M$ in the single-epoch setting, without requiring any limiting regime or asymptotic approximation.

\end{document}